\newcommand{\ie}{{i.e.,\ }}
\newcommand{\eg}{{e.g.,\ }}
\newcommand{\dataset}[1]{\mbox{#1}}
\pgfplotsset{
  compat=newest,
  every axis plot/.append style={
    line width=1,
  },
  xtick style={color=black},
  ytick style={color=black},
  tick label style={font=\footnotesize},
  tick align=outside,
  tick pos=left,
  major grid style={opacity=.7},
  minor grid style={opacity=.4},
  label style={font=\small},
  legend style={
    font=\small,
    fill opacity=0.8,
    draw opacity=1,
    text opacity=1,
    draw=white!50!black
  },
  legend cell align=left,
}
\def\R{{\mathbb{R}}}
\def\X{{\mathcal{X}}}
\def\Y{{\mathcal{Y}}}
\def\regret{{\mathrm{Reg}}}
\DeclareMathOperator*{\argmax}{arg\,max}
\DeclareMathOperator*{\softmax}{softmax}
\newcommand{\ens}[1]{\left\{#1 \right\}}
\newcommand{\enscond}[2]{\ens{#1 \,:\, #2}}
\def\SymDiff{{\bigtriangleup}}
\newcommand{\powerset}[1]{\mathcal{P}(#1)}
\newcommand{\preimage}[2][\cdot]{#2^{-1} [ #1 ]}
\newcommand{\interior}[1]{\mathrm{Int} \, #1}
\newcommand{\support}[1]{\mathrm{Supp} ( #1 )}
\def\Pr{{\mathbb{P}}}
\newcommand{\pr}[2][]{\Pr_{#1} \left[ #2 \right]}
\newcommand{\prcond}[2]{\pr{ #1 \mid #2 }}
\newcommand{\E}[2][]{\mathbb{E}_{#1} \left[ #2 \right]}
\newcommand{\Econd}[3][]{\E[#1]{ #2 \mid #3 }}
\newcommand{\ProbSimplex}[1]{\bigtriangleup_{#1}}
\newcommand{\eqdef}{\vcentcolon=}
\DeclareSymbolFont{bbold}{U}{bbold}{m}{n}
\DeclareSymbolFontAlphabet{\mathbbold}{bbold}
\newcommand{\ind}[1]{\mathbbold{1}_{ #1 }}
\DeclareMathOperator{\KLdiv}{D_\mathrm{KL}}
\def\cD{{\mathcal{D}}}
\def\cK{{\mathcal{K}}}
\renewcommand*{\@opargbegintheorem}[3]{\trivlist
      \item[\hskip \labelsep{\bfseries #1\ #2}] \textbf{(#3)}\ \itshape}
\newcommand{\Crefparenthesis}[1]{%
  \Crefformat{equation}{Equation~##2##1##3}%
  \Crefmultiformat{equation}{Equations~##2##1##3}{ and~##2##1##3}{, ##2##1##3}{ and~##2##1##3}%
  \Cref{#1}%
  \Crefformat{equation}{Equation~(##2##1##3)}%
  \Crefmultiformat{equation}{Equations~(##2##1##3)}{ and~(##2##1##3)}{, (##2##1##3)}{ and~(##2##1##3)}%
}
\def\Risk{{\mathcal{R}}}
\def\Error{{\mathcal{E}}}
\def\Info{{\mathcal{I}}}
\def\RegretFixed{{\regret_{\mathrm{top}}}}
\def\RegretAdapt{{\regret_{\mathrm{avg}}}}
\def\TopKSet{{S}}
\def\AverageKSet{{\mathscr{S}}}
\LetLtxMacro{\oldtodo}{\todo}
\renewcommand{\todo}[2][]{\tikzexternaldisable\oldtodo[#1]{#2}\tikzexternalenable}
\title{Classification Under Ambiguity: \\ When Is Average-$K$ Better Than Top-$K$?} 
\author{%
  \name Titouan Lorieul \email titouan.lorieul@inria.fr \\
  \name Alexis Joly \email alexis.joly@inria.fr \\
  \addr Zenith \\
  Inria, LIRMM, Université de Montpellier \\
  161 rue Ada, 34095 Montpellier, France
  \AND
  \name Dennis E. Shasha \email shasha@cims.nyu.edu \\
  \addr Courant Institute \\
  New York University \\
  New York, USA
}
\begin{document}

\maketitle

\begin{abstract}%
  When many labels are possible, choosing a single one can lead to low precision.
  A common alternative, referred to as \emph{top-$K$ classification}, is to choose some number $K$ (commonly around 5) and to return the $K$ labels with the highest scores.
  Unfortunately, for unambiguous cases, $K>1$ is too many and, for very ambiguous cases, $K \leq 5$ (for example) can be too small.
  An alternative sensible strategy is to use an adaptive approach in which the number of labels returned varies as a function of the computed ambiguity, but must average to some particular $K$ over all the samples.
  We denote this alternative \emph{average-$K$ classification}.
  This paper formally characterizes the ambiguity profile when average-$K$ classification can achieve a lower error rate than a fixed top-$K$ classification.
  Moreover, it provides natural estimation procedures for both the fixed-size and the adaptive classifier and proves their consistency.
  Finally, it reports experiments on real-world image data sets revealing the benefit of average-$K$ classification over top-$K$ in practice.
  Overall, when the ambiguity is known precisely, average-$K$ is never worse than top-$K$, and, in our experiments, when it is estimated, this also holds.
\end{abstract}

\begin{keywords}
  classification with uncertainty, set-valued classification, top-k classification, confidence sets, consistency, strongly proper losses
\end{keywords}

\section{Introduction}

Consider the problem of assigning a label to an object.
In applications where the label noise is low, predicting a single label works well.
However, in a wide range of applications, at least some data items may be ambiguous due to noise or occlusion. In such cases, even expert human annotators may disagree on what the true label should be
and may prefer to give a list of possible answers, while filtering out the classes that certainly are wrong.

\begin{figure}
  \centering
  
  \begin{subfigure}{.9\textwidth}
    \centering
    \includegraphics[width=\textwidth]{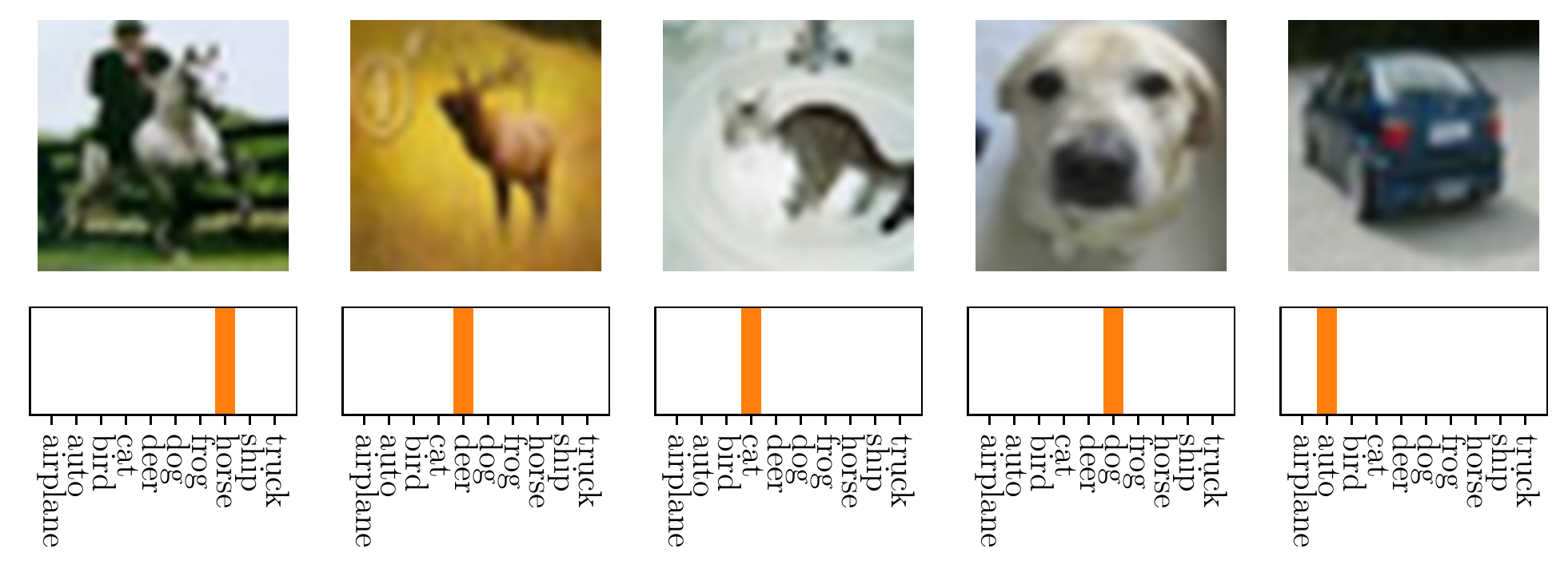}
    \caption{
      Non-ambiguous examples.
    }
  \end{subfigure}
  \\
  \vspace{.25cm}
  \begin{subfigure}{.9\textwidth}
    \centering
    \includegraphics[width=\textwidth]{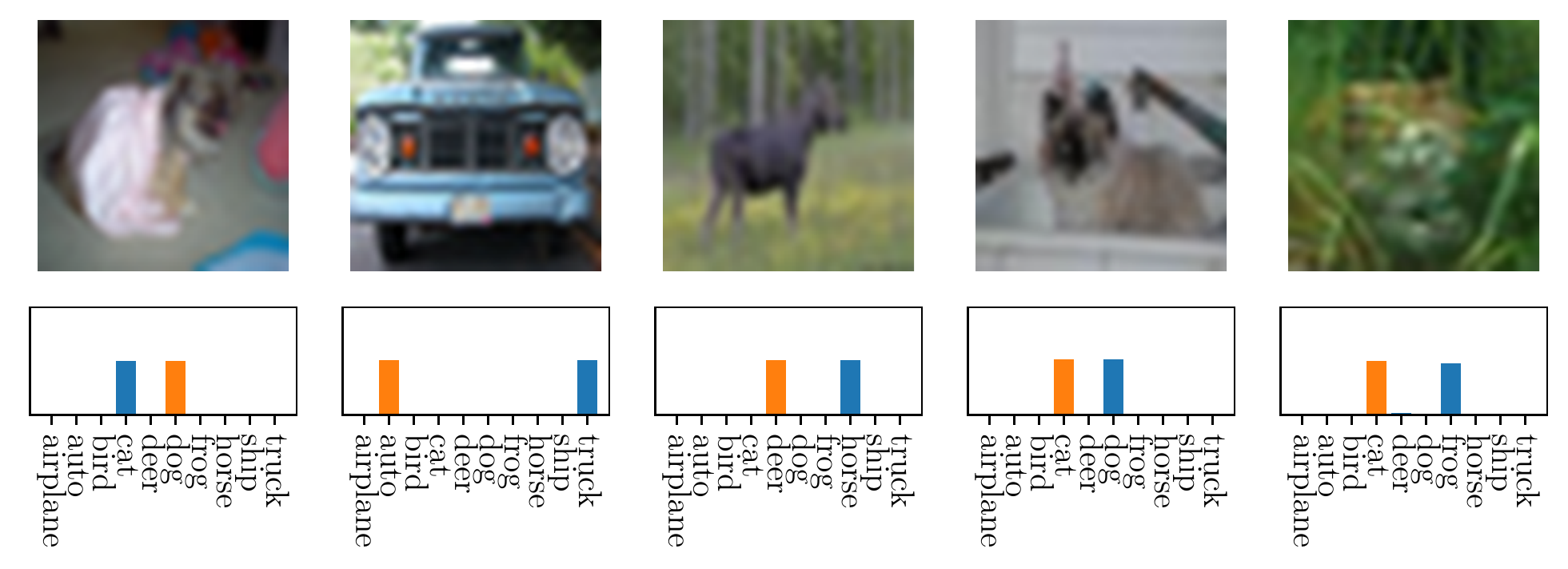}
    \caption{
      Ambiguity between two classes.
    }
  \end{subfigure}
  \\
  \vspace{.25cm}
  \begin{subfigure}{.9\textwidth}
    \centering
    \includegraphics[width=\textwidth]{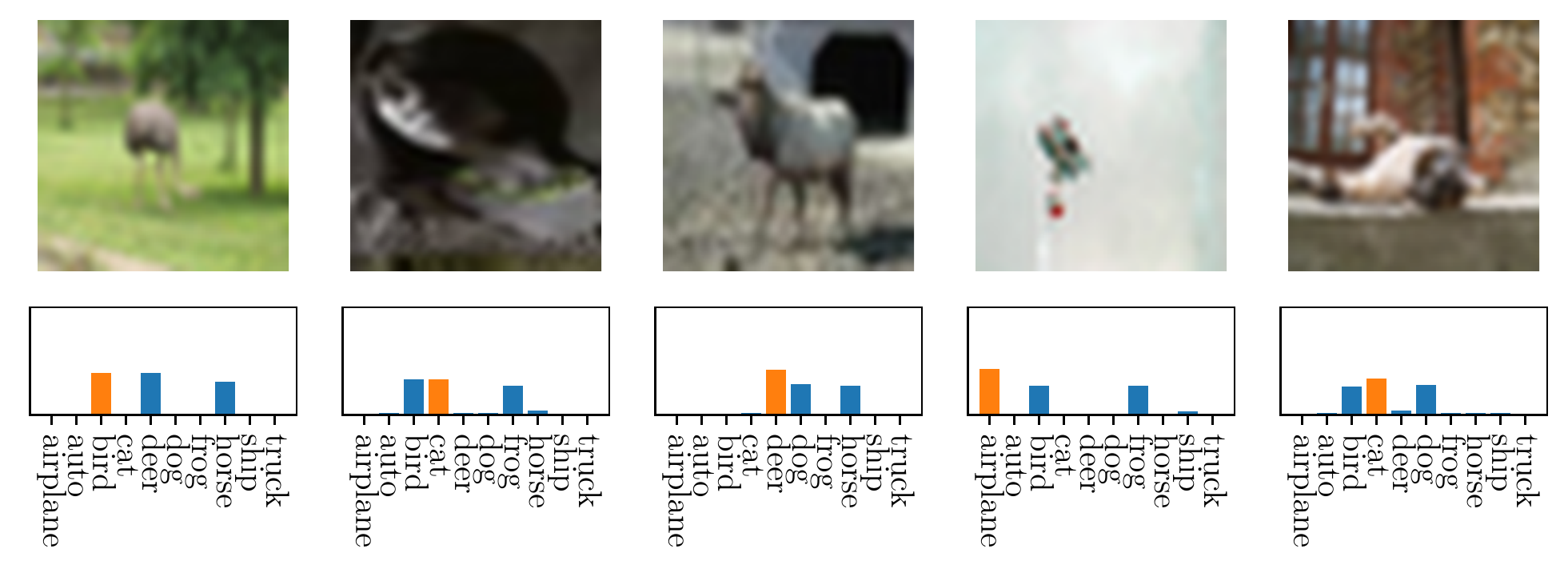}
    \caption{
      Ambiguity among three classes.
    }
  \end{subfigure}
  
  \caption{
    Various degrees of ambiguity according to human annotators on \dataset{CIFAR-10} collected by \citet{Peterson2019}.
    The orange class corresponds to the most probable class according to these annotators.
  }
  \label{fig:cifar10h-human-ambiguity-examples}
\end{figure}

Researchers have found that even on simple tasks, such as \dataset{CIFAR-10}, humans sometimes disagree \citep{Peterson2019}.
This is illustrated in \Cref{fig:cifar10h-human-ambiguity-examples} which shows images with different levels of disagreement according to annotators.
The first row displays unambiguous images, whereas the second and third rows contain images with, respectively, two and three possible classes.
In all cases, a single object is present in the image, however, for some images, the low image resolution does not allow a person to determine precisely what this object is.\footnote{\dataset{CIFAR-10} images have a size of 28x28.}
However,  filtering (ruling out some classes) is still  useful  when there is ambiguity.

\begin{figure}[t]
  \centering
  \input{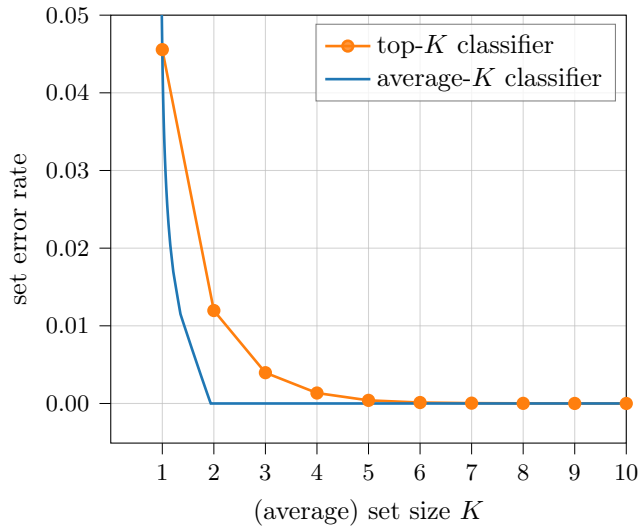}
  \caption{
    Comparison of error rate of the top-$K$ strategy with the average-$K$ strategy studied in this paper computed on the human uncertainty annotations of \dataset{CIFAR-10H} \citep{Peterson2019}.
    Lower is better.
  }
  \label{fig:cifar10h-set-error-rates-example}
\end{figure}

In this paper, we study an adaptive classification approach where computational  classifiers are allowed to act like human experts by responding with a list of candidate labels.
The size of the list may vary depending on the sample being classified but is constrained to have a mean value of $K$.
The value $K$ may be related to the resources available to validate or exclude each candidate classification (\eg in a medical setting where validating each possible diagnosis may require one or more specialized tests).
We denote this as \emph{average-$K$ classification}, a generalization of \emph{top-$K$ classification}.

In our problem setting, as in \Cref{fig:cifar10h-human-ambiguity-examples}, there is a single correct label, but the input provided may be ambiguous.
This is a different task from multi-label classification \citep{Zhang2013} where there are multiple true answers, all of them correct, which are given during training.
\Cref{fig:cifar10h-set-error-rates-example} shows how, on \dataset{CIFAR-10H}, an adaptive average-$K$ strategy has the potential to lower the error rate compared to always predicting the same number of classes for each sample.

The paper is organized as follows.
After formalizing the average-$K$ classification task, we determine the contexts in which the adaptive strategy is most beneficial compared to top-$K$ classification.
Moreover, we give estimation procedures to estimate those strategies and show that they are consistent.
Finally, we perform experiments on real-world image data sets to determine whether average-$K$ classification can be helpful in practice.

\section{Related Work}
\label{sec:related-work}

Uncertainty is naturally present in classification tasks \citep{Kendall2017,Huellermeier2021}.
There exist different approaches in the literature to handle it.
In this paper, we focus on \emph{decision-theoretic} methods.
They consist in directly learning a decision rule based on this uncertainty rather than attempting to quantify it explicitly.
There are two main classification settings which are related to our case: classification with reject option and set-valued classification.

\emph{Classification with reject option} consists in allowing a predictor to refuse to answer to a query when in presence of uncertainty.
It has been studied for a long time, one of the earliest works being the one of \citet{Chow1957}, and since then a growing literature has been developed with works ranging from theoretical formulation, statistical analysis to practical algorithm \citep[see among others][]{Chow1970,Herbei2006,Bartlett2008,Cortes2016}.
These works focus on the binary case offline setting but this approach has been extended to other settings such as the multi-class case \citep{Ramaswamy2018}, online learning \citep{Kocak2019}, etc.
We refer the reader to the survey of \citet{Hendrickx2021} for a more complete overview of literature of this field.

On the other hand, work on the \emph{set-valued prediction} problem is more recent.
It considers the setting where there is a single correct label, and the goal is to output a small set that contains this single correct label.
Some works, such as the ones from \citet{Lapin2016,Berrada2018,Blondel2020,Yang2020}, tackle the \emph{top-$K$ prediction} problem where one is interested in predicting a fixed set size ($K$) for all samples.
$K$ can be any integer value, typically small compared to the number of classes.
These works focus on how to more efficiently learn such rules.

Another line of work considered adaptive versions of this setting.
It has been studied under several names corresponding to distinct objectives.
These different formulations come from the fact that a trade-off needs to be found between the \emph{accuracy} and the \emph{informativeness} of the set-valued predictor.
The  way this trade-off is characterized determines the different formulations.
One of the earliest attempts was branded \emph{class-selection} or \emph{class-selective rejection} by \citet{Ha1997}.
It consists in formulating this trade-off as a linear combination which can equivalently be seen as an arithmetic mean between the error rate and the average predicted set size.
\citet{Coz2009} express this trade-off as an F-measure, \ie a harmonic averaging, and denote it as \emph{nondeterministic classifiers}.

Another important formulation of the problem is \emph{conformal prediction} \citep{Vovk2005,Shafer2008} where the aim is to guarantee that the error rate is no bigger than a certain value---set predictor \emph{validity}---while keeping the set size as small as possible---set predictor \emph{efficiency}.
It has been successfully applied to real-world image datasets by \citet{Angelopoulos2021}.
\citet{Sadinle2019} explored a special instance of this problem, denoted \emph{least ambiguous set-valued classification}, in which the aim is to minimize the average size of the predicted sets while satisfying a constraint on the average error rate.
By contrast, closer to our work, \citet{Denis2017} define \emph{confidence sets} which aim to set a constraint on the average set size and to minimize the error rate for that value.
Those works and subsequent ones built on top of them define new formulations of the problem \citep{Mortier2021}, provide algorithms \citep{Wang2018}, statistical analyses \citep{Chzhen2021}, etc.

In summary, many formulations have been proposed in the literature which result in many different optimal strategies.
However, to the best of our knowledge, no attempt has been made at characterizing the conditions underlying the usefulness of formulating the classification task as an adaptive set-valued classification problem.
For instance, \citet{Ha1997} computes upper-bounds on error rate and average set size but, as stated in the paper, these bounds are distribution-free.
They do not study conditions on the distribution in which adaptation is useful.
Our goal is to understand and to characterize problems for which adaptation should be used.

\section{From Top-$K$ to Average-$K$ Classification}
\label{sec:preliminaries}

In this section, we detail the formalization and notations that will be used in the rest of the paper.
We first briefly recall the multi-class classification and top-$K$ classification settings.
Then, we extend them to the average-$K$ classification task by revisiting a formalism that was previously presented by \citet{Denis2017} to make it more broadly applicable.
Finally, we synthesize the main contributions of the paper.

\subsection{Multi-class Classification}
In the multi-class classification setting \citep{Tewari2007}, we are given a training data set $(x_i,y_i)_{i \in \ens{1,\dots,n}}$ where, for all $i$, the input $x_i$ belongs to an input space $\X \subset \R^d$, \eg the space of images, and the output $y_i$ to an output space $\Y = \ens{1,2,\dots,C}$, the set of class labels, where $C$ is the number of classes.
The joint space $\X \times \Y$ is a probabilistic space and the data points are sampled from the joint probability measure $\Pr_{X,Y}$.
This joint probability can, in turn, be decomposed into the marginal probability measure on $\X$, denoted $\Pr_X$, and the conditional probability of $Y$ given $X$, denoted $\eta(x)$:
\begin{equation*}
  \eta_k(x) \eqdef \prcond{ Y=k }{ X=x }.
\end{equation*}

In the classical setting, the aim is to find a classifier $h: \X \to \Y$ which produces a single prediction for each input and generalizes well on unseen data.
This is formalized by trying to minimize the risk defined by the 0-1 error rate,
\begin{equation}
  \Risk(h) \eqdef \E[X,Y]{ \ind{Y \neq h(X)} } = \pr[X,Y]{ Y \neq h(X) }
  \label{eq:risk-top-1-classification}
\end{equation}
where $\ind{P}$ is equal to $1$ if the logical proposition $P$ is true and to $0$ otherwise.

Any predictor satisfying, for all $x \in \X$,
\begin{equation*}
  h^*(x) \in \argmax_{k \in \Y} \eta_k(x)
\end{equation*}
minimizes the previous risk, \ie $\Risk(h^*) = \inf_h \Risk(h)$.
Such a predictor is called a Bayes predictor.
For simplicity, we usually assume that there is a single minimizer and write $h^*(x) = \argmax_k \eta_k(x)$.
In this case, we have
\begin{equation*}
   \Risk(h^*) = 1 - \E[X]{ \max_k \eta_k(X) }.
\end{equation*}

In general, the minimizer does not achieve a risk of zero.
A risk of zero occurs only when $\max_k \eta_k(x) = 1$ almost everywhere. 
However, in many practical data sets, such as \dataset{CIFAR-10} \citep{Krizhevsky2009}, \dataset{ImageNet} \citep{Russakovsky2015}, \dataset{iNaturalist2018} \citep{VanHorn2018} and others, there is some intrinsic ambiguity in the task and we have $\Risk(h^*) > 0$.
From a top-1 classification perspective, this risk is irreducible: it is the lowest achievable risk.
The risk can, however, be reduced by  predicting sets.

\subsection{Top-$K$ Classification}
\label{sec:top-k-formulation}

The most natural and direct extension to top-1 classification is to take the top-$K$ most probable classes and predict all $K$.
The learning problem can be formulated as follows.
Denoting $\powerset{\Y}$ the power set of $\Y$, \ie the set of all subsets of $Y$, the classifier $\TopKSet$ is now of the form $\TopKSet: \X \to \powerset{\Y}$: it takes values from $\X$ and predicts a set of labels from $\Y$.
Moreover, it has the following additional constraint:
\begin{equation*}
  \forall x \in \X, \quad |\TopKSet(x)| = K.
\end{equation*}
The objective, the risk of \Cref{eq:risk-top-1-classification}, is then modified to minimize the set error rate:
\begin{equation*}
  \Error(\TopKSet) \eqdef \E[X,Y]{ \ind{Y \notin \TopKSet(X)} } = \pr[X,Y]{ Y \notin \TopKSet(X)}.
\end{equation*}
This expression yields the top-$K$ error rate.

As expected, this error rate is minimized by predicting the top-$K$ most probable classes for any input $x$ \citep{Lapin2016}.
We denote by $\eta_{\sigma_x(k)}(x)$ the re-ranking of $\eta_k(x)$ in decreasing order of probability, \ie $\eta_{\sigma_x(1)}(x) \geq \eta_{\sigma_x(2)}(x) \geq \dots \geq \eta_{\sigma_x(C)}(x)$.
The minimizer of the top-$K$ error rate can then be written as
\begin{equation}
  \label{eq:optimal-top-k-classifier}
  \forall x \in \X, \quad \TopKSet_K^*(x) \eqdef \enscond{ \sigma_x(k) }{ k \in \ens{1,\dots,K} } .
\end{equation}
To simplify  notation, in the rest of the document, we will often use the following notation for the re-ordering of $\eta_k(x)$:
\begin{equation*}
  \tilde{\eta}_k(x) := \eta_{\sigma_x(k)}(x).
\end{equation*}

This top-$K$ classification approach is the most direct generalization of top-1 classification, but there is no reason to  predict the same number of classes for all the samples.
We thus propose to consider adaptive set-valued classifiers which relax this constraint.

\subsection{Average-$K$ Classification}
\label{sec:average-K-formulation}

Consider a predictor $\AverageKSet$ (pronounced ``Euler script S'') which outputs any subset of $\Y$.
It is also of the form
\begin{equation*}
  \AverageKSet: \X \to \powerset{\Y},
\end{equation*}
where $\powerset{\Y}$ is the power set of $\Y$.
The naive goal is, as previously, to maximize the chance of predicting the good label.
(Note that, in general, the predictor $\AverageKSet$ may output the empty set $\emptyset$.)

However,  considering only the set error rate, \ie
\begin{equation*}
  \Error(\AverageKSet) \eqdef \pr[X,Y]{ Y \notin \AverageKSet(X) },
\end{equation*}
is too naive, because the predictor $\AverageKSet_{\mathrm{all}}(x) = \Y$ that always predict all  classes is very accurate, but completely uninformative.
Thus, this accuracy needs to be balanced with a measure of informativeness $\Info(\AverageKSet)$.
There are several ways to define  accuracy and informativeness and to measure the trade-off between them.
They give different formulations of the set-valued classification problem which are adapted to different scenarios, as we discussed in
 \Cref{sec:related-work}. 

Because we are interested in a generalization of top-$K$, we propose to relax the constraint of a fixed set size by changing it to a constraint on the \emph{average} set size which we take as measure of informativeness:
\begin{equation*}
  \Info(\AverageKSet) \eqdef \E[X]{ |\AverageKSet(X)| }.
\end{equation*}
This gives the following optimization problem
\begin{equation}
  \label{eq:constrained-optimization-formulation}
  \begin{split}
    \min_{\AverageKSet} \; \pr[X,Y]{ Y \notin \AverageKSet(X)} \\
    \text{s.t.} \; \E[X]{ |\AverageKSet(X)| } \leq \cK
  \end{split}
\end{equation}
where the constraint on the average set size, $\cK > 0$, is not limited to integers but can take any real value.
This setting is similar to the setting studied by \citet{Denis2017}.
However, here, we do not make the simplifying continuity assumption which enforces the absence of ties.
This leads to some changes in the expression of the solution of \Cref{eq:constrained-optimization-formulation}.

This constrained optimization problem has the same solution as the following risk minimization problem proposed by \citet{Ha1997} for a given well-chosen parameter $\lambda$:
\begin{equation*}  
  \min_{\AverageKSet} \; \Risk_\lambda(\AverageKSet)
  \eqdef \Error(\AverageKSet) + \lambda \, \Info(\AverageKSet) \phantom{:}= \E[X,Y]{ \ind{Y \notin \AverageKSet(X)} + \lambda \, |\AverageKSet(X)| } .
\end{equation*}
This newly introduced parameter $\lambda \in \R^+$ controls the balance between both terms.
It is related to $\cK$ in a way that we will explain later in this subsection.
Note that this risk formulation assumes that the cost of predicting additional labels increases proportionally to some cost coefficient $\lambda > 0$.

Given that
\begin{align*}
  \Risk_\lambda(\AverageKSet)
  &= \E[X]{ \left( 1 - \sum_{k \in \AverageKSet(X)} \eta_k(x) \right) + \sum_{k \in \AverageKSet(X)} \lambda } \\
  &= \E[X]{ 1 + \sum_{k \in \AverageKSet(X)} \Big( \lambda - \eta_k(x) \Big) },
\end{align*}
the optimal Bayes predictor $\AverageKSet_\lambda^*$ of this risk can be easily derived and is equal to
\begin{equation*}
  \AverageKSet_\lambda^*(x) \eqdef \enscond{ k \in \Y }{ \eta_k(x) > \lambda }.
\end{equation*}

The link between the cost $\lambda$ and average set size $\cK$ is given by the following function
\begin{equation}
  \label{eq:G-function}
  G_\eta(\lambda) \eqdef \sum_k \pr[X]{ \eta_k(X) > \lambda }.
\end{equation}
Here, $G_\eta(\lambda)$ is exactly the average set size obtained when using the threshold $\lambda$, \ie
\begin{equation*}
  G_\eta(\lambda) = \E[X]{| \AverageKSet_\lambda^*(X) |}.
\end{equation*}
In order to compute the threshold corresponding to a given average set size, we must effectively find the inverse of the previous function.
In general, however, it is not invertible.
Nevertheless, as it is weakly decreasing and right-continuous, we can consider the following generalized inverse function \citep{Embrechts2013} defined as\footnote{Usually, the generalized inverse function is defined using an infimum but, here, as $G_\eta$ is weakly decreasing and right-continuous, the infimum is actually included in the set and is thus its minimum.}
\begin{equation}
  \label{eq:G-inverse-function}
  G_\eta^{-1}(\cK) \eqdef \min \enscond{ \lambda \in [0,1] }{ G_\eta(\lambda) \leq \cK }.
\end{equation}
Using these definitions, as we will show in \Cref{thm:adaptive-set-prediction-excess-risk}, under some conditions easy to match in practice (as elucidated below), the solution of our optimization problem defined in \Cref{eq:constrained-optimization-formulation} can be expressed as
\begin{equation}
  \AverageKSet_\cK^*(x) = \AverageKSet_\cK^+(x) \cup \widetilde{\AverageKSet}_\cK^{=}(x),
  \label{eq:optimal-adaptive-top-k-classifier}
\end{equation}
where
\begin{equation*}
  \AverageKSet_\cK^+(x) = \enscond{ k \in \Y }{ \eta_k(x) > G_\eta^{-1}(\cK) } ,
\end{equation*}
 and $\widetilde{\AverageKSet}_\cK^{=}$ is any \emph{deterministic} classifier predicting a subset of the labels produced by $\AverageKSet_\cK^{=}$ defined as
\begin{equation*}
  \AverageKSet_\cK^{=}(x) = \enscond{ k \in \Y }{ \eta_k(x) = G_\eta^{-1}(\cK) } ,
\end{equation*}
such that $\Info(\widetilde{\AverageKSet}_\cK^{=}) = \cK - \Info(\AverageKSet_\cK^+)$.
Although such a \emph{deterministic} classifier $\widetilde{\AverageKSet}_\cK^{=}$ is not always guaranteed to exist, in practice, however, it  often does.
For instance, the following proposition gives a sufficient condition for its existence.
If it were not to exist, a workaround would consist in replacing it by a stochastic version but, to simplify the results and discussions of this paper, we stick with the deterministic version.

\begin{proposition}[Sufficient condition for the existence of a deterministic average\mbox{-}$\cK$ classifier]
  \label{thm:existence-deterministic-classifier}
  If $\Pr_X$ is continuous (\ie has no atoms), then, for any real $\cK \in [0,C]$, a deterministic classifier $\AverageKSet_\cK^{*}$ can be found.
\end{proposition}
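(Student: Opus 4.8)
The plan is to invoke the characterization of the optimal solution in \Cref{eq:optimal-adaptive-top-k-classifier}. Write $\lambda^* \eqdef G_\eta^{-1}(\cK)$. The piece $\AverageKSet_\cK^+$ is already deterministic, so the whole statement reduces to exhibiting a \emph{deterministic} classifier $\widetilde{\AverageKSet}_\cK^{=}$ selecting a subset of $\AverageKSet_\cK^{=}$ whose informativeness equals the leftover budget $b \eqdef \cK - \Info(\AverageKSet_\cK^+)$. Everything hinges on showing that, under non-atomicity of $\Pr_X$, such a measurable selection of prescribed size exists; the optimality of the resulting $\AverageKSet_\cK^*$ is then inherited from \Cref{eq:optimal-adaptive-top-k-classifier}.

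First I would pin down the admissible range of the budget. By \Cref{eq:G-inverse-function}, $\lambda^*$ belongs to $\enscond{\lambda}{G_\eta(\lambda) \le \cK}$, so $\Info(\AverageKSet_\cK^+) = G_\eta(\lambda^*) \le \cK$, giving $b \ge 0$. For the upper bound I would show $b \le \Info(\AverageKSet_\cK^{=})$, equivalently $\cK \le \sum_k \pr[X]{\eta_k(X) \ge \lambda^*}$. When $\lambda^* = 0$ this is exactly the hypothesis $\cK \le C$. When $\lambda^* > 0$, minimality of $\lambda^*$ forces $G_\eta(\lambda) > \cK$ for every $\lambda < \lambda^*$; letting $\lambda \uparrow \lambda^*$ and using continuity from below of $\Pr_X$ (the events $\ens{\eta_k(X) > \lambda}$ increase to $\ens{\eta_k(X) \ge \lambda^*}$) yields $\sum_k \pr[X]{\eta_k(X) \ge \lambda^*} = \lim_{\lambda \uparrow \lambda^*} G_\eta(\lambda) \ge \cK$. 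Hence $0 \le b \le \Info(\AverageKSet_\cK^{=})$.

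The key step, and the only place the hypothesis is used, is to realize the budget $b$ exactly. For each label $k$ put $A_k \eqdef \enscond{x \in \X}{\eta_k(x) = \lambda^*}$, so that $\Info(\AverageKSet_\cK^{=}) = \sum_k \Pr_X(A_k)$ and every deterministic sub-classifier of $\AverageKSet_\cK^{=}$ corresponds to a choice of measurable sets $B_k \subseteq A_k$ with informativeness $\sum_k \Pr_X(B_k)$. Since $\Pr_X$ has no atoms, the classical intermediate-value property of non-atomic measures (Sierpi\'nski's theorem) guarantees that for each $k$ and each $t \in [0, \Pr_X(A_k)]$ there is a measurable $B_k \subseteq A_k$ with $\Pr_X(B_k) = t$. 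I would then fill the budget greedily over $k = 1, \dots, C$: as long as the remaining budget exceeds $\Pr_X(A_k)$, set $B_k = A_k$ and decrement it by $\Pr_X(A_k)$; at the first index where the remaining budget $r$ satisfies $r \le \Pr_X(A_k)$, choose $B_k$ with $\Pr_X(B_k) = r$ and take $B_{k'} = \emptyset$ thereafter. Because $b \le \sum_k \Pr_X(A_k)$, this terminates with total mass exactly $b$, and $\widetilde{\AverageKSet}_\cK^{=}(x) \eqdef \enscond{k}{x \in B_k}$ is a measurable deterministic classifier with $\Info(\widetilde{\AverageKSet}_\cK^{=}) = b$. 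Combining it with $\AverageKSet_\cK^+$ as in \Cref{eq:optimal-adaptive-top-k-classifier} produces the deterministic optimizer $\AverageKSet_\cK^*$.

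The main obstacle is precisely this last construction: without non-atomicity, $\Pr_X$ could place an indivisible atom inside some $A_k$ whose mass overshoots the remaining budget, leaving no deterministic way to match $b$ and forcing the stochastic tie-break alluded to in the text. The budget bounds and the measurability of the greedy selection are, by contrast, routine once the generalized-inverse definition is unwound, since only finitely many measurable sets are involved.
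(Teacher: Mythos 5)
Your proof is correct, and it reaches the same conclusion by a genuinely different decomposition of the tie region. The paper partitions $\enscond{x}{\exists k,\ \eta_k(x) = \lambda^*}$ by the \emph{number of ties} $i$ at each point, i.e.\ into sets $\X_i$, and then makes an all-or-nothing selection per sample: it applies Sierpi\'nski's theorem exactly once, to carve out a sub-piece of the first partially-needed block $\X_{i_0+1}$, so the resulting tie-breaker predicts either all tied labels at $x$ or none. You instead decompose \emph{label-wise} into the (possibly overlapping) sets $A_k = \enscond{x}{\eta_k(x) = \lambda^*}$ and fill the budget greedily one label at a time, invoking Sierpi\'nski once per label where a fractional piece is needed; linearity of expectation makes the overlaps harmless since $\Info(\widetilde{\AverageKSet}_\cK^{=}) = \sum_k \Pr_X(B_k)$ regardless. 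A concrete advantage of your write-up is that you explicitly verify the feasibility of the budget, $0 \le b \le \Info(\AverageKSet_\cK^{=})$, via the limit $\lim_{\lambda \uparrow \lambda^*} G_\eta(\lambda) \ge \cK$; the paper leaves this implicit in its definition of $i_0$ (whose edge cases, e.g.\ $\rho < \mu_1$ or $\rho = \Info(\AverageKSet_\cK^{=})$, are not spelled out). One small slip: in that limit argument the events $\ens{\eta_k(X) > \lambda}$ \emph{decrease} to $\ens{\eta_k(X) \ge \lambda^*}$ as $\lambda \uparrow \lambda^*$, so what you are using is continuity from above of the finite measure $\Pr_X$, not continuity from below; the conclusion is unaffected.
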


\begin{proof}
  Denote the threshold $\lambda = G_\eta^{-1}(\cK)$ and the missing set size $\rho = \cK - \Info(\AverageKSet_\cK^+)$.
  We first partition the subset of $\X$ in which the equality case occurs into subsets $\X_i$ depending on the number of ties $i \in \ens{1,\dots,C} $ in those subsets:
  \begin{equation*}
    \forall i \in \ens{1,\dots,C}, \quad \X_i = \enscond{ x \in \X }{ | \enscond{ k \in \Y }{ \eta_k(x) = \lambda } | = i }.
  \end{equation*}
  Each of those subsets has a measure which we denote $\mu_i = \pr[X]{\X_i}$.
  Let ${\AverageKSet_\cK^=}_{| \X_i}$ be the restriction of $\AverageKSet_\cK^=$ on $\X_i$, then its average set size is equal to $\Info({\AverageKSet_\cK^=}_{| \X_i}) = i \, \mu_i$.

  We denote $i_0$ the maximum index of the subsets $\X_i$ which we will take entirely, defined by
  \begin{equation*}
    i_0 = \max \enscond{ i' \in \ens{1,\dots,C} }{ \sum_{i \leq i'} \Info({\AverageKSet_\cK^=}_{| \X_i}) = \sum_{i \leq i'} i \, \mu_i \leq \rho } .
  \end{equation*}
  By definition of $i_0$, the measure of subset $\X_{i_0+1}$ satisfies
  \begin{equation*}
    \sum_{i \leq i_0} i \, \mu_i \leq \rho < \sum_{i \leq i_0+1} i \, \mu_i
    \quad \Leftrightarrow \quad
    0 \leq \frac{ \rho - \sum_{i \leq i_0} i \, \mu_i }{ i_0+1 } < \pr[X]{\X_{i_0+1}} .
  \end{equation*}
  As $\Pr_X$ is continuous, we can use a theorem by \citet{Sierpinski1922} \citep[see also][Proposition A.1.8]{Hytonen2016} to extract a subset $\tilde{\X}_{i_0+1}$ of $\X_{i_0+1}$ which measure is equal to
  \begin{equation*}
    \pr[X]{\tilde{\X}_{i_0+1}} = \frac{ \rho - \sum_{i \leq i_0} i \, \mu_i }{ i_0+1 } .
  \end{equation*}
  We can then build our deterministic predictor $\widetilde{\AverageKSet}_\cK^{=}$ as
  \begin{equation*}
    \widetilde{\AverageKSet}_\cK^{=}(x) = \begin{cases}
      \AverageKSet_\cK^{=}(x) & \text{if } x \in ( \cup_{i \leq i_0} \X_i ) \cup \tilde{\X}_{i_0+1} , \\
      \emptyset & \text{otherwise} ,
    \end{cases}
  \end{equation*}
  which satisfies
  \begin{equation*}
    \Info(\widetilde{\AverageKSet}_\cK^{=})
    = \sum_{i \leq i_0} \Info({\AverageKSet_\cK^=}_{| \X_i}) + \Info({\AverageKSet_\cK^=}_{| \tilde{\X}_{i_0+1}})
    = \sum_{i \leq i_0} i \, \mu_i + (i_0+1) \frac{ \rho - \sum_{i \leq i_0} i \, \mu_i }{ i_0+1 }
    = \rho .
  \end{equation*}
  This concludes the proof.
\end{proof}

Note that, here, the continuity assumption is made on the marginal distribution $\Pr_X$ and not on the distribution of the conditional probabilities $\eta_k(X)$ as performed by \citet{Denis2017} which implicitly enforces the absence of ties.
It is moreover weaker in the sense that their assumption actually implies the sufficient condition stated above.
Similarly, if $\Pr_X$ admits a probability density function, then it is continuous but the converse is not necessarily true.
It is thus a broad condition which essentially is not verified when $\Pr_X$ is a discrete distribution or is a mixture of discrete and continuous distributions.

In the absence of ties at $G_\eta^{-1}(\cK)$, \ie when $\forall k, \pr[X]{ \eta_k(x) = G_\eta^{-1}(\cK) } = 0$, the optimal strategy that minimizes the error rate for a given constraint on the average set size is equal to $\AverageKSet_\cK^*(x) = \AverageKSet_\cK^+(x)$, \ie it consists of a simple thresholding on the values of the conditional probabilities $\pr{ Y=k \mid X=x }$.
Indeed, in this case, we have $\Info( \AverageKSet_\cK^+ ) = G_\eta( G_\eta^{-1}( \cK ) ) = \cK$.

When ties are present at $G_\eta^{-1}(\cK)$, then we have $\Info( \AverageKSet_\cK^+ ) = G_\eta( G_\eta^{-1}( \cK ) ) < \cK$, and we need to arbitrarily break the ties to recover the missing $\cK - \Info( \AverageKSet_\cK^+ )$ labels.
This is the role of $\widetilde{\AverageKSet}_\cK^{=}(x)$ which guarantees that, in the end, we have $\Info(\AverageKSet_\cK^*) = \cK$.

Note that this definition of the optimal predictor has been changed compared to \citet{Denis2017}, we have generalized it to account for the possible presence of ties: a second term $\widetilde{\AverageKSet}_\cK^{=}(x)$ was added and the inequality was changed to strict inequality in the first term $\AverageKSet_\cK^+$.

Finally, a last important preliminary result which we will be using later concerns the regret of such predictors:
  the optimal classifier $\AverageKSet_\cK^*$ is indeed the average-$\cK$ classifier with the lowest error rate.
This is a more general version of a result shown by \citet[Proposition 4]{Denis2017} and works even when their continuity assumption does not hold.

\begin{proposition}
  \label{thm:adaptive-set-prediction-excess-risk}
  For any deterministic average-$\cK$ predictor $\AverageKSet_\cK$, \ie having exactly an average set size of $\cK$, its regret compared to the optimal predictor $\AverageKSet_\cK^*$ is equal to
  \begin{equation*}
    \Error(\AverageKSet_\cK) - \Error(\AverageKSet_\cK^*) = \sum_k \E[X]{ \left| \eta_k(X) - G_\eta^{-1}(\cK) \right| \ind{k \in \AverageKSet_\cK(X) \SymDiff \AverageKSet_\cK^*(X)} }
  \end{equation*}
  where $\SymDiff$ is the symmetric difference between two sets.
\end{proposition}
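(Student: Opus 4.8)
The plan is to rewrite the set error rate of any predictor as a linear functional of the class-conditional probabilities $\eta_k$ and then compare the two predictors term by term. Conditioning on $X$ gives $\Error(\AverageKSet) = \E[X]{1 - \sum_{k \in \AverageKSet(X)} \eta_k(X)} = 1 - \sum_k \E[X]{\eta_k(X)\,\ind{k \in \AverageKSet(X)}}$. Subtracting the same expression for $\AverageKSet_\cK^*$, the constant and the $-1$ cancel and I obtain
\begin{equation*}
  \Error(\AverageKSet_\cK) - \Error(\AverageKSet_\cK^*) = \sum_k \E[X]{ \eta_k(X) \big( \ind{k \in \AverageKSet_\cK^*(X)} - \ind{k \in \AverageKSet_\cK(X)} \big) } .
\end{equation*}

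The crucial step is to center this expression at the threshold $\lambda := G_\eta^{-1}(\cK)$. Both predictors are average-$\cK$, so $\Info(\AverageKSet_\cK) = \Info(\AverageKSet_\cK^*) = \cK$, which means $\sum_k \E[X]{ \ind{k \in \AverageKSet_\cK^*(X)} - \ind{k \in \AverageKSet_\cK(X)} } = 0$. I can therefore subtract $\lambda$ times this vanishing quantity from the right-hand side without changing its value, turning $\eta_k(X)$ into $\eta_k(X) - \lambda$ inside the sum. This centering is exactly what converts a signed difference into an absolute value.

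It then remains to identify the integrand with $\lvert \eta_k(X) - \lambda \rvert \, \ind{k \in \AverageKSet_\cK(X) \SymDiff \AverageKSet_\cK^*(X)}$ pointwise, for each $k$ and (almost) every $x$, by a case analysis on the indicator difference $\ind{k \in \AverageKSet_\cK^*(x)} - \ind{k \in \AverageKSet_\cK(x)} \in \{-1,0,1\}$. When it equals $0$ the two memberships agree, so $k$ is not in the symmetric difference and both sides vanish. When it equals $+1$ we have $k \in \AverageKSet_\cK^*(x)$; since $\AverageKSet_\cK^* = \AverageKSet_\cK^+ \cup \widetilde{\AverageKSet}_\cK^=$, membership forces $\eta_k(x) \geq \lambda$, so the product is $(\eta_k(x)-\lambda)(+1) = \lvert \eta_k(x) - \lambda \rvert$. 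When it equals $-1$ we have $k \notin \AverageKSet_\cK^*(x)$, which forces $\eta_k(x) \leq \lambda$ (otherwise $k$ would lie in $\AverageKSet_\cK^+ \subseteq \AverageKSet_\cK^*$), so the product is $(\eta_k(x)-\lambda)(-1) = \lvert \eta_k(x) - \lambda \rvert$. In both nonzero cases $k$ lies in the symmetric difference, so the integrand matches the target. Summing over $k$ and taking expectations yields the claimed identity.

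I expect the only delicate point to be the sign bookkeeping at the threshold, where I must invoke the precise structure of $\AverageKSet_\cK^*$: the strict inequality defining $\AverageKSet_\cK^+$ together with the tie-breaking set $\widetilde{\AverageKSet}_\cK^=$ is what guarantees $\eta_k(x) \geq \lambda$ on the classes selected by $\AverageKSet_\cK^*$ and $\eta_k(x) \leq \lambda$ on those excluded. Boundary ties with $\eta_k(x) = \lambda$ are harmless, since there $\lvert \eta_k(x) - \lambda \rvert = 0$ and it does not matter how $\widetilde{\AverageKSet}_\cK^=$ resolves them; this is also why the identity holds regardless of the particular choice of deterministic tie-breaker. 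The remaining ingredients, namely linearity of expectation and the average-$\cK$ constraint, are routine.
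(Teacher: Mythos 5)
Your proof is correct and follows essentially the same route as the paper's: expand the error difference as a linear functional of $\eta_k$, use the equal average-set-size constraint to center at $\lambda = G_\eta^{-1}(\cK)$, and exploit the structure of $\AverageKSet_\cK^*$ ($\eta_k \geq \lambda$ on selected classes, $\eta_k \leq \lambda$ on excluded ones) to turn the signed differences into absolute values. The only cosmetic difference is the order of operations: the paper centers pointwise, leaving a term $\lambda \left( \left| \AverageKSet_\cK^*(x) \right| - \left| \AverageKSet_\cK(x) \right| \right)$ that vanishes only after taking expectations, whereas you center after taking the expectation.
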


\begin{proof}
  The proof is in fact the same as the one of \cite[Proposition 4]{Denis2017} even in this more general setting.
  Note that, by construction, $\AverageKSet_\cK^*$ is indeed an average-$\cK$ classifier as
  \begin{equation*}
    \Info( \AverageKSet_\cK^* )
    = \Info( \AverageKSet_\cK^+ ) + \Info(\widetilde{\AverageKSet}_\cK^{=})
    = \Info( \AverageKSet_\cK^+ ) + \left( \cK - \Info( \AverageKSet_\cK^+ ) \right)
    = \cK .
  \end{equation*}

  Let $\lambda = G_\eta^{-1}(\cK)$.
  The difference of the pointwise error rate for every $x \in \X$ is equal to
  \begin{align*}
    \Error( \AverageKSet_\cK ; x) - \Error( \AverageKSet_\cK^* ; x)
    &= \left( 1 - \sum_{k \in \AverageKSet_\cK(x)} \eta_k(x) \right) - \left( 1 - \sum_{k \in \AverageKSet_\cK^*(x)} \eta_k(x) \right) \\
    &= \sum_{k \in \AverageKSet_\cK^* \setminus \AverageKSet_\cK (x)} \eta_k(x) - \sum_{k \in \AverageKSet_\cK \setminus \AverageKSet_\cK^* (x)} \eta_k(x) \\
    &=  \sum_{k \in \AverageKSet_\cK^* \setminus \AverageKSet_\cK (x)} \left( \eta_k(x) - \lambda \right) + \sum_{k \in \AverageKSet_\cK \setminus \AverageKSet_\cK^* (x)} \left( \lambda - \eta_k(x) \right) \\
    &\qquad + \lambda \left( \left| \AverageKSet_\cK^* \setminus \AverageKSet_\cK (x) \right| - \left| \AverageKSet_\cK \setminus \AverageKSet_\cK^* (x) \right| \right) \\
    &= \sum_{k \in \AverageKSet_\cK^* \SymDiff \AverageKSet_\cK (x)} \left| \eta_k(x) - \lambda \right|  + \lambda \left( \left| \AverageKSet_\cK^*(x) \right| - \left| \AverageKSet_\cK(x) \right| \right).
  \end{align*}

  Taking the expectation, we then have
  \begin{align*}
    \Error(\AverageKSet_\cK) - \Error(\AverageKSet_\cK^*)
    &= \E[X]{ \sum_{k \in \AverageKSet_\cK^* \SymDiff \AverageKSet_\cK (x)} \left| \eta_k(x) - \lambda \right| } + \lambda ( \Info(\AverageKSet_\cK^*) - \Info(\AverageKSet_\cK) ).
  \end{align*}
  The last term is equal to zero as the average set size of both set-valued classifiers is equal to $\cK$.
  This concludes the proof.
\end{proof}

Note that, as a top-$K$ classifier is also an average-$K$ classifier, applying this result to the case where $\cK = K$, this also implies that the error rate of the optimal average-$K$ classifier is necessarily less than or equal to that of the optimal top-$K$ classifier.

\subsection{Contributions}

The main goal of this paper is to characterize both theoretically and practically the usefulness of average-$K$ compared to top-$K$ classification.
Specifically, we will compare the two following optimal predictors:
\begin{enumerate}
\item optimal top-$K$ classifier (\Cref{sec:top-k-formulation}):
  \begin{equation}
    \TopKSet_K^*(x) = \enscond{ \sigma_x(k) }{ k \in \ens{1,\dots,K} },
  \end{equation}
  where $\sigma_x$ is a permutation of $\ens{1,\dots,C}$ such that $\eta_{\sigma_x(1)} (x) \geq \eta_{\sigma_x(2)} (x) \geq \dots \geq \eta_{\sigma_x(C)} (x)$;
\item optimal average-$\cK$ classifier (\Cref{sec:average-K-formulation}):
  \begin{equation}
    \AverageKSet_\cK^*(x) = \enscond{ k \in \Y }{ \eta_k(x) > \lambda_\cK },
  \end{equation}
  where $\lambda_\cK = G_\eta^{-1}(\cK)$ (for the complete expression of this classifier, see \Crefparenthesis{eq:optimal-adaptive-top-k-classifier}).
\end{enumerate}
We will compare the two classifiers for the same fixed average set size of $\cK=K$. 
We tackle the following  questions:
\begin{enumerate}
\item Theoretical: For which problems is a top-$K$ strategy $\TopKSet_K^*(x)$  optimal w.r.t. the adaptive counterpart $\AverageKSet_K^*(x)$?
\item Theoretical: For which problems does the average-$K$ strategy $\AverageKSet_K^*(x)$ achieve a lower error rate than the fixed set size one $\TopKSet_K^*(x)$ and by how much?
\item Theoretical and Practical: How can one build consistent estimators $\hat{\TopKSet}_K(x)$ and $\hat{\AverageKSet}_K(x)$ for these two strategies?
\item Practical: Given a data set, will an adaptive strategy $\hat{\AverageKSet}_K(x)$ generally outperform the top-$K$ one $\hat{\TopKSet}_K(x)$ in practice?
\end{enumerate}
We will address these questions in order.

In \Cref{sec:introductory-examples}, we start by presenting toy examples to show cases where top-$K$ and average-$K$ are useful.
These examples serve as illustrations for the subsequent sections.

In \Cref{sec:top-k-optimality}, we study when top-$K$ works as well as average-$K$.
We characterize the problems for which this holds by giving an interpretable necessary and sufficient condition involving $\Pr_X$ and $\eta_k$, connected to a notion of heterogeneity of the task ambiguity.
We show that this notion of heterogeneity is not  captured by the variance of the ambiguity but rather depends on the overlap of the distributions of the ordered conditional probabilities $\tilde{\eta}_K$ and $\tilde{\eta}_{K+1}$.

In \Cref{sec:adaptive-svp-usefulness}, we characterize the settings when $\AverageKSet_K^*$ provides a gain over $\TopKSet_K^*(x)$ and quantify by how much.
In particular, we will provide a lower bound on this improvement which highlights for which problems the adaptive strategy will be most useful.
This lower bound is expressed with a quantification of the heterogeneity of the ambiguity which we call \emph{straddle strength}.
This quantity captures both the \emph{weight} and the \emph{magnitude} of the previously-mentioned overlap of the distributions of the ordered conditional probabilities.

Next, in \Cref{sec:estimation-procedures}, we provide estimation procedures and prove their consistency, both for top-$K$ and average-$K$ classification.
These procedures are natural and intuitive, they are based on plug-in rules and strongly proper losses minimization.
To prove this result, we derive new plug-in regret bounds and generalize strongly proper losses to the multi-class case.
In particular, this allows us to prove that building top-$K$ and average-$K$ classifiers from the scores of a model learned by minimizing the negative log-likelihood is consistent.
Note that, more generally, the results of \Cref{sec:strongly-proper-losses} on strongly proper losses are of interest on their own for multi-class classification beyond top-$K$ and average-$K$ classification.

Finally, in \Cref{sec:experiments}, using these procedures, we study how the previous theoretical findings apply to real-world image data sets.
First, we study the estimation procedures coupled with classic neural network training and show their efficiency under various conditions.
Then, we report on experiments carried out on large-scale data sets to evaluate the usefulness of the adaptive average-$K$ strategy on concrete problems.
Surprisingly, in practice, average-$K$ \emph{always} outperforms top-$K$, even when the amount of training data is limited.

\section{Introductory Examples}
\label{sec:introductory-examples}

In this section, we present two toy examples and study how top-$K$ and average-$K$ perform in these cases.
These examples are deliberately contrived to train the reader's intuition.
We'll use (and explore variants of) them in  subsequent sections to illustrate both definitions and results.

For these examples, we consider applications with $C=6$ classes, where the goal, given an input, is to return a small set containing the proper class.
We denote the different classes using letters from $A$ to $F$.

\begin{figure}
  \centering
  \begin{subfigure}{.15\textwidth}
    \foreach \i in {1,2,3} {
      \begin{subfigure}{\textwidth}
        \centering
        \includegraphics[width=\textwidth]{imgs/introductory_examples/ex1_\i} \\
      \end{subfigure}
    }
    \begin{equation*}
      \vdots
    \end{equation*}
    \begin{subfigure}{\textwidth}
      \centering
      \includegraphics[width=\textwidth]{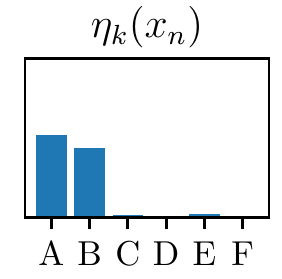} \\
    \end{subfigure}
  \end{subfigure}
  \hfill
  \begin{subfigure}{.15\textwidth}
    \foreach \i in {1,2,3} {
      \begin{subfigure}{\textwidth}
        \centering
        \includegraphics[width=\textwidth]{imgs/introductory_examples/ex1_\i_ordered} \\
      \end{subfigure}
    }
    \begin{equation*}
      \vdots
    \end{equation*}
    \begin{subfigure}{\textwidth}
      \centering
      \includegraphics[width=\textwidth]{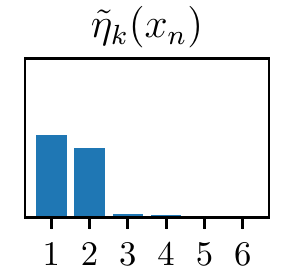} \\
    \end{subfigure}
  \end{subfigure}
  \hfill
  \begin{subfigure}{.5\textwidth}
    \begin{subfigure}{\textwidth}
      \centering
      \input{plots/introductory_examples/ex1_error_rate_reduction_bar_plot.pgf}%
    \end{subfigure}
    \newline
    \vspace{.5cm}
    \newline
    \begin{subfigure}{\textwidth}
      \centering
      \input{plots/introductory_examples/ex1_set_size_distribution_K_2.pgf}%
    \end{subfigure}
  \end{subfigure}
  \caption{
    Illustration of Example~1.
    The first column contains samples with their associated conditional probabilities $\eta_k$ over the classes.
    The second column orders the classes by the heights of the bars in the histogram, $\tilde{\eta}_k$.
    For example, when classes $C$ and $D$ have nearly all the probability, then $\tilde{\eta}_k$ simply reprises the probabilities of those two classes, in descending order of probability.
    The top-right figure plots the error rate for  different $K$ values for top-$K$ and average-$K$.
    The bottom-right figure shows the distribution of the predicted set size of average-$K$ strategy for $K=2$.
    In this case, the classes are grouped by pairs and there  average-$K$ provides no benefit over top-$K$. A set size of 2 is best in all cases.
    See the text for more details.
  }
  \label{fig:example1}
\end{figure}

The first example, Example~1, is illustrated in \Cref{fig:example1}.
Here, the 6 classes are grouped by pairs: $\{A, B\}$, $\{C, D\}$, and $\{E, F\}$.
For each input (say an image), two classes have high, roughly equal, probabilities with some variability while all other classes have  negligible probabilities.
This is illustrated in the first column containing some samples with their associated conditional probability $\eta_k(x) = \prcond{ Y=k }{ X=x }$: the first sample $x_1$ is ambiguous between $A$ and $B$, while the second one $x_2$ between $C$ and $D$, $x_3$ between $E$ and $F$, etc.
The second column shows these conditional probabilities reordered in descending order, denoted $\tilde{\eta}_K$.
In each case, the two most probable classes are much more likely than the other ones.
In this scenario, top-2 and average-2 have the same error rate.
Average-$2$ will always choose two classes and so of course will top-$2$.
This is illustrated in the top-right and bottom-right figures.

\begin{figure}
  \centering
  \begin{subfigure}{.15\textwidth}
    \foreach \i in {1,2,3} {
      \begin{subfigure}{\textwidth}
        \centering
        \includegraphics[width=\textwidth]{imgs/introductory_examples/ex2_\i} \\
      \end{subfigure}
    }
    \begin{equation*}
      \vdots
    \end{equation*}
    \begin{subfigure}{\textwidth}
      \centering
      \includegraphics[width=\textwidth]{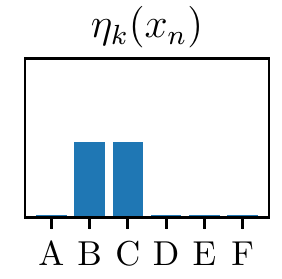} \\
    \end{subfigure}
  \end{subfigure}
  \hfill
  \begin{subfigure}{.15\textwidth}
    \foreach \i in {1,2,3} {
      \begin{subfigure}{\textwidth}
        \centering
        \includegraphics[width=\textwidth]{imgs/introductory_examples/ex2_\i_ordered} \\
      \end{subfigure}
    }
    \begin{equation*}
      \vdots
    \end{equation*}
    \begin{subfigure}{\textwidth}
      \centering
      \includegraphics[width=\textwidth]{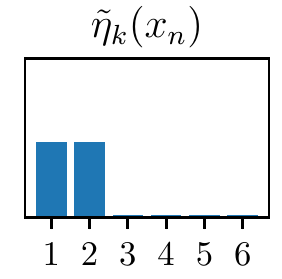} \\
    \end{subfigure}
  \end{subfigure}
  \hfill
  \begin{subfigure}{.5\textwidth}
    \begin{subfigure}{\textwidth}
      \centering
      \input{plots/introductory_examples/ex2_error_rate_reduction_bar_plot.pgf}%
    \end{subfigure}
    \newline
    \vspace{.5cm}
    \newline
    \begin{subfigure}{\textwidth}
      \centering
      \input{plots/introductory_examples/ex2_set_size_distribution_K_2.pgf}%
    \end{subfigure}
  \end{subfigure}
  \caption{
    Illustration of Example~2.
    The first column contains four samples with their associated conditional probabilities $\eta_k$ for each class.
    The second column reprises the probabilities  in descending order of the height of the bars in the corresponding left histogram, $\tilde{\eta}_k$, giving a probability density function of the number of classes found. In this example, for 1/3 of the inputs there is no ambiguity; for another 1/3,  there is ambiguity among two classes; and for the final 1/3, there is ambiguity among three classes.
    The top-right figure plots the error rates (vertical axis) for different values of $K$ (horizontal axis)   for both the top-$K$ and average-$K$ strategies. The orange shows the additional error when using top-$K$ compared with using average-$K$.
    The bottom-right figure shows the distribution of the set sizes of the average-$K$ strategy for $K=2$, showing that the average-$K$ strategy predicts a set size of 1, 2, or 3, each 1/3 of the time which accords with the ambiguity distribution.
    For that reason, average-2 has no errors, whereas top-2 will have an error 1/3 of the time when there is an ambiguity of 3 ($1/3 \times 1/3 = 1/9$).
    See the text for more details.
  }
  \label{fig:example2}
\end{figure}

The second example, Example~2, is illustrated in \Cref{fig:example2}.
Here, for some inputs, there is no ambiguity and
one class, $A$, has all the probability.
For other inputs, there is an ambiguity between two classes, $B$ and $C$. For yet other inputs,  there is ambiguity among three classes, $D$, $E$ and $F$.
There is an equal number of inputs for each level of ambiguity.

The figures illustrate this as follows:   the first sample $x_1$ is unambiguous and belongs to class $A$, while the second one $x_2$ has the same likelihood for $B$ and $C$ and $x_3$ the same likelihood for $D$, $E$ and $F$, etc.
In this scenario, top-$2$ reduces the error compared to top-1 classification but average-$2$ reduces the error rate further compared to top-$2$.
In fact, as shown in the bottom-right figure, when fixing $K=2$, the average-2 classifier predicts a balanced number of set of sizes 1, 2 and 3.
For each of the groups, it predicts exactly the correct number of classes present.
That is why average-2 has a zero error rate.

These two very simple examples offer a preliminary intuition to the question: when is  average-$K$ useful? Intuitively, average-$K$ has a lower error rate than top-$K$ when it would be best to guess fewer than $K$ classes sometimes and more than $K$ at other times. That is, the distribution of predicted set sizes should {\em straddle} $K$.
Formalizing and quantifying the utility of average-$K$ is the aim of the next two sections.

\section{When Is Top-$K$ Classification Optimal?}
\label{sec:top-k-optimality}

In this section, we first focus on the optimality of top-$K$ classification with respect to the adaptive strategy.
In other words, when is top-$K$ sufficient and when would the adaptive strategy improve on it?

In terms of error rates, as we will see in this section, we have the following relationship between the error rate of classic top-1 classification $\Error(S_1^*)$, top-$K$ classification $\Error(\TopKSet_K^*)$ and average-$K$ classification $\Error(\AverageKSet_K^*)$:
\begin{equation*}
  \Error(\AverageKSet_K^*) \leq \Error(\TopKSet_K^*) \leq \Error(S_1^*).
\end{equation*}
A first instructive question is: when is top-$K$ useful compared to top-1 classification?
The answer is found by analyzing the error rate of top-$K$ which is given by
\begin{equation*}
  \Error(\TopKSet_K^*) = 1 - \sum_{k \leq K} \E[X]{\tilde{\eta}_k(X)}.
\end{equation*}
As we can see, the important quantity here is the average of ambiguities $\E[X]{\tilde{\eta}_k(X)}$.
Obviously, if there is no ambiguity, top-1 classification is optimal.
The benefit of top-$K$ is solely dependent on  \emph{average ambiguities}.

\subsection{Adaptive Gain}

To study the optimality of top-$K$, we analyze the \emph{adaptive gain} $\Delta_K$ defined as
\begin{equation*}
  \Delta_K \eqdef \Error(\TopKSet_K^*) - \Error(\AverageKSet_K^*).
\end{equation*}
To simplify the notation, we will denote the adapted threshold of $\AverageKSet_K^*$ as $\lambda_K$, \ie
\begin{equation}
  \label{eq:average-K-threshold}
  \lambda_K \eqdef G_\eta^{-1}(K).
\end{equation}
Because top-$K$ classifiers also have an average set size of $K$, using \Cref{thm:adaptive-set-prediction-excess-risk}, we have
\begin{equation}
  \label{eq:adaptive-gain}
  \Delta_K = \sum_k \E[X]{ \left| \eta_k(X) - \lambda_K \right| \ind{k \in \TopKSet_K^*(X) \SymDiff \AverageKSet_K^*(X)} }
\end{equation}
which is clearly non-negative.
Adaptive gain thus quantifies the benefit of average-$K$ over top-$K$.


The following proposition is the main result of this section, it gives a natural necessary and sufficient condition for the usefulness/uselessness of the average-$K$ strategy compared to top-$K$.
In particular, it is expressed in terms of the support\footnote{The definition we consider here is not the classical definition of the support of a distribution. The definition we give here is equal to the interior of the smallest (closed) segment containing the support, in the classical sense, of the distribution.} of the distributions of $\tilde{\eta}_K$ and $\tilde{\eta}_{K+1}$ denoted respectively $\support{ \tilde{\eta}_K }$ and $\support{ \tilde{\eta}_{K+1} }$.
We define the support of $\tilde{\eta}_k$ for all $k$ as
\begin{equation*}
  \support{ \tilde{\eta}_k } \eqdef \interior{ \preimage[(0,1)]{F_{\tilde{\eta}_k}} } ,
\end{equation*}
\ie the interior of the inverse image of the interval $(0,1)$ under the cumulative density function (CDF) of $\tilde{\eta}_k$, defined as
\begin{equation*}
  \preimage[(0,1)]{F_{\tilde{\eta}_k}} \eqdef \enscond{ t \in [0,1] }{ F_{\tilde{\eta}_k}(t) = \pr[X]{ \tilde{\eta}_K(X) \leq t } \in (0,1) } .
\end{equation*}
Intuitively, the support of $\tilde{\eta}_k$ is the interval between the lowest  and the highest probabilities that the $k$th most likely class can take (the range of $\tilde{\eta}_k$)  over all samples.
In Example~1 (\Cref{fig:example1}), the most likely class always has a probability slightly more than $1/2$, so the range of $\tilde{\eta}_1$ hovers just above $1/2$.
In Example~2  (\Cref{fig:example2}), the most likely class can  have probability $1$, $1/2$ or  $1/3$, so   $\tilde{\eta}_1$ will span the interval $1/3$ to $1$.
The second most likely class in Example~2 has a probability ($\tilde{\eta}_2$) that ranges from $0$ to $1/2$. 

We can now present the main result of this section, \Cref{thm:characterization-uselessness-set-prediction}.

\begin{proposition}[Top-$K$ optimality characterization]
  \label{thm:characterization-uselessness-set-prediction}
  For a fixed $K<C$, the following statements are equivalent:
  \begin{enumerate}
  \item \label{enum:null-regret} The adaptive gain of using average-$K$ is zero:
    \begin{equation*}
      \Delta_K = 0,
    \end{equation*}
  \item \label{enum:support-gap} There is a gap between the support of the distributions of the ranked conditional probabilities $\tilde{\eta}_K$ and $\tilde{\eta}_{K+1}$:
    \begin{equation*}
      \exists \lambda \in [0,1] \; \mid \; \left( \tilde{\eta}_K(X) \geq \lambda \geq \tilde{\eta}_{K+1}(X) \emph{ almost everywhere} \right),
    \end{equation*}
  \item \label{enum:support-overlap} There is no overlap between the support of the distributions of the ranked conditional probabilities $\tilde{\eta}_K$ and $\tilde{\eta}_{K+1}$:
    \begin{equation*}
      \support{ \tilde{\eta}_K } \, \cap \, \support{ \tilde{\eta}_{K+1} } = \emptyset.
    \end{equation*}
  \end{enumerate}
\end{proposition}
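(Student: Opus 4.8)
The plan is to use statement~\ref{enum:support-gap} as the hub and to prove the two equivalences \ref{enum:support-gap}$\Leftrightarrow$\ref{enum:support-overlap} and \ref{enum:null-regret}$\Leftrightarrow$\ref{enum:support-gap} separately. In each case I would take the natural threshold $\lambda_K = G_\eta^{-1}(K)$ as the candidate separating value. Three facts drive everything: the pointwise ordering $\tilde{\eta}_K(x) \ge \tilde{\eta}_{K+1}(x)$; the sandwich $\AverageKSet_K^+(x) = \{ k : \eta_k(x) > \lambda_K \} \subseteq \AverageKSet_K^*(x) \subseteq \{ k : \eta_k(x) \ge \lambda_K \}$ coming from \Cref{eq:optimal-adaptive-top-k-classifier}; and the regret identity \Cref{eq:adaptive-gain}.

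For \ref{enum:support-gap}$\Leftrightarrow$\ref{enum:support-overlap}, I would first identify $\support{\tilde{\eta}_k}$ with the open interval $(\mathrm{ess\,inf}\,\tilde{\eta}_k, \mathrm{ess\,sup}\,\tilde{\eta}_k)$: the CDF $F_{\tilde{\eta}_k}$ takes values in $(0,1)$ exactly strictly between the essential infimum and supremum, and the interior strips the endpoints, absorbing any atoms there. The pointwise ordering then yields $\mathrm{ess\,inf}\,\tilde{\eta}_K \ge \mathrm{ess\,inf}\,\tilde{\eta}_{K+1}$ and $\mathrm{ess\,sup}\,\tilde{\eta}_K \ge \mathrm{ess\,sup}\,\tilde{\eta}_{K+1}$, so the intersection of the two support intervals reduces to $(\mathrm{ess\,inf}\,\tilde{\eta}_K, \mathrm{ess\,sup}\,\tilde{\eta}_{K+1})$. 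This is empty precisely when $\mathrm{ess\,sup}\,\tilde{\eta}_{K+1} \le \mathrm{ess\,inf}\,\tilde{\eta}_K$, which is exactly the condition for a separating constant in \ref{enum:support-gap} to exist (any $\lambda$ in $[\mathrm{ess\,sup}\,\tilde{\eta}_{K+1}, \mathrm{ess\,inf}\,\tilde{\eta}_K]$ serves). This step is routine; the only care is at the endpoints, which the interior operation handles.

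For \ref{enum:support-gap}$\Rightarrow$\ref{enum:null-regret}, I would argue via the Lagrangian $\Risk_\lambda$, avoiding any need to locate $\lambda_K$. If $\lambda$ separates the two order statistics almost everywhere, the sandwich $\{ k : \eta_k(x) > \lambda \} \subseteq \TopKSet_K^*(x) \subseteq \{ k : \eta_k(x) \ge \lambda \}$ holds pointwise, so $\TopKSet_K^*$ includes every class strictly above $\lambda$ and excludes every class strictly below it; hence it minimizes $\Risk_\lambda = \Error + \lambda\,\Info$ pointwise, and thus globally. Comparing with $\AverageKSet_K^*$ and using that both have informativeness exactly $K$, the $\lambda$-terms cancel and I obtain $\Error(\TopKSet_K^*) \le \Error(\AverageKSet_K^*)$; the reverse inequality noted just after \Cref{thm:adaptive-set-prediction-excess-risk} then forces $\Delta_K = 0$.

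The main obstacle is the converse \ref{enum:null-regret}$\Rightarrow$\ref{enum:support-gap}, which I would prove by contradiction straight from \Cref{eq:adaptive-gain} with $\lambda = \lambda_K$. If $\tilde{\eta}_{K+1}(X) > \lambda_K$ on a set of positive measure, then at least $K+1$ classes exceed $\lambda_K$ there, so one of them lies in $\AverageKSet_K^+ \subseteq \AverageKSet_K^*$ but outside the size-$K$ set $\TopKSet_K^*$; it therefore sits in the symmetric difference with $|\eta_k - \lambda_K| > 0$, making $\Delta_K > 0$. Symmetrically, if $\tilde{\eta}_K(X) < \lambda_K$ on a positive-measure set, the rank-$K$ class belongs to $\TopKSet_K^*$ but, having value below $\lambda_K$, lies outside $\AverageKSet_K^* \subseteq \{ k : \eta_k \ge \lambda_K \}$, again contributing a strictly positive term. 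Excluding both violations gives $\tilde{\eta}_K(X) \ge \lambda_K \ge \tilde{\eta}_{K+1}(X)$ almost everywhere, i.e.\ statement~\ref{enum:support-gap} with witness $\lambda_K$. The delicate points are checking that the tie-breaking component $\widetilde{\AverageKSet}_K^=$ can never rescue a class whose value differs from $\lambda_K$ (it only touches classes at exactly $\lambda_K$), and confirming that a positive-measure violation integrates to strictly positive regret despite the almost-everywhere quantifiers.
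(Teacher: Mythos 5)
Your proof is correct, and while two of your three steps track the paper's argument closely, your treatment of the direction \ref*{enum:support-gap}~$\Rightarrow$~\ref*{enum:null-regret} is genuinely different and cleaner. Your equivalence \ref*{enum:support-gap}~$\Leftrightarrow$~\ref*{enum:support-overlap} via essential infima/suprema is the paper's argument in different clothing (its endpoints $a_K$ and $b_{K+1}$ are exactly your $\mathrm{ess\,inf}\,\tilde{\eta}_K$ and $\mathrm{ess\,sup}\,\tilde{\eta}_{K+1}$, with the same degenerate-case bookkeeping), and your contrapositive argument for \ref*{enum:null-regret}~$\Rightarrow$~\ref*{enum:support-gap} matches the paper's forward direction: both exhibit, on a positive-measure violation set, a class of rank $K+1$ (resp.\ rank $K$) lying in the symmetric difference $\TopKSet_K^*(X) \SymDiff \AverageKSet_K^*(X)$ with $|\eta_k - \lambda_K|$ bounded away from zero along a countable exhaustion, so that \Cref{eq:adaptive-gain} integrates to something strictly positive; your two flagged ``delicate points'' (that $\widetilde{\AverageKSet}_K^{=}$ only contains classes sitting exactly at $\lambda_K$, and that a positive-measure violation forces $\Delta_K>0$) are both handled correctly and are standard. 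The real divergence is in \ref*{enum:support-gap}~$\Rightarrow$~\ref*{enum:null-regret}. The paper works with the generalized inverse: it defines the set $\Lambda$ of all separating thresholds, shows $\Lambda$ has a minimum by right-continuity of the CDF, and then proves $\lambda_K = \min\Lambda$ by a two-sided argument on $G_\eta$ ($G_\eta(\lambda)\le K$ for $\lambda\in\Lambda$, and $G_\eta(\lambda')>K$ for $\lambda'<\min\Lambda$), after which the already-established pointwise equivalence applies. You instead invoke the Lagrangian risk $\Risk_\lambda = \Error + \lambda\,\Info$: any separating $\lambda$ makes $\TopKSet_K^*$ a pointwise (hence global) minimizer of $\Risk_\lambda$, because the sandwich $\{k: \eta_k(x)>\lambda\} \subseteq \TopKSet_K^*(x) \subseteq \{k: \eta_k(x)\ge\lambda\}$ holds almost everywhere; since $\Info(\TopKSet_K^*) = \Info(\AverageKSet_K^*) = K$, the penalty terms cancel, giving $\Error(\TopKSet_K^*) \le \Error(\AverageKSet_K^*)$, and the reverse inequality from \Cref{thm:adaptive-set-prediction-excess-risk} closes the loop. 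What your route buys is the complete avoidance of locating $\lambda_K$: no generalized-inverse manipulations, no right-continuity argument, and the separating $\lambda$ can be arbitrary. What the paper's route buys is a strictly stronger intermediate fact, namely that $\lambda_K = G_\eta^{-1}(K)$ is itself always a separating threshold (indeed the minimal one) whenever any exists; this sharper localization of the optimal threshold is useful information in its own right but is not needed for the equivalence.
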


Note that, in the absence of ties,  statement \ref{enum:null-regret} is equivalent to having top-$K$ and average-$K$ predict the same sets:
\begin{equation*}
  \TopKSet_K^*(X) = \AverageKSet_K^*(X) \text{ almost everywhere}.
\end{equation*}
Before providing the proof of \Cref{thm:characterization-uselessness-set-prediction} (in \Cref{sec:proof-characterization-uselessness-set-prediction}), we discuss in the next subsection its interpretation and consequences.

\subsection{Discussion of \Cref{thm:characterization-uselessness-set-prediction}: The Notion of Overlap of Ambiguity}

Statements \ref{enum:support-gap} and \ref{enum:support-overlap} of \Cref{thm:characterization-uselessness-set-prediction} highlight that the overlap (or its absence) between the cumulative density functions (hereafter called ``density overlap'' or just ``overlap'') of the different $\tilde{\eta}_k$ is an important characteristic of the problem.
In this subsection, we revisit the introductory examples of \Cref{sec:introductory-examples} and, based on those, show the kind of  ambiguity heterogeneity that  adaptive gain is related to.

\begin{figure}[t]
  \centering
  \begin{subfigure}{.45\textwidth}
    \centering
    \includegraphics[width=\textwidth]{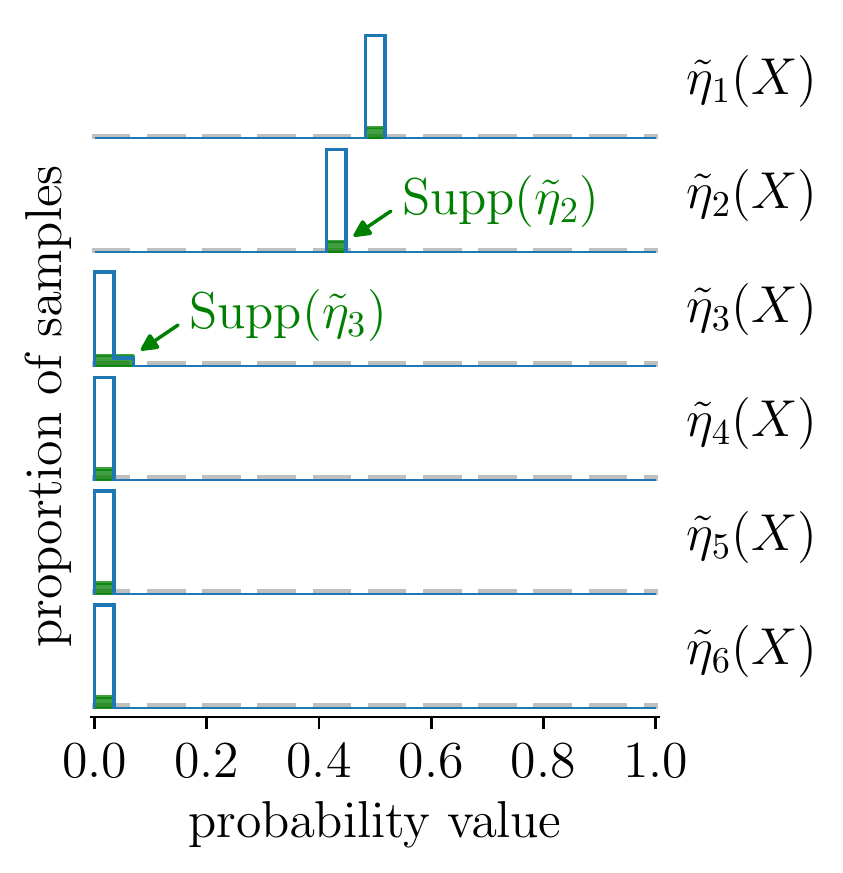}
    \caption{Example~1 analysis}
  \end{subfigure}
  \hfill
  \begin{subfigure}{.45\textwidth}
    \centering
    \includegraphics[width=\textwidth]{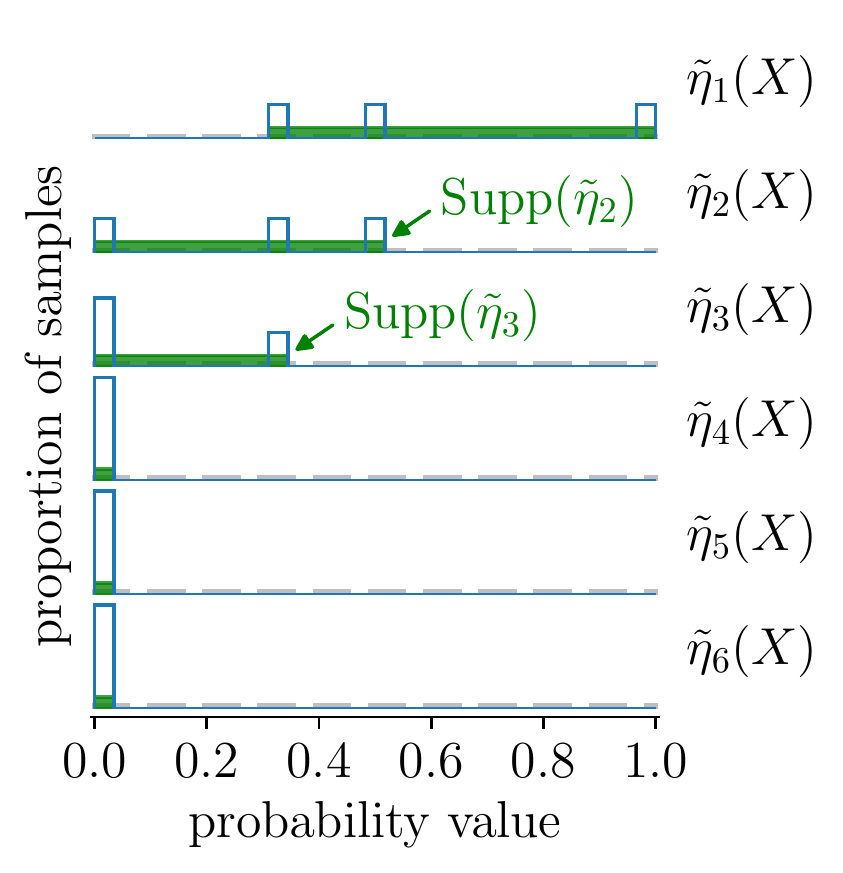}
    \caption{Example~2 analysis}
  \end{subfigure}
  \caption{
    Distribution of $\tilde{\eta}_k(X)$ for all $k$ for Example~1 and Example~2.
    The supports $\support{\tilde{\eta}_k}$ are shown in the bold green line segments in both cases.
    Focusing on $K=2$, there is no overlap between $\support{\tilde{\eta}_2}$ and $\support{\tilde{\eta}_3}$ in Example~1. By contrast, such an overlap exists in Example~2.
    For Example~2,  sometimes the second most likely class is roughly as likely as the third most likely class and sometimes their likelihoods are very different.
    When their likelihoods are similar, average-$2$ will return the top three classes.
    When the two highest probability classes have similar and relatively high probabilities, then average-$2$ will return only two classes.
    When the two lower probability classes have very small probabilities, then average-$2$ will return only one class.
  }
  \label{fig:ranked-cond-prob-distributions}
\end{figure}

\Cref{fig:ranked-cond-prob-distributions} shows the distribution of $\tilde{\eta}_k(X)$ for all $k$, both for Example~1 and Example~2.
In Example~1, where top-$K$ is optimal, there is no overlap in  of the bold green horizontal line segments among the non-zero probability supports. By contrast, in Example~2, the second most probable class can range from a probability of $0$ to $1/2$ and the third most probable class can range from a probability of $0$ to $1/3$ reflected in an overlap of the horizontal bold green line segments in \Cref{fig:ranked-cond-prob-distributions}.


As noted in the caption of \Cref{fig:ranked-cond-prob-distributions}, in Example~2, sometimes the second most likely class is roughly as likely as the third most likely class and sometimes their likelihoods are very different.
When their likelihoods are similar, average-$2$ will return the top three classes.
When the two lower probability classes have very small probabilities, then average-$2$ will return only one class.
Thus, average-$2$ gives fewer errors than top-$2$ for Example~2.

These simple examples illustrate that the notion of heterogeneity introduced in \Cref{thm:characterization-uselessness-set-prediction}, \ie the lack of overlap of the distributions of $\tilde{\eta}_K$ and $\tilde{\eta}_{K+1}$ characterizes cases when  top-$K$ is optimal.
By contrast and somewhat surprisingly, a measure of variability such as the variance, which would have been a natural candidate at first glance, is actually not the appropriate quantity to measure. Even with some variability of the $\tilde{\eta}_k$, top-$K$ can remain optimal.
We will quantify the exact gain of average-$K$ over top-$K$ in \Cref{sec:adaptive-svp-usefulness}.

\subsection{Proof of \Cref*{thm:characterization-uselessness-set-prediction}}
\label{sec:proof-characterization-uselessness-set-prediction}

Let us now prove \Cref{thm:characterization-uselessness-set-prediction}.
To do so, we will prove the different equivalences separately.

\begin{proof}[Proof of equivalence \ref*{enum:null-regret} $\Leftrightarrow$ \ref*{enum:support-gap}]
  We have the following equivalence:
  \begin{equation}
    \label{eq:null-regret-equivalence}
    \Delta_K = 0
    \; \Leftrightarrow \;
    \pr[X]{ \tilde{\eta}_K(X) \geq \lambda_K \geq \tilde{\eta}_{K+1}(X) } = 1 .
  \end{equation}
  From the expression of $\Delta_K$ in \Cref{eq:adaptive-gain}, we have
  \begin{align*}
    \Delta_K = 0
    &\Leftrightarrow \; \forall k, \; \left| \eta_k(X) - \lambda_K \right| \ind{k \in \TopKSet_K^*(X) \SymDiff \AverageKSet_K^*(X)} = 0 \quad a.e. \\
    &\Leftrightarrow \; \forall k, \; \left( k \notin \TopKSet_K^*(X) \SymDiff \AverageKSet_K^*(X) \text{ or } \eta_k(X) = \lambda_K \right) \quad a.e.
  \end{align*}
  By reordering by decreasing order of $\eta_k(X)$, we then have
  \begin{align*}
    \Delta_K = 0
    &\Leftrightarrow \; \forall k, \left( \left\{ \begin{aligned}
          &k \leq K \\
          &\tilde{\eta}_k(X) > \lambda_K
        \end{aligned} \right.
          \text{ or }
          \left\{ \begin{aligned}
              &k > K \\
              &\tilde{\eta}_k(X) < \lambda_K
            \end{aligned} \right.
                \text{ or } \tilde{\eta}_k(X) = \lambda_K \right) a.e. \\
    &\Leftrightarrow \; \forall k, \left( \left\{ \begin{aligned}
      &k \leq K \\
      &\tilde{\eta}_k(X) \geq \lambda_K
    \end{aligned} \right.
        \text{ or }
        \left\{ \begin{aligned}
            &k > K \\
            &\tilde{\eta}_k(X) \leq \lambda_K
          \end{aligned} \right.
              \right) a.e. \\
    &\Leftrightarrow \; \left( \tilde{\eta}_K(X) \geq \lambda_K \text{ and } \tilde{\eta}_{K+1}(X) \geq \lambda_K \right) a.e. \\
    &\Leftrightarrow \; \pr[X]{ \tilde{\eta}_K(X) \geq \lambda_K \geq \tilde{\eta}_{K+1}(X) } = 1 .
  \end{align*}
  Using this result of \Cref{eq:null-regret-equivalence}, the implication \ref*{enum:null-regret} $\Rightarrow$ \ref*{enum:support-gap} is direct as we can take $\lambda = \lambda_K$.

  To prove the converse \ref*{enum:support-gap} $\Rightarrow$ \ref*{enum:null-regret}, we denote $\Lambda$ the following set:
  \begin{equation*}
    \Lambda \eqdef \enscond{ \lambda \in [0,1] }{ \pr[X]{ \tilde{\eta}_{K+1}(X) \leq \lambda \leq \tilde{\eta}_K(X) } = 1 }.
  \end{equation*}
  Statement \ref*{enum:support-gap} implies that $\Lambda \neq \emptyset$.
  Moreover, due to the right-continuity of the CDF of $\tilde{\eta}_{K+1}(X)$, $\Lambda$ has a minimum which we denote $\min \Lambda$.

  Then, for all $\lambda \in \Lambda$, we have $G_\eta(\lambda) \leq K$ as 
  \begin{align*}
    \lambda \in \Lambda
    &\Rightarrow \; \left\{ \begin{aligned}
        &\pr[X]{ \tilde{\eta}_K(X) \geq \lambda } = 1 \\
        &\pr[X]{ \tilde{\eta}_{K+1}(X) > \lambda } = 0
      \end{aligned} \right. \\
    &\Rightarrow \; \left\{ \begin{aligned}
        &\forall k \leq K, \; \pr[X]{ \tilde{\eta}_k(X) \geq \lambda } = 1 \\
        &\forall k > K, \; \pr[X]{ \tilde{\eta}_k(X) > \lambda } = 0
      \end{aligned} \right. \\
    &\Rightarrow \; G_\eta(\lambda) = \sum_k \pr[X]{ \tilde{\eta}_k(X) > \lambda } \\
    &\phantom{\Rightarrow \; G_\eta(\lambda)} \leq \sum_{k \leq K} \pr[X]{ \tilde{\eta}_k(X) \geq \lambda } \\
    &\phantom{\Rightarrow \; G_\eta(\lambda)} = K .
  \end{align*}
  Thus, we have, by definition of $\lambda_K$, $\lambda_K \leq \min \Lambda$.

  Moreover, we have that
  \begin{align*}
    \lambda' < \min \Lambda
    &\Rightarrow \; \left\{ \begin{aligned}
        & \pr[X]{ \tilde{\eta}_K(X) > \lambda' } = 1 \text{ as } \lambda' < \lambda \in \Lambda \\
        & \pr[X]{ \tilde{\eta}_{K+1}(X) > \lambda' } > 0 \text{ as, otherwise, } \lambda' \in \Lambda
      \end{aligned} \right. \\
    &\Rightarrow \; \left\{ \begin{aligned}
        & \forall k \leq K, \pr[X]{ \tilde{\eta}_k(X) > \lambda' } = 1 \\
        & \pr[X]{ \tilde{\eta}_{K+1}(X) > \lambda' } > 0
      \end{aligned} \right. \\
    &\Rightarrow \; G_\eta(\lambda') \geq K + \pr[X]{ \tilde{\eta}_{K+1}(X) > \lambda' } > K .
  \end{align*}
  Thus, necessarily, $\lambda_K \geq \min \Lambda$ as we have $G_\eta(\lambda_K) \leq K$.

  We have shown that $\lambda_K = \min \Lambda$ and, as a result, $\lambda_K \in \Lambda$.
  We can then conclude the proof using the equivalence of \Cref{eq:null-regret-equivalence}.
\end{proof}

\begin{proof}[Proof of equivalence \ref*{enum:support-gap} $\Leftrightarrow$ \ref*{enum:support-overlap}]
  The preimages of the interval (0,1) under the CDFs are intervals, open, closed or half-open and can be reduced to the empty set or a singleton.
  The supports are thus either open intervals or empty sets.
  In general, they are open intervals and we denote them as
  \begin{equation*}
    \support{\tilde{\eta}_K} = (a_K,b_K)
    \quad \text{and} \quad
    \support{\tilde{\eta}_{K+1}} = (a_{K+1},b_{K+1})
  \end{equation*}
  with $a_K<b_K$ and $a_{K+1}<b_{K+1}$.
  In the case where $\support{\tilde{\eta}_K} = \emptyset$, all the distribution of $\tilde{\eta}_K$ is concentrated in a single point which we denote $a_K$.
  Similarly, in the case where $\support{\tilde{\eta}_{K+1}} = \emptyset$, all the distribution of $\tilde{\eta}_{K+1}$ is concentrated in a single point which we denote $b_{K+1}$.
  Using these notations, we then have
  \begin{equation*}
    \support{ \tilde{\eta}_K } \cap \support{ \tilde{\eta}_{K+1} } = \emptyset
    \quad \Leftrightarrow \quad
    b_{K+1} \leq a_K
  \end{equation*}
  
  Starting with the first implication, if such a $\lambda$ exists, it satisfies
  \begin{equation*}
    \left\{ \begin{aligned}
        &\pr[X]{ \tilde{\eta}_K(X) \geq \lambda } = 1 \\
        &\pr[X]{ \tilde{\eta}_{K+1}(X) > \lambda } = 0
      \end{aligned} \right.
    \quad \Rightarrow \quad
    \left\{ \begin{aligned}
        & \lambda \leq a_K \\
        & \lambda \geq b_{K+1}
      \end{aligned} \right.
    \quad \Rightarrow \quad
    b_{K+1} \leq a_K .
  \end{equation*}

  Conversely, starting from $b_{K+1} \leq a_K$, we take $\lambda = a_K$.
  We then have
  \begin{equation*}
    \pr[X]{ \tilde{\eta}_K(X) < \lambda } = F_{\tilde{\eta}_K}(a_K) - \pr[X]{ \tilde{\eta}_K(X) = a_K } = 0
    \; \Rightarrow \;
    \pr[X]{ \tilde{\eta}_K(X) \geq \lambda } = 1
  \end{equation*}
  and
  \begin{equation*}
    F_{\tilde{\eta}_{K+1}}(\lambda) = \pr[X]{ \tilde{\eta}_{K+1}(X) \leq \lambda } = 1
  \end{equation*}
  From that, we can conclude
  \begin{equation*}
    \pr[X]{ \tilde{\eta}_{K+1}(X) \leq \lambda \leq \tilde{\eta}_K(X) } = \pr[X]{ \tilde{\eta}_{K+1}(X) \leq \lambda } \pr[X]{ \tilde{\eta}_K(X) \geq \lambda } = 1.
  \end{equation*}
\end{proof}

\section{Quantifying the Usefulness of Average-$K$}
\label{sec:adaptive-svp-usefulness}

This section quantifies how much we can reduce the error rate using the  average-$K$ strategy for some problems compared with top-$K$.
That is the \emph{adaptive gain}.
For this, we will quantify the heterogeneity of the ambiguity of the problem using an interpretable quantity, the \emph{straddle strength}, expressed in terms of $\Pr_{X,Y}$ (thus $\Pr_X$ and $\eta$).

\subsection{Illustrative Examples}

Before stating the theoretical result, we introduce two variations of Example~2 in which the top-2 error rate does not change but the average-2 error rate, instead of going to $0$, is nearly the same as the top-2 rate.
Recall that, in Example~2, there are three zones: zone~1 with class $A$ unambiguous, zone~2 with class $B$ and $C$ equally ambiguous, and zone~3 with class $D, E$, and $F$ equally ambiguous.
Each of these zones is equally probable, their weight are equal to $w_1=w_2=w_3=1/3$.

\begin{figure}[t]
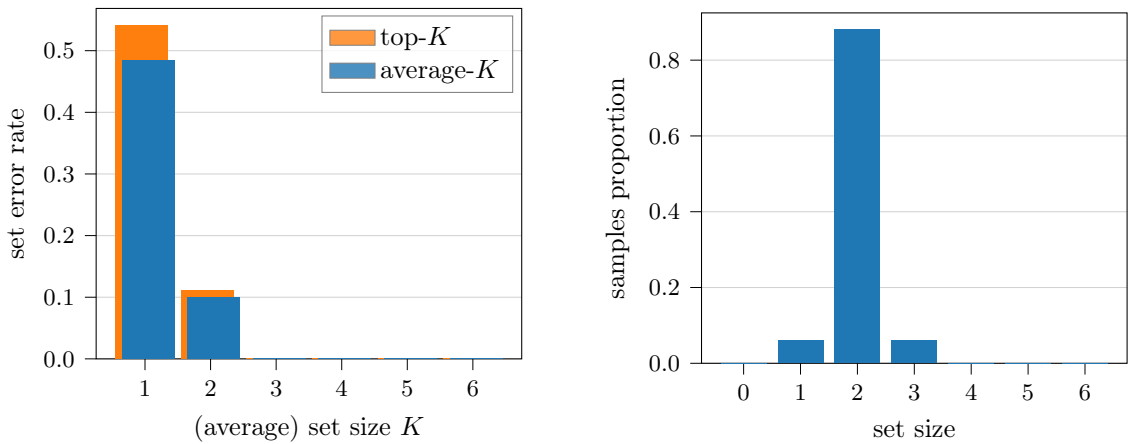

  \centering
  \begin{subfigure}{.475\textwidth}
    \centering
    \input{plots/introductory_examples/ex3_error_rate_reduction_bar_plot.pgf}%
  \end{subfigure}
  \hfill
  \begin{subfigure}{.475\textwidth}
    \centering
    \input{plots/introductory_examples/ex3_set_size_distribution_K_2.pgf}%
  \end{subfigure}
  \caption{
    Example~3, a slight modification of Example~2 where, instead of having three zones of equal weight ($w_1 = w_2 = w_3 = 1/3$), these weights are adapted in order to keep $w_3=1/3$ but to unbalance $w_1$ and $w_2$ such to have a small $w_1 = 3/100$ and a large $w_2 = 2/3 - 3/100$.
    The left figure plots the error rate of top-$K$ and average-$K$ for different $K$ values.
    The right figure shows the distribution of the predicted set size of average-$K$ strategy for $K=2$.
    In this case, average-$2$ predicts the same sets as top-$2$ for the vast majority of inputs.
    See the text for more details.
  }
  \label{fig:ex3-error-rate}
\end{figure}

First, we derive Example~3 for which these weights have been adapted.
In particular, we keep $w_3=1/3$ to preserve the top-2 error rate.
$w_1$ and $w_2$ are however unbalanced: $w_1 = 3/100$ and $w_2 = 2/3 - 3/100$.
In this case, average-$2$ can  improve upon top-$2$ for some input samples, but those inputs occur rarely.
In \Cref{fig:ex3-error-rate}, we can see that, in most cases, average-2 predicts the same sets as top-2, thus average-2 has a very small adaptive gain over top-2. 

\begin{figure}[p]
  \centering
  \begin{subfigure}{.15\textwidth}
    \foreach \i in {1,2,3} {
      \begin{subfigure}{\textwidth}
        \centering
        \includegraphics[width=\textwidth]{imgs/introductory_examples/ex4_\i} \\
      \end{subfigure}
    }
    \begin{equation*}
      \vdots
    \end{equation*}
    \begin{subfigure}{\textwidth}
      \centering
      \includegraphics[width=\textwidth]{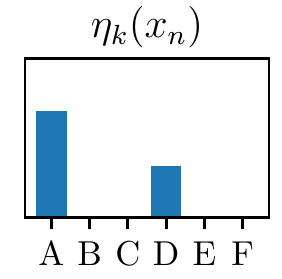} \\
    \end{subfigure}
  \end{subfigure}
  \hfill
  \begin{subfigure}{.15\textwidth}
    \foreach \i in {1,2,3} {
      \begin{subfigure}{\textwidth}
        \centering
        \includegraphics[width=\textwidth]{imgs/introductory_examples/ex4_\i_ordered} \\
      \end{subfigure}
    }
    \begin{equation*}
      \vdots
    \end{equation*}
    \begin{subfigure}{\textwidth}
      \centering
      \includegraphics[width=\textwidth]{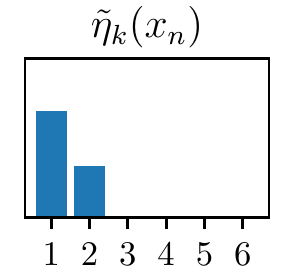} \\
    \end{subfigure}
  \end{subfigure}
  \hfill
  \begin{subfigure}{.5\textwidth}
    \begin{subfigure}{\textwidth}
      \centering
      \input{plots/introductory_examples/ex4_error_rate_reduction_bar_plot.pgf}%
    \end{subfigure}
    \newline
    \vspace{.5cm}
    \newline
    \begin{subfigure}{\textwidth}
      \centering
      \input{plots/introductory_examples/ex4_set_size_distribution_K_2.pgf}%
    \end{subfigure}
  \end{subfigure}
  \caption{
    Illustration of Example~4 which is a modification of Example~2  where class $A$ is dominant, but, instead of being unambiguous, it has a probability close to $2/3$ while one of the other classes has a probability around $1/3$ as shown in the  the first and last rows.
    The top-right figure plots the error rate for different $K$ values for top-$K$ and average-$K$.
    The bottom-right figure shows the distribution of the predicted set size of average-$K$ strategy for $K=2$.
    In this case, the sets predicted by average-$2$ are exactly the same as in Example~2, however, here, they reduce the error rate only marginally compared with top-$K$.
    See the text for more details.
  }
  \label{fig:example4}
\end{figure}

The second variant, Example~4, is shown in \Cref{fig:example4}.
It modifies zone~1 where class $A$ is dominant as follows: class $A$ has a probability close to $2/3$ and one other class has the rest of the probability, \ie a little less than $1/3$.
This does not change the top-2 error rate.
The weights $w_1,w_2,w_3$ are kept equal to $1/3$.
In this case, as can be seen in \Cref{fig:example4}, average-$2$ often predicts set sizes similar to those of  Example~2.
The net result, however, is that the adaptive gain is small.

In the following section, we provide a theoretical result explaining this behavior.

\subsection{A Lower Bound on the Adaptive Gain}

\Cref{thm:regret-lower-bound} gives a lower bound on the adaptive gain $\Delta_K$.
It is based on the \emph{straddle strength} which is a quantified version of the overlap of distributions of $\tilde{\eta}_k$ studied previously.
It measures the heterogeneity of the ambiguity by quantifying how much the level of ambiguity can differ for different inputs.

\begin{definition}[Straddle strength]
  \label{def:heterogeneity-measure}
  For a fixed $K$, the \emph{straddle strength of order $k$} denoted $d_{K,k}$ is defined by
  \begin{equation*}
    d_{K,k} \eqdef \E[X,X']{ \left( \tilde{\eta}_{K+k}(X) - \tilde{\eta}_{K+1-k}(X') \right)^+ } ,
  \end{equation*}
  where $a^+ = \max (a,0)$ is the positive part function.
\end{definition}

Note that the difference is computed between conditional probabilities for different $X$ and $X'$ (otherwise, these quantities would always equal  zero as, by definition, for $k_1 < k_2$, $\tilde{\eta}_{k_1}(X) \geq \tilde{\eta}_{k_2}(X)$).
The straddle strength of order $1$, $d_{K,1}$, is equal to
\begin{equation*}
  d_{K,1} \eqdef \E[X,X']{ \left( \tilde{\eta}_{K+1}(X) - \tilde{\eta}_K(X') \right)^+ } ,
\end{equation*}
which, using conditional expectations, can be rewritten as
\begin{align*}
  d_{K,1}
  &= \pr[X,X']{ \tilde{\eta}_{K+1}(X) > \tilde{\eta}_K(X') } \\
  & \qquad \times
    \Econd[X,X']{ \tilde{\eta}_{K+1}(X) - \tilde{\eta}_K(X') }{ \tilde{\eta}_{K+1}(X) > \tilde{\eta}_K(X') } .
\end{align*}
These quantities measure a form of \emph{heterogeneity} of the ambiguity: they quantify how much the level of ambiguity can be different for different inputs.
The first term quantifies how often this occurs (the \emph{weight}). The second term quantifies the strength of this difference (the \emph{magnitude}).

The straddle strengths of higher order, $d_{K,k}$, then generalize this idea and quantify overlaps between  conditional probabilities farther from $K$, \eg for order 2, it compares $\tilde{\eta}_{K+2}$ and $\tilde{\eta}_{K-1}$, etc.

Before stating the main result, we discuss this notion of straddle strength by computing it for the illustrative examples.
In those examples, the three zones are disjoint, the expectation of \Cref{def:heterogeneity-measure} can be reduced to the following weighted sum:
\begin{equation*}
  d_{2,1} = \sum_{i=1}^3 \sum_{j=1}^3 w_i w_j \left( \tilde{\eta}_3^{(i)} - \tilde{\eta}_2^{(j)} \right)^+,
\end{equation*}
where $\tilde{\eta}_3^{(i)}$ (resp. $\tilde{\eta}_2^{(j)}$) is the constant value of $\tilde{\eta}_3(X)$ in zone $i$ (resp. of $\tilde{\eta}_2(X)$ in zone $j$).
Every term of this sum is zero except when $i=3$ and $j=1$.
It is thus equal to
\begin{equation*}
  d_{2,1} = \underbrace{ \vphantom{\tilde{\eta}_3^{(3)}} w_1 w_3}_{\text{weight}} \times \underbrace{ ( \tilde{\eta}_3^{(3)} - \tilde{\eta}_2^{(1)} ) }_{\text{magnitude } \delta \geq 0}.
\end{equation*}
This decomposition highlights the general idea of $d_{K,1}$: it captures both how often  $\tilde{\eta}_{K+1}$ has a higher value than $\tilde{\eta}_K$ for different inputs (weight) and, when it occurs, the magnitude of the difference.
In the two variations of Example~2, \ie Example~3 and~4, one of these quantities at a time is changed: in Example~3 the weight is reduced (because there are fewer unambiguous input samples to balance the 3-way ambiguous inputs) whereas in Example~4 it is the magnitude (because the formerly unambiguous samples are now partly ambiguous).
\Cref{tab:heterogeneity-measure-table} provides the detail of these quantities.

\begin{table}[t]
  \centering
  \begin{tabular}{lccccc}
  \toprule
  & \multicolumn{2}{c}{error rates} & \multicolumn{3}{c}{heterogeneity measure} \\
  \cmidrule(r){2-3} \cmidrule(r){4-6}
  \textbf{Ex. \#} & top-$2$ & avg-$2$ &    weight $w_1 w_3$ &  magnitude $\delta$ &  measure $d_{2,1}$ \\
  \midrule
  \textbf{Ex. 2} & 11\% & 0\% & $1/9$ &   $1/3$ &              $1/27 \approx 0.037$ \\
  \textbf{Ex. 3} & 11\% & 10\% & $\epsilon$ &             $1/3$ &              $\epsilon/3$ \\
  \textbf{Ex. 4} & 11\% & 11\% & $1/9$ &              $\epsilon$ &  $\epsilon/9$ \\
  \bottomrule
\end{tabular}

  \caption{
    Decomposition of the heterogeneity measure for the given examples.
    We can build examples with $\epsilon$ arbitrarily small, in Examples 3 and 4, we took $\epsilon = 0.01$.
  }
  \label{tab:heterogeneity-measure-table}
\end{table}

These examples show that  weight and magnitude are two independent components of the heterogeneity which have a great impact on the gain of average-$K$ over top-$K$ (adaptive gain).
\Cref{thm:regret-lower-bound} shows that this is true for all problems by providing a lower bound on the adaptive gain $\Delta_K$.
This lower bound is not tight in general, \ie the straddle strength value can be much smaller than the final value of the adaptive gain (\eg see Example~2 in \Cref{tab:heterogeneity-measure-table}).
However, this result and its analysis provide insights into which problems would benefit most from the average-$K$ strategy.

\begin{proposition}
  \label{thm:regret-lower-bound}
  
  For $K < C$, where $C$ is the number of classes, we have that
  \begin{equation*}
    \Delta_K \geq d_{K,1} = \E[X,X']{ \left( \tilde{\eta}_{K+1}(X) - \tilde{\eta}_K(X') \right)^+ },
  \end{equation*}
  and, more tightly, for $K \leq \frac{C}{2}$, 
  \begin{equation*}
    \Delta_K \geq \sum_{k=1}^K d_{K,k} = \sum_{k=1}^K \E[X,X']{ \left( \tilde{\eta}_{K+k}(X) - \tilde{\eta}_{K+1-k}(X') \right)^+ }.
  \end{equation*}
\end{proposition}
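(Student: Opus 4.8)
The plan is to reduce the statement to an elementary scalar inequality applied across two independent copies of the input. First I would re-express the adaptive gain of \Cref{eq:adaptive-gain} in terms of the ranked probabilities $\tilde{\eta}_j$ and the threshold $\lambda_K$. Reasoning on the order statistics, a class of rank $j \le K$ lies in $\TopKSet_K^*(x) \SymDiff \AverageKSet_K^*(x)$ exactly when the threshold drops it, contributing $|\tilde{\eta}_j(x) - \lambda_K| = (\lambda_K - \tilde{\eta}_j(x))^+$, while a class of rank $j > K$ contributes $(\tilde{\eta}_j(x) - \lambda_K)^+$ exactly when the threshold keeps it. Classes with $\eta_k(x) = \lambda_K$ contribute nothing because of the factor $|\eta_k(x) - \lambda_K| = 0$, so the tie-breaking classifier $\widetilde{\AverageKSet}_K^=$ is irrelevant and, with or without ties,
\begin{equation*}
  \Delta_K = \E[X]{ \sum_{j=1}^K \left( \lambda_K - \tilde{\eta}_j(X) \right)^+ + \sum_{j=K+1}^C \left( \tilde{\eta}_j(X) - \lambda_K \right)^+ } .
\end{equation*}

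The crux is then the scalar inequality $(a - \lambda)^+ + (\lambda - b)^+ \ge (a - b)^+$, valid for all reals $a,b,\lambda$ because the left-hand side is nonnegative and also dominates $(a-\lambda)+(\lambda-b) = a-b$. For the first bound ($K<C$), I would discard all summands above except $j=K$ in the first sum and $j=K+1$ in the second (legitimate as every term is nonnegative and $K+1 \le C$), obtaining $\Delta_K \ge \E[X]{(\lambda_K - \tilde{\eta}_K(X))^+} + \E[X]{(\tilde{\eta}_{K+1}(X) - \lambda_K)^+}$. Rewriting the first expectation over an independent copy $X'$, applying the scalar inequality with $a=\tilde{\eta}_{K+1}(X)$, $b=\tilde{\eta}_K(X')$, $\lambda=\lambda_K$, and integrating against the product measure gives $\Delta_K \ge \E[X,X']{(\tilde{\eta}_{K+1}(X) - \tilde{\eta}_K(X'))^+} = d_{K,1}$.

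For the tighter bound ($K \le C/2$) I would instead pair, for each $k \in \ens{1,\dots,K}$, the rank-$(K+1-k)$ term of the first sum with the rank-$(K+k)$ term of the second sum; this pairing is well-defined precisely because $K+k \le 2K \le C$, which is where the hypothesis $K \le C/2$ is used. Discarding the remaining nonnegative terms (ranks $j>2K$), applying the scalar inequality termwise with $a=\tilde{\eta}_{K+k}(X)$, $b=\tilde{\eta}_{K+1-k}(X')$, $\lambda=\lambda_K$, and summing over $k$ yields $\Delta_K \ge \sum_{k=1}^K d_{K,k}$.

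The main obstacle I anticipate is the very first rewrite: justifying it rigorously when ties are present, since then the ranking permutation $\sigma_x$ is not unique and $\AverageKSet_K^*$ carries the extra tie-breaking term. The delicate cases are when $\tilde{\eta}_K$ and $\tilde{\eta}_{K+1}$ coincide and straddle (or sit exactly at) $\lambda_K$; one must check each such case separately to confirm the contribution matches the corresponding positive part. Once this identity is established, the remainder is the scalar inequality together with a tensorization argument over the independent pair $(X,X')$ and routine index bookkeeping.
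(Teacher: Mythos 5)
Your proposal is correct and follows essentially the same route as the paper's proof: rewrite $\Delta_K$ as $\E[X]{\sum_{j\leq K}(\lambda_K-\tilde{\eta}_j(X))^+ + \sum_{j>K}(\tilde{\eta}_j(X)-\lambda_K)^+}$, drop nonnegative terms (keeping ranks $K,K+1$ for the first bound, ranks $1,\dots,2K$ paired as $(K+1-k, K+k)$ for the second), introduce an independent copy $X'$, and apply the scalar inequality $(a-\lambda)^+ + (\lambda-b)^+ \geq (a-b)^+$. Your explicit verification that ties at $\lambda_K$ contribute zero—so the tie-breaking component of $\AverageKSet_K^*$ is irrelevant to the identity—is a point the paper's proof passes over silently, but it is the same argument.
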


\begin{proof}
  Using \Cref{eq:adaptive-gain}, we can rewrite the adaptive gain $\Delta_K$ as
  \begin{align*}
    \Delta_K
    &= \sum_k \E[X]{ \left| \eta_k(X) - \lambda_K \right| \ind{k \in \TopKSet_K^*(X) \SymDiff \AverageKSet_K^*(X)} } \\
    &= \sum_{k \leq K} \E[X]{ ( \lambda_K - \tilde{\eta}_k(X) ) \ind{\tilde{\eta}_k(X) < \lambda_K} } \\
    & \quad + \sum_{k > K} \E[X]{ ( \tilde{\eta}_k(X) - \lambda_K ) \ind{\tilde{\eta}_k(X) > \lambda_K} } \\
    &= \E[X]{ \sum_{k \leq K} ( \lambda_K - \tilde{\eta}_k(X) )^+ + \sum_{k > K} ( \tilde{\eta}_k(X) - \lambda_K )^+}
  \end{align*}
  where $(a)^+ = \max(a,0)$ is the positive part.

  From this, we can prove the first lower bound using
  \begin{align*}
    \Delta_K
    &\geq \E[X]{ ( \lambda_K - \tilde{\eta}_K(X) )^+ + ( \tilde{\eta}_{K+1}(X) - \lambda_K )^+ } \\
    &= \E[X,X']{ ( \lambda_K - \tilde{\eta}_K(X') )^+ + ( \tilde{\eta}_{K+1}(X) - \lambda_K )^+ } \\
    &\geq \E[X,X']{ ( \tilde{\eta}_{K+1}(X) - \tilde{\eta}_K(X') )^+ }.
  \end{align*}
  The last lower bound comes from the fact that, for any real numbers $a$, $b$ and $c$,
  \begin{equation*}
    (a-b)^+ + (b-c)^+ \geq \left( (a-b) + (b-c) \right)^+ = (a-c)^+.
  \end{equation*}
  
  For the second lower bound of the proposition, we use a tighter lower bound of the sum:
  \begin{align*}
    \Delta_K
    &= \E[X]{ \sum_{k \leq K} ( \lambda_K - \tilde{\eta}_k(X) )^+ + \sum_{k > K} ( \tilde{\eta}_k(X) - \lambda_K )^+} \\
    &\geq \E[X]{ \sum_{k=1}^K ( \lambda_K - \tilde{\eta}_k(X) )^+ + \sum_{k=K+1}^{2K} ( \tilde{\eta}_k(X) - \lambda_K )^+ } \\
    &= \sum_{k=1}^K \E[X,X']{ ( \lambda_K - \tilde{\eta}_{K+1-k}(X') )^+ + ( \tilde{\eta}_{K+k}(X) - \lambda_K )^+ } \\
    &\geq \sum_{k=1}^K \E[X,X']{ ( \tilde{\eta}_{K+k}(X) - \tilde{\eta}_{K+1-k}(X') )^+ }
  \end{align*}
  using the same lower bound as previously.
\end{proof}

\section{Consistent Estimation Procedures}
\label{sec:estimation-procedures}

In this section, we will study how to estimate the top-$K$ and average-$K$ predictors developed in \Cref{sec:preliminaries}.
The aim here is not to provide the best possible estimators but rather to give general principled estimation procedures which are consistent and that will allow us to carry out the experiments of \Cref{sec:experiments}.
In particular, the estimators we derive here are based on the optimal rules defined in \Cref{eq:optimal-top-k-classifier,eq:optimal-adaptive-top-k-classifier}.
To apply these rules, we need an estimator $\hat{\eta}$ of the conditional probability $\eta$ and directly plug it into the optimal rules.
Such estimators are called \emph{plug-in rules}.

In the rest of this section, we will be interested in the \emph{regret} of risks which is a way to quantify how sub-optimal are our estimators.
In particular, the \emph{top-$K$ regret} is defined as
\begin{equation*}
  \RegretFixed(\hat{\TopKSet}_K) \eqdef \Error(\hat{\TopKSet}_K) - \Error(\TopKSet_K^*),
\end{equation*}
where $\hat{\TopKSet}_K: \X \to \powerset{\Y}$ is an estimated top-$K$ classifier, \ie $\forall x, |\hat{\TopKSet}_K(x)| = K$, and $\TopKSet_K^*$ is the optimal top-$K$ classifier of \Cref{eq:optimal-top-k-classifier}.
Similarly, the \emph{average-$\cK$ regret} is defined as
\begin{equation*}
  \RegretAdapt(\hat{\AverageKSet}_\cK) \eqdef \Error(\hat{\AverageKSet}_\cK) - \Error(\AverageKSet_\cK^*),
\end{equation*}
\sloppy
where $\hat{\AverageKSet}_\cK: \X \to \powerset{\Y}$ is an estimated average-$\cK$ classifier, \ie ${\E[X]{| \hat{\AverageKSet}_\cK(X)| } = \cK}$, and $\AverageKSet_\cK^*$ is the optimal average-$K$ classifier of \Cref{eq:optimal-adaptive-top-k-classifier}.
In our case, $\hat{\TopKSet}_K$ and $\hat{\AverageKSet}_\cK$ are built from $\hat{\eta}$ and their exact definition are given in \Cref{sec:plugin-estimators}.

In general, it is hard to directly optimize these losses ($\Error(\hat{\TopKSet}_K)$ and $\Error(\hat{\AverageKSet}_\cK)$).
It is thus common to use a \emph{surrogate loss} $l$ which is designed to have nicer optimization properties (convexity for instance).
In this paper, we will focus on \emph{conditional probability estimation} (CPE) losses, that is to say losses of the form $l: \Y \times \ProbSimplex{C} \to \R^+$ where $\ProbSimplex{C}$ is the probability simplex of dimension $C$, \ie $\ProbSimplex{C} = \enscond{ p \in \R^C }{ \forall k, \, p_k \geq 0, \, \sum_k p_k = 1 }$.
In this case, its regret is defined as
\begin{equation*}
  \regret_l(\hat{\eta}) \eqdef \E[X,Y]{ l(Y, \hat{\eta}(X)) } - \inf_{\hat{\eta}'} \E[X,Y]{ l(Y, \hat{\eta}'(X)) }.
\end{equation*}
One important question when using surrogate losses is to understand how optimizing this loss will eventually minimize the losses that really interest us, \ie top-$K$ and average-$\cK$ errors ($\Error(\hat{\TopKSet}_K)$ and $\Error(\hat{\AverageKSet}_\cK)$).
In particular, there are two simple properties that one would want to be satisfied in general by such losses: \emph{calibration} and \emph{consistency} \citep{Bartlett2006,Tewari2007}.

\begin{definition}[Calibration]
  A loss is said to be \emph{calibrated} for a task if it satisfies:
  \begin{equation*}
    \regret_l(\hat{\eta}) = 0 \quad \Rightarrow \quad \regret_{\mathrm{task}}(\hat{S}) = 0,
  \end{equation*}
  \ie if the minimizers of the surrogate losses also minimize the original loss.
\end{definition}

\begin{definition}[Consistency]
  A loss is said to be \emph{consistent} if, given a sequence of estimators $(\hat{\eta}_n)$ from which we build a sequence of set classifiers $(\hat{S}_n)$, it satisfies:
  \begin{equation*}
    \regret_l(\hat{\eta}_n) \to 0 \quad \Rightarrow \quad \regret_{\mathrm{task}}(\hat{S}_n) \to 0,
  \end{equation*}
  \ie if, given a sequence of estimators converging to the minimizer of the surrogate loss, that sequence of estimators will lead to the construction of optimal set classifiers.
\end{definition}

Typically, this sequence of estimators is indexed by the training set size.
Ideally, we want to have an estimator which regret $\regret_l(\hat{\eta}_n)$ goes to zero when the sample size goes to infinity.
This consistency property will guarantee that, if we are able to build such an estimator $\hat{\eta}_n$, the resulting set classifier built from it will converge towards the optimal set classifier.

These two properties are, in general, not equivalent.
Indeed, although they are known to be equivalent in the binary classification case \citep{Bartlett2006}, they may not be equivalent  in the multi-class case \citep{Tewari2007}.
There is thus no reason to believe these properties are equivalent in the problems that interest us here.
For top-$K$ classification, \citet{Lapin2016,Lapin2017} proposed some losses which are calibrated for top-$K$ classification, but \citet{Yang2020} showed  that some of their losses were not consistent.
For average-$K$ classification, \citet{Denis2017} showed that losses of a certain form are indeed consistent but their result is not applicable to the negative log-likelihood loss, which is by far the most widely used loss in classification with neural networks.

The main aim of this section is to provide consistency results for a special class of losses (strongly proper losses) which includes several widely used losses in practice and in particular the negative log-likelihood.
The section is organized as follows.

We first define the plug-in estimators for top-$K$ and average-$K$ estimation and show that they are consistent given consistent estimators of the conditional probability $\eta(x)$.
Then, we study \emph{proper losses} and a subclass of such losses called \emph{strongly proper losses} which provides a way to build such estimators of $\eta(x)$, we also show that several widely used losses---such as negative log-likelihood---actually have such a property.
Note that these results on strongly proper losses, presented in \Cref{sec:strongly-proper-losses}, are novel and of independent interest.
In \Cref{sec:experiments}, we will discuss some good practices and show how these results apply in practice when training neural networks.

\subsection{Plug-in Estimators and Their Regret Bounds}
\label{sec:plugin-estimators}

The plug-in rule for the top-$K$ classifier is given by
\begin{equation}
  \label{eq:top-k-plugin-estimator}
  \hat{\TopKSet}_K(x) \eqdef \enscond{ \hat{\sigma}_x(k) }{ k \in \ens{1,\dots,K} }
\end{equation}
where $\hat{\sigma}_x$ is a permutation of $\ens{1,\dots,C}$ such that $\hat{\eta}_{\hat{\sigma}_x(1)}(x) \geq \hat{\eta}_{\hat{\sigma}_x(2)} (x) \geq \ldots \geq \hat{\eta}_{\hat{\sigma}_x(C)}(x)$.
For the adaptive strategy, we need to apply a thresholding resulting in the following plug-in estimator:
\begin{equation}
  \label{eq:adaptive-top-k-plugin-estimator}
  \hat{\AverageKSet}_\cK (x) \eqdef \hat{\AverageKSet}_\cK^+(x) \cup \widetilde{\hat{\AverageKSet}}_\cK^{=}(x),
\end{equation}
where
\begin{equation*}
  \hat{\AverageKSet}_\cK^+(x) = \enscond{ k \in \Y }{ \hat{\eta}_k(x) > G_{\hat{\eta}}^{-1}(\cK) } ,
\end{equation*}
and $\widetilde{\hat{\AverageKSet}}_\cK^{=}$ is any deterministic classifier predicting a subset of the labels produced by $\hat{\AverageKSet}_\cK^{=}$ defined as
\begin{equation*}
  \hat{\AverageKSet}_\cK^{=}(x) = \enscond{ k \in \Y }{ \hat{\eta}_k(x) = G_{\hat{\eta}}^{-1}(\cK) } ,
\end{equation*}
such that $\Info(\widetilde{\hat{\AverageKSet}}_\cK^{=}) = \cK - \Info(\hat{\AverageKSet}_\cK^+)$.
Such a deterministic estimated classifier exists under the same conditions as the existence of a deterministic optimal classifier $\widetilde{\AverageKSet}_\cK^{=}$, \eg under the conditions of \Cref{thm:existence-deterministic-classifier}.
Here, $G_{\hat{\eta}}$ and its generalized inverse are defined similarly to $G_\eta$ and $G_\eta^{-1}$ (see \Crefparenthesis{eq:G-function,eq:G-inverse-function}):
\begin{align*}
  G_{\hat{\eta}}(\lambda) &\eqdef \sum_k \pr[X]{ \hat{\eta}_k(X) > \lambda }, \\
  G_{\hat{\eta}}^{-1}(\cK) &\eqdef \min \enscond{ \lambda \in [0,1] }{ G_{\hat{\eta}}(\lambda) \leq \cK } .
\end{align*}

Note that here the threshold $G_{\hat{\eta}}^{-1}(\cK)$ is not the same as for the optimal rule, \ie in general $G_{\hat{\eta}}^{-1}(\cK) \neq G_{\eta}^{-1}(\cK)$.
Indeed, as we want to preserve the constraint on the average set size, \ie that $\Info(\hat{\AverageKSet}_\cK) = \Info(\AverageKSet_\cK^*) = \cK$, then we need to adapt the threshold on $\hat{\eta}$.

The following two theorems give upper bounds on the regret of top-$K$ and average-$K$ errors when using the plug-in estimators previously defined.
These bounds are expressed in terms of the estimation error of $\hat{\eta}$ as measured by $\E[X]{ \| \eta(X) - \hat{\eta}(X) \|_1 }$ and thus show that improving the estimation of $\eta$ by $\hat{\eta}$ has the effect of improving  the  top-$K$ and average-$K$ classifiers.

\begin{theorem}
  \label{thm:plugin-regret-bounds-top-k}

  Let $K \in \{1, \dots,C \}$ be a fixed set size.
  Given an estimator $\hat{\eta}$ of $\eta$, the associated top-$K$ plug-in estimator $\hat{\TopKSet}_K$ as defined in \Cref{eq:top-k-plugin-estimator} has the following regret bound,
  \begin{equation*}
    \RegretFixed(\hat{\TopKSet}_K) \leq \E[X]{ \| \eta(X) - \hat{\eta}(X) \|_1 }.
  \end{equation*}
\end{theorem}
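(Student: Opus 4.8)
The plan is to bound the regret pointwise and then integrate over $X$. First I would rewrite the set error rate as $\Error(\TopKSet) = \E[X]{1 - \sum_{k \in \TopKSet(X)} \eta_k(X)}$, so that the regret integrand at a fixed $x$ is
\[
  \sum_{k \in \TopKSet_K^*(x)} \eta_k(x) - \sum_{k \in \hat{\TopKSet}_K(x)} \eta_k(x),
\]
which is nonnegative because $\TopKSet_K^*(x)$ collects the $K$ largest coordinates of $\eta(x)$. Writing $S^* = \TopKSet_K^*(x)$ and $\hat{S} = \hat{\TopKSet}_K(x)$ and cancelling their shared elements, this equals $\sum_{k \in S^* \setminus \hat{S}} \eta_k(x) - \sum_{j \in \hat{S} \setminus S^*} \eta_j(x)$.

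The key structural fact is that both sets have exactly $K$ elements, hence $|S^* \setminus \hat{S}| = |\hat{S} \setminus S^*| = m$, which lets me fix any bijection pairing $S^* \setminus \hat{S} = \ens{k_1,\dots,k_m}$ with $\hat{S} \setminus S^* = \ens{j_1,\dots,j_m}$. For each pair I would invoke two orderings: $\eta_{k_i}(x) \geq \eta_{j_i}(x)$ (because $k_i \in S^*$ while $j_i \notin S^*$, and $S^*$ is top-$K$ for $\eta$) and $\hat{\eta}_{j_i}(x) \geq \hat{\eta}_{k_i}(x)$ (because $j_i \in \hat{S}$ while $k_i \notin \hat{S}$, and $\hat{S}$ is top-$K$ for $\hat{\eta}$). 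Inserting and subtracting $\hat{\eta}$ gives
\[
  \eta_{k_i}(x) - \eta_{j_i}(x) = \big(\eta_{k_i}(x) - \hat{\eta}_{k_i}(x)\big) + \big(\hat{\eta}_{k_i}(x) - \hat{\eta}_{j_i}(x)\big) + \big(\hat{\eta}_{j_i}(x) - \eta_{j_i}(x)\big),
\]
where the middle term is $\leq 0$ by the second ordering, so each pair is bounded by $|\eta_{k_i}(x) - \hat{\eta}_{k_i}(x)| + |\eta_{j_i}(x) - \hat{\eta}_{j_i}(x)|$.

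Summing over the $m$ pairs and noting that $S^* \setminus \hat{S}$ and $\hat{S} \setminus S^*$ are disjoint index sets whose union is $S^* \SymDiff \hat{S}$, the pointwise regret is at most $\sum_{k \in S^* \SymDiff \hat{S}} |\eta_k(x) - \hat{\eta}_k(x)| \leq \| \eta(x) - \hat{\eta}(x) \|_1$. Taking $\E[X]{\cdot}$ yields the claimed bound. The only delicate point is ties: when coordinates of $\eta(x)$ or $\hat{\eta}(x)$ coincide, the sets $S^*$ and $\hat{S}$ depend on the tie-breaking permutations $\sigma_x$ and $\hat{\sigma}_x$, but the two ordering inequalities hold for \emph{any} valid top-$K$ selection, so the pairing argument is unaffected. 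I expect this pairing step, rather than any analytic estimate, to be the crux; the rest is bookkeeping, and the bound is independent of $K$.
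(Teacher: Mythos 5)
Your proof is correct and follows essentially the same route as the paper's: both work pointwise, insert $\hat{\eta}$, exploit the fact that $\hat{\TopKSet}_K$ is the top-$K$ set for $\hat{\eta}$, reduce to the symmetric difference $\TopKSet_K^* \SymDiff \hat{\TopKSet}_K$, bound by $\|\eta(x) - \hat{\eta}(x)\|_1$, and take expectations. The only difference is cosmetic: where you pair elements of $\TopKSet_K^* \setminus \hat{\TopKSet}_K$ and $\hat{\TopKSet}_K \setminus \TopKSet_K^*$ bijectively and use the element-wise ordering $\hat{\eta}_{j_i} \geq \hat{\eta}_{k_i}$ per pair, the paper uses the single aggregate inequality $\sum_{k \in \TopKSet_K^*} \hat{\eta}_k \leq \sum_{k \in \hat{\TopKSet}_K} \hat{\eta}_k$, which your pairing inequalities imply term by term.
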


\begin{theorem}
  \label{thm:plugin-regret-bounds-adaptive-top-k}
  
  Let $\cK \in [0,C]$ be a fixed average set size.
  Given an estimator $\hat{\eta}$ of $\eta$, the associated average-$\cK$ plug-in estimator $\hat{\AverageKSet}_\cK$ as defined in \Cref{eq:adaptive-top-k-plugin-estimator} has the following regret bound,
  \begin{equation*}
    \RegretAdapt(\hat{\AverageKSet}_\cK) \leq \E[X]{ \| \eta(X) - \hat{\eta}(X) \|_1 }.
  \end{equation*}
\end{theorem}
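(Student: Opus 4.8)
The plan is to start from the exact regret identity for average-$\cK$ classifiers proved in \Cref{thm:adaptive-set-prediction-excess-risk}. Since the plug-in estimator $\hat{\AverageKSet}_\cK$ is by construction an average-$\cK$ classifier (its average set size is exactly $\cK$ by the choice of $G_{\hat\eta}^{-1}(\cK)$ and the tie-breaking set $\widetilde{\hat{\AverageKSet}}_\cK^{=}$), that proposition applies verbatim and yields
\[
  \RegretAdapt(\hat{\AverageKSet}_\cK) = \sum_k \E[X]{ \left| \eta_k(X) - \lambda \right| \ind{k \in \hat{\AverageKSet}_\cK(X) \SymDiff \AverageKSet_\cK^*(X)} },
\]
where I abbreviate $\lambda = G_\eta^{-1}(\cK)$ for the optimal threshold and $\hat\lambda = G_{\hat\eta}^{-1}(\cK)$ for the plug-in threshold. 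The goal is then to dominate the right-hand side by $\E[X]{ \| \eta(X) - \hat{\eta}(X) \|_1 } = \sum_k \E[X]{ |\eta_k(X) - \hat\eta_k(X)| }$. A useful first observation is that any class with $\eta_k(x) = \lambda$ contributes zero, so only classes with $\eta_k(x)$ strictly above or below $\lambda$ matter; this conveniently removes the tie-breaking sets $\widetilde{\AverageKSet}_\cK^{=}$ and $\widetilde{\hat{\AverageKSet}}_\cK^{=}$ from the pointwise analysis.

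The core of the argument is a pointwise comparison, carried out separately for each class $k$ and each $x$ in the symmetric difference, complicated by the fact that $\lambda$ and $\hat\lambda$ generally differ. Assume without loss of generality that $\hat\lambda \geq \lambda$ (the reverse case is symmetric). If $k$ is retained by the optimal set but dropped by the plug-in set, then $\eta_k(x) > \lambda$ while $\hat\eta_k(x) \leq \hat\lambda$; decomposing $\eta_k - \lambda = (\eta_k - \hat\eta_k) + (\hat\eta_k - \lambda)$ and using $\hat\eta_k - \lambda \leq \hat\lambda - \lambda$ gives $|\eta_k - \lambda| \leq |\eta_k - \hat\eta_k| + (\hat\lambda - \lambda)$, where the surplus $(\hat\lambda - \lambda)$ is actually needed only when $\hat\eta_k > \lambda$. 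Conversely, if $k$ is added by the plug-in but absent from the optimal set, then $\eta_k(x) < \lambda$ and $\hat\eta_k(x) \geq \hat\lambda \geq \lambda$, and the analogous decomposition produces the reverse inequality $|\eta_k - \lambda| \leq |\eta_k - \hat\eta_k| - (\hat\lambda - \lambda)$, i.e. a threshold \emph{credit}.

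Summing these pointwise bounds over $k$ leaves, beyond the target $\sum_k |\eta_k - \hat\eta_k|$, a residual equal to $(\hat\lambda - \lambda)$ times the difference between the number of surplus classes (dropped by the plug-in, with $\hat\eta_k > \lambda$) and the number of credit classes (added by the plug-in). The surplus count is at most $|\AverageKSet_\cK^*(x) \setminus \hat{\AverageKSet}_\cK(x)|$, whereas the credit count equals exactly $|\hat{\AverageKSet}_\cK(x) \setminus \AverageKSet_\cK^*(x)|$. Taking expectations and invoking the constraint that both classifiers have average set size $\cK$—which forces $\E[X]{|\AverageKSet_\cK^* \setminus \hat{\AverageKSet}_\cK|} = \E[X]{|\hat{\AverageKSet}_\cK \setminus \AverageKSet_\cK^*|}$—the expected surplus count does not exceed the expected credit count. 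Since $\hat\lambda - \lambda \geq 0$, the residual is non-positive in expectation and may be discarded, and bounding the restricted sum $\sum_k \E[X]{|\eta_k - \hat\eta_k|\,\ind{k \in \hat{\AverageKSet}_\cK \SymDiff \AverageKSet_\cK^*}}$ by the full $\sum_k \E[X]{|\eta_k - \hat\eta_k|}$ delivers the claimed $L_1$ bound.

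The main obstacle is precisely this interplay between the two distinct thresholds $\lambda$ and $\hat\lambda$: a naive triangle inequality routed through $\hat\lambda$ would leave an uncontrolled $|\hat\lambda - \lambda|$ term, and trying to bound that directly would require a modulus-of-continuity estimate for the generalized inverse $G_\eta^{-1}$, which need not hold. The key insight to get right is therefore that the average-set-size constraint makes the surplus and credit contributions cancel \emph{in aggregate}, so the threshold gap never has to be estimated on its own. A secondary technical point, already noted above, is the bookkeeping around ties, which is defused by the fact that classes sitting exactly at the true threshold carry zero regret.
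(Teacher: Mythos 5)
Your proof is correct and takes essentially the same route as the paper's: both start from the exact regret identity of \Cref{thm:adaptive-set-prediction-excess-risk}, perform the same pointwise decomposition through the two thresholds $\lambda$ and $\hat\lambda$ using the membership inequalities, and kill the threshold-gap term via the constraint that both classifiers have average set size $\cK$. The only cosmetic difference is that your case split on the sign of $\hat\lambda - \lambda$ (and the surplus-versus-credit counting) is unnecessary: the paper keeps the signed residual, which equals $(\hat\lambda - \lambda)\left( |\AverageKSet_\cK^*| - |\hat{\AverageKSet}_\cK| \right)$ pointwise and hence has expectation exactly zero regardless of sign.
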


\begin{proof}[Proof of \Cref{thm:plugin-regret-bounds-top-k}]
  For readability, as we will do the proof for a fixed $X$, we omit $X$ in the formulas.
  Using the fact that $\sum_{k \in \TopKSet_K^*} \hat{\eta}_k - \sum_{k \in \hat{\TopKSet}_K} \hat{\eta}_k \leq 0$ as a direct consequence of the definition of $\hat{\TopKSet}_K$, the point-wise regret can be upper bounded using
  \begin{align*}
    \RegretFixed(\hat{\TopKSet}_K; X)
    &= \sum_{k \in \TopKSet_K^*} \eta_k - \sum_{k \in \hat{\TopKSet}_K} \eta_k \\
    &= \sum_{k \in \TopKSet_K^*} (\eta_k - \hat{\eta}_k) + \sum_{k \in \hat{\TopKSet}_K} (\hat{\eta}_k - \eta_k) + \sum_{k \in \TopKSet_K^*} \hat{\eta}_k - \sum_{k \in \hat{\TopKSet}_K} \hat{\eta}_k \\
    &\leq \sum_{k \in \TopKSet_K^*} (\eta_k - \hat{\eta}_k) + \sum_{k \in \hat{\TopKSet}_K} (\hat{\eta}_k - \eta_k) \\
    &= \sum_{k \in \TopKSet_K^* \setminus \hat{\TopKSet}_K} (\eta_k - \hat{\eta}_k) + \sum_{k \in \hat{\TopKSet}_K \setminus \TopKSet_K^*} (\hat{\eta}_k - \eta_k) \\
    &\leq \sum_{k \in \TopKSet_K^* \SymDiff \hat{\TopKSet}_K} | \eta_k - \hat{\eta}_k | \\
    &\leq \| \eta - \hat{\eta} \|_1.
  \end{align*}
  Taking the expectation over $X$ concludes the proof.
\end{proof}

\begin{proof}[Proof of \Cref{thm:plugin-regret-bounds-adaptive-top-k}]
  For readability, we omit $X$ in the formulas.
  Using \Cref{thm:adaptive-set-prediction-excess-risk}, we have
  \begin{equation*}
    \RegretAdapt(\hat{\AverageKSet}_\cK)
    = \E[X]{ \sum_k | \eta_k - \lambda | \ind{k \in \AverageKSet_\cK^* \SymDiff \hat{\AverageKSet}_\cK} }
  \end{equation*}
  If $k \in \AverageKSet_\cK^* \setminus \hat{\AverageKSet}_\cK$, we have $\eta_k \geq \lambda$ and $\hat{\eta}_k \leq \hat{\lambda}$, thus,
  \begin{equation*}
    | \eta_k - \lambda |
    = \eta_k - \lambda
    = (\eta_k - \hat{\eta}_k) + (\hat{\eta}_k - \hat{\lambda}) + (\hat{\lambda} - \lambda)
    \leq | \eta_k - \hat{\eta}_k | + ( \hat{\lambda} - \lambda ).
  \end{equation*}
  Similarly, if $k \in \hat{\AverageKSet}_\cK \setminus \AverageKSet_\cK^*$, we have $\eta_k \leq \lambda$ and $\hat{\eta}_k \geq \hat{\lambda}$, thus,
  \begin{equation*}
    | \eta_k - \lambda |
    = \lambda - \eta_k
    = (\lambda - \hat{\lambda}) + (\hat{\lambda} - \hat{\eta}_k) + (\hat{\eta}_k - \eta_k)
    \leq | \eta_k - \hat{\eta}_k | + ( \lambda - \hat{\lambda} ).
  \end{equation*}
  Finally, this gives,
  \begin{equation*}
    \RegretAdapt(\hat{\AverageKSet}_\cK)
    \leq \E[X]{ \| \eta - \hat{\eta} \|_1 + ( \hat{\lambda} - \lambda ) ( | \AverageKSet_\cK^* \setminus \hat{\AverageKSet}_\cK | - | \hat{\AverageKSet}_\cK \setminus \AverageKSet_\cK^* | ) }.
  \end{equation*}
  The second term is in fact equal to $( \hat{\lambda} - \lambda ) ( | \AverageKSet_\cK^* | - | \hat{\AverageKSet}_\cK | )$, taking the expectation over $X$ gives $( \hat{\lambda} - \lambda ) ( \Info(\AverageKSet_\cK^*) - \Info(\hat{\AverageKSet}_\cK) ) = 0$.
  Thus, taking the expectation over $X$ concludes the proof.
\end{proof}

Note that the previous theorems give upper bounds, so they do not imply that having bad estimators of $\eta$ will necessarily lead to bad estimators $\hat{\TopKSet}_K$ and $\hat{\AverageKSet}_{\cK}$.
However, these bounds are useful to derive the consistency results presented in the next section.

\subsection{Proper Losses and Strongly Proper Losses}
\label{sec:strongly-proper-losses}

Now that we have shown plug-in regret bounds, we study losses that allow the construction of estimators of $\eta$, namely proper losses.
We will focus on a subset of those losses that satisfy a stronger property: strongly proper losses.
The main result of this section, \Cref{thm:strongly-proper-losses-consistency}, shows that such losses are consistent for top-$K$ and average-$K$ classification.
We then apply this result to show that negative log-likelihood, Brier score, and others are examples of such losses and are thus also consistent for these tasks.
More generally, note that this subclass of proper losses have not been introduced in the multi-class setting and the results of this section are thus novel and of independent interest beyond set-valued classification.

Throughout this section, we will consider different quantities for a given $x \in \X$, but for simplicity, the dependence on $x$ will be omitted in the equations.

Given a loss $l: \Y \times \ProbSimplex{C} \to \R^+$, its conditional risk is defined as
\begin{equation*}
  L_l(\eta, \hat{\eta}) \eqdef \sum_k \eta_k l(k, \hat{\eta}).
\end{equation*}
We denote $L_l^*$ its infimum according to the second variable, \ie
\begin{equation*}
  L_l^*(\eta) \eqdef \inf_{\eta'} L_l(\eta, \eta').
\end{equation*}

We first start by recalling the definition of (strictly) proper loss \citep{Gneiting2007} which were studied and proved to be useful in the context of binary classification \citep{Reid2009} and of multi-class classification \citep{Vernet2011}.

\begin{definition}[(Strictly) proper loss]
  A loss $l: \Y \times \ProbSimplex{C} \to \R$ is said to be \emph{proper} if its conditional risk's infimum is attained by $\eta$, \ie
  \begin{equation*}
    L_l(\eta,\eta) = L_l^*(\eta).
  \end{equation*}
  If this infimum is uniquely attained by $\eta$, then the loss is said to be \emph{strictly proper}.
\end{definition}

The following definition is a generalization to the multi-class setting of the strongly proper losses which were introduced by \citet{Agarwal2014} for the binary case in the context of bipartite ranking.\footnote{To the best of our knowledge, this property of strongly proper losses in the case of multi-class classification was not published before.}

\begin{definition}[Strongly proper loss]
  \label{def:strongly-proper-loss}
  
  A loss $l: \Y \times \ProbSimplex{C} \to \R$ is said to be $\mu$-strongly proper if it satisfies
  \begin{equation*}
    L_l(\eta,\hat{\eta}) - L_l(\eta,\eta) \geq \frac{\mu}{2} \| \eta - \hat{\eta} \|_1^2,
  \end{equation*}
  with $\mu > 0$.
\end{definition}
It is easy to check that a strongly proper loss is necessarily strictly proper.
Indeed, if $\hat{\eta} \neq \eta$, $L_l(\eta,\hat{\eta}) > L_l(\eta,\eta)$, and thus $L_l(\eta,\eta) = L_l^*(\eta)$.
Therefore, we can rewrite the previous definition as a lower bound on the point-wise regret of $l$:
\begin{equation*}
  \regret_l(\hat{\eta}; x) \geq \frac{\mu}{2} \| \eta(x) - \hat{\eta}(x) \|_1^2.
\end{equation*}
The converse is not true in general: a strictly proper loss might not be strongly proper.

Note that, when applying this definition in the binary case, $\| \eta - \hat{\eta} \|_1$ becomes $2 | \eta - \hat{\eta} |$ resulting in a multiplicative factor of $4$ for the parameter $\mu$ compared to the definition of \citet{Agarwal2014}.
Note also that, because of norm equivalence in the finite dimension, the actual norm used in the definition is not a restriction: one can easily vary the norm by changing the parameter $\mu$ as needed.
We will illustrate this with examples of such losses later in this section.

The choice of the $\ell_1$-norm is justified by the following proposition, which
 shows how the estimation error of $\hat{\eta}$ is controlled by the regret of a strongly proper loss.

\begin{proposition}
  \label{thm:cond-prob-est-error-upper-bound}
  
  Let $l$ be a $\mu$-strongly proper loss.
  Then, we have
  \begin{equation*}
    \E[X]{ \| \eta(X) - \hat{\eta}(X) \|_1 } \leq \sqrt{ \frac{2}{\mu} \regret_l(\hat{\eta}) }
  \end{equation*}
\end{proposition}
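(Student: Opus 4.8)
The plan is to reduce the global regret $\regret_l(\hat{\eta})$ to a pointwise statement, invoke the strongly-proper lower bound pointwise, and then finish with Jensen's inequality.

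First I would rewrite the regret by conditioning on $X$. Since the inner expectation over $Y$ given $X=x$ is exactly the conditional risk $L_l(\eta(x),\hat{\eta}(x))$, we have $\E[X,Y]{l(Y,\hat{\eta}(X))} = \E[X]{L_l(\eta(X),\hat{\eta}(X))}$. The key observation is that the infimum defining the regret factorizes across $x$: minimizing $\E[X,Y]{l(Y,\hat{\eta}'(X))}$ over $\hat{\eta}'$ amounts to minimizing $L_l(\eta(x),\cdot)$ for each $x$ separately, so $\inf_{\hat{\eta}'} \E[X,Y]{l(Y,\hat{\eta}'(X))} = \E[X]{L_l^*(\eta(X))}$. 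Because $l$ is strongly proper, and hence strictly proper, $L_l^*(\eta) = L_l(\eta,\eta)$, and therefore $\regret_l(\hat{\eta}) = \E[X]{L_l(\eta(X),\hat{\eta}(X)) - L_l(\eta(X),\eta(X))} = \E[X]{\regret_l(\hat{\eta};X)}$.

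Next I would apply the defining inequality of a $\mu$-strongly proper loss pointwise, namely $\regret_l(\hat{\eta};X) \geq \frac{\mu}{2} \| \eta(X) - \hat{\eta}(X) \|_1^2$ (already noted in the excerpt), and take expectations to obtain $\E[X]{ \| \eta(X) - \hat{\eta}(X) \|_1^2 } \leq \frac{2}{\mu} \regret_l(\hat{\eta})$. Finally, since $t \mapsto t^2$ is convex, Jensen's inequality gives $\bigl( \E[X]{ \| \eta(X) - \hat{\eta}(X) \|_1 } \bigr)^2 \leq \E[X]{ \| \eta(X) - \hat{\eta}(X) \|_1^2 }$; chaining the two bounds and taking square roots yields the claim.

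The only delicate point is the interchange of the infimum and the expectation in the regret, \ie justifying $\inf_{\hat{\eta}'} \E[X]{ L_l(\eta(X),\hat{\eta}'(X)) } = \E[X]{ \inf_{\eta'} L_l(\eta(X),\eta') }$. This is a standard pointwise-minimization argument: the minimizer can be selected pointwise (here $\eta$ itself attains it, by properness), so no genuine obstruction arises and the interchange is valid. Everything else is a one-line application of the strong-properness inequality followed by Jensen.
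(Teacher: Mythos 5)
Your proof is correct and follows essentially the same route as the paper's: the pointwise strong-properness bound combined with Jensen's inequality for $t \mapsto t^2$, together with the identification $\regret_l(\hat{\eta}) = \E[X]{L_l(\eta(X),\hat{\eta}(X)) - L_l^*(\eta(X))}$. The only difference is that you spell out the interchange of infimum and expectation (justified by properness, since $\eta$ attains the pointwise infimum), which the paper uses implicitly in its final line.
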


\begin{proof}
  Using the convexity of the function $x \mapsto x^2$ and the definition of strongly proper losses, we have
  \begin{align*}
    \E[X]{ \| \eta(X) - \hat{\eta}(X) \|_1 }
    &\leq \sqrt{ \E[X]{ \| \eta(X) - \hat{\eta}(X) \|_1^2  } } \\
    &\leq \sqrt{ \frac{2}{\mu} \E[X]{ L_l(\eta(X),\hat{\eta}(X)) - L_l^*(\eta(X)) } } \\
    &= \sqrt{ \frac{2}{\mu} \regret_l(\hat{\eta}) }.
  \end{align*}
\end{proof}

This proposition allows us to  prove the consistency of strongly proper losses with respect to the set-valued classification problems we consider in this paper.

\begin{theorem}
  \label{thm:strongly-proper-losses-consistency}
  Any strongly proper loss is consistent for top-$K$ and average-$K$ classification.
\end{theorem}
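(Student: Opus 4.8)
The plan is to simply chain the three results already established in this section, since consistency reduces here to a quantitative regret bound. For a $\mu$-strongly proper loss $l$, \Cref{thm:cond-prob-est-error-upper-bound} controls the $\ell_1$ estimation error of $\hat{\eta}$ by the surrogate regret, and \Cref{thm:plugin-regret-bounds-top-k,thm:plugin-regret-bounds-adaptive-top-k} bound the top-$K$ and average-$\cK$ regrets of the plug-in classifiers by exactly that same estimation error. Composing them yields
\begin{equation*}
  \RegretFixed(\hat{\TopKSet}_K) \leq \E[X]{ \| \eta(X) - \hat{\eta}(X) \|_1 } \leq \sqrt{ \frac{2}{\mu} \regret_l(\hat{\eta}) },
\end{equation*}
and identically
\begin{equation*}
  \RegretAdapt(\hat{\AverageKSet}_\cK) \leq \E[X]{ \| \eta(X) - \hat{\eta}(X) \|_1 } \leq \sqrt{ \frac{2}{\mu} \regret_l(\hat{\eta}) }.
\end{equation*}

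From here the consistency conclusion is immediate. Given a sequence of estimators $(\hat{\eta}_n)$ with $\regret_l(\hat{\eta}_n) \to 0$, I would build from each $\hat{\eta}_n$ the plug-in classifiers defined in \Cref{eq:top-k-plugin-estimator,eq:adaptive-top-k-plugin-estimator}. Because the right-hand sides above tend to zero as $n \to \infty$, both $\RegretFixed(\hat{\TopKSet}_K) \to 0$ and $\RegretAdapt(\hat{\AverageKSet}_\cK) \to 0$, which is precisely the consistency property for top-$K$ and average-$K$ classification. The square-root dependence on $\regret_l(\hat{\eta})$ even yields an explicit rate, though only the vanishing limit is needed for consistency.

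There is no genuine obstacle in the argument itself; the only point requiring care is well-definedness of the average-$\cK$ plug-in estimator. The bound of \Cref{thm:plugin-regret-bounds-adaptive-top-k} presumes that $\hat{\AverageKSet}_\cK$ is an honest average-$\cK$ classifier, i.e.\ that the deterministic tie-breaking component $\widetilde{\hat{\AverageKSet}}_\cK^{=}$ exists so that $\Info(\hat{\AverageKSet}_\cK) = \cK$. As noted after \Cref{eq:adaptive-top-k-plugin-estimator}, this holds under the same sufficient condition as for the optimal rule (for instance when $\Pr_X$ is continuous, by \Cref{thm:existence-deterministic-classifier}), and I would state the consistency result within that regime. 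The proof then consists of nothing more than the two displayed chains above.
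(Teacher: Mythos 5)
Your proof is correct and follows exactly the paper's own argument: composing \Cref{thm:cond-prob-est-error-upper-bound} with the plug-in regret bounds of \Cref{thm:plugin-regret-bounds-top-k,thm:plugin-regret-bounds-adaptive-top-k} and letting $\regret_l(\hat{\eta}_n) \to 0$. Your added remark on the well-definedness of the tie-breaking component $\widetilde{\hat{\AverageKSet}}_\cK^{=}$ is a careful touch the paper handles only implicitly, but it does not change the route.
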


\begin{proof}
  This is a direct consequence of the bounds of \Cref{thm:plugin-regret-bounds-top-k,thm:plugin-regret-bounds-adaptive-top-k} and of \Cref{thm:cond-prob-est-error-upper-bound}.
  Considering a sequence of estimators $(\hat{\eta}_n)$ such that $\regret_l(\hat{\eta}_n)$ tends towards zero will force the regret $\regret_{\mathrm{task}}(\hat{S}_n)$ to also tend to zero.
\end{proof}

This result tells us that, if we are able to minimize the risk of a strongly proper loss as the size of the training set increases, then we can use the resulting estimator of $\hat{\eta}$ to build a top-$K$ and an average-$K$ classifier from it which will converge to the related optimal set classifiers.
We now give a few examples of such strongly proper losses.

\begin{example}[Negative log-likelihood]
  The negative log-likelihood loss (NLL) defined as
  \begin{equation*}
    l_{\log}(k, \hat{\eta}) = - \log \hat{\eta}_k
  \end{equation*}
  is $1$-strongly proper.\footnote{See the previous comments for why, in the definition of \citet{Agarwal2014} in the binary case, it is $4$-strongly proper.}
  It is thus consistent for the set-valued classification problems we consider in this paper.

  This is a direct consequence of Pinsker's inequality \citep[Theorem 4.19]{Boucheron2013} which states that
  \begin{equation*}
    \KLdiv(\eta, \hat{\eta}) \geq 2 \| \eta - \hat{\eta} \|_{TV}^2 = \frac{1}{2} \| \eta - \hat{\eta} \|_1^2
  \end{equation*}
  where $\KLdiv$ is the Kullback-Leibler divergence and $\| . \|_{TV}$ is the total variation distance.
  Simply noting that the point-wise regret is equal to
  \begin{equation*}
    \regret_{\log}(\hat{\eta}; x) = \KLdiv(\eta(x), \hat{\eta}(x)),
  \end{equation*}
  concludes the proof by \Cref{def:strongly-proper-loss}.
\end{example}

\begin{example}[Brier score]
  Let the square loss, a.k.a. Brier score, defined as
  \begin{equation*}
    l_{\mathrm{sq}}(k, \hat{\eta}) = \frac{1}{2} \sum_{k'} (\ind{k'=k} - \hat{\eta}_{k'})^2.
  \end{equation*}
  For this loss, the point-wise regret is equal to
  \begin{equation*}
    \regret_{\mathrm{sq}}(\hat{\eta};x)
    = \frac{1}{2} \| \eta(x) - \hat{\eta}(x) \|_2^2
    \geq \frac{1}{2C}  \| \eta(x) - \hat{\eta}(x) \|_1^2.
  \end{equation*}
  It is thus $\frac{1}{C}$-strongly proper and consistent for the set-valued classification problems we consider in this paper.
\end{example}

\begin{example}[One-versus-all strongly proper losses]
  Let $l_b: \{-1,+1\} \times [0,1] \to \R$ be a binary $\mu$-strongly proper loss \citep[as defined by][]{Agarwal2014} and define the following one-versus-all loss
  \begin{equation*}
    l_{\mathrm{ova}}(k, \hat{\eta}) = l_b(+1, \hat{\eta}_k) + \sum_{k' \neq k} l_b(-1, \hat{\eta}_{k'}).
  \end{equation*}
  We have
  \begin{align*}
    L_{\mathrm{ova}}(\eta, \hat{\eta})
    &= \sum_k \eta_k l_b(+1, \hat{\eta}_k) + \sum_{k, k' \neq k} \eta_k l_b(-1, \hat{\eta}_{k'}) \\
    &= \sum_k \eta_k l_b(+1, \hat{\eta}_k) + \sum_{k, k' \neq k} \eta_{k'} l_b(-1, \hat{\eta}_k) \\
    &= \sum_k \eta_k l_b(+1, \hat{\eta}_k) + \sum_k (1-\eta_k) l_b(-1, \hat{\eta}_k) \\
    &= \sum_k L_b(\eta_k, \hat{\eta}_k).
  \end{align*}
  It is easy to check that
  \begin{align*}
    \regret_{\mathrm{ova}}(\hat{\eta}; x)
    &= \sum_k \regret_b(\hat{\eta}_k; x) \\
    &\geq \sum_k \frac{\mu}{2} \big( \eta_k(x) - \hat{\eta}_k(x) \big)^2 \\
    &= \frac{\mu}{2} \| \eta(x) - \hat{\eta}(x) \|_2^2 \\
    &\geq \frac{\mu}{2C} \| \eta(x) - \hat{\eta}(x) \|_1^2.
  \end{align*}
  Thus, losses built from univariate $\mu$-strongly proper losses are $\frac{\mu}{C}$-strongly proper and thus consistent.
\end{example}

Now that we have shown the consistency of the proposed estimation procedures, we can experiment with them on real-world data sets.

\section{Experiments on Real Data Sets}
\label{sec:experiments}

In this section, we carry out experiments on image classification data sets to see how well they accord with the previous theoretical results.
First, we detail how top-$K$ and average-$K$ classifiers are effectively estimated using a finite data set.
Next, we study the proposed estimation procedures empirically to see how they behave in practice.
Then, in order to test the potential benefit of average-$K$, we carry out controlled experiments showing how different forms of noise impact the relative performance of top-$K$ classification and its adaptive counterpart.
We then compare top-$K$ and average-$K$  on \dataset{ImageNet} and on fine-grained classification data sets to determine whether real-world data sets can benefit from average-$K$ classification.
We have chosen to restrict ourselves to image data sets for these experiments because their ambiguity is visually evident.
However, top-$K$ and average-$K$ can be applied to any type of data.

\subsection{Finite Sample Estimation of Top-$K$ and Average-$K$ Classifiers From $\hat{\eta}$}

Here, we assume that we are given an estimator $\hat{\eta}$ of the conditional class probability $\eta$.
This estimator is learned on the training set.

Deriving the top-$K$ classifier is direct and we will always predict sets of size $K$ on the test set.
On the other hand, for the average-$K$ classifier, we need to find the appropriate threshold $G_{\hat{\eta}}^{-1}(K)$.
In practice, we have only a finite number of sample to estimate this threshold.
There are two ways to do so: (i) use a calibration set distinct from the test set, or, (ii) directly find the threshold on the test set.
In the first case, the average size of the sets predicted on the test set will, in general, be different from $K$.
In the second case, we can guarantee that the average set size will be equal to $K$ on the test set although it might not be exactly the theoretical threshold $G_{\hat{\eta}}^{-1}(K)$.
We will be using the second option as we want to compare average-$K$ and top-$K$ classifiers for the same (average) set size $K$.

\begin{algorithm}[t]
  \caption{Computing the average-$K$ classifier on a finite data set 
  }
  \label{alg:average-K-error-rate-computation}
  \begin{algorithmic}[1]
    \Require{
      \Statex $y_{\mathrm{scores}}$ is the predicted scores matrix of size $N \times C$.
      \Statex $K$ is an integer representing the average set size constraint.
      \Statex \Call{Flatten}{$y_{\mathrm{scores}}$} returns the flattened array of matrix $y_{\mathrm{scores}}$ (\ie the matrix collapsed into one dimension row-by-row).
      \Statex \Call{Quantile}{$s, q$} returns the first $q$-quantile (with lower interpolation) of the values contained into the one-dimensional array $s$, can be implemented with complexity $O(N \times C)$.
    }
    \Statex
    \Function{Threshold}{$y_{\mathrm{scores}}, K$} \Comment{$O(N \times C)$ complexity}
      \State $s \gets $ \Call{Flatten}{$y_{\mathrm{scores}}$}
      \State $q \gets 1 - K / C$
      \State \Return{\Call{Quantile}{$s, q$}}
    \EndFunction
    \Statex
    \Function{AverageKSets}{$y_{\mathrm{scores}}, K$} \Comment{$O(N \times C)$ complexity}
      \State $\lambda \gets $ \Call{Threshold}{$y_{\mathrm{scores}}, K$}
      \State $S^+ \gets y_{\mathrm{scores}} > \lambda$
      \State $S^= \gets y_{\mathrm{scores}} = \lambda$
      \State $K_{missing} \gets K - $ \Call{Sum}{$S^+$} $ / N$ \Comment{$K_{missing} \times N$ is thus an integer}
      \State $\widetilde{S}^= \gets $ take the first $K_{missing} \times N$ labels in $S^=$
       \State \Return{$S^+ \cup \widetilde{S}^=$}
    \EndFunction
  \end{algorithmic}
\end{algorithm}

\sloppy
In the case where the test set $\cD_{\mathrm{test}}$ is finite of size $N$, \ie ${\cD_{\mathrm{test}} = ( x_1, \dots, x_N )}$, finding the appropriate threshold simply consists in finding the correct quantile in the distribution of the predicted scores $\hat{\eta}(x_1), \dots, \hat{\eta}(x_N)$.
This can be done efficiently with complexity $O(N \times C)$ where $C$ is the number of classes.
We can then use this threshold $\lambda$ to compute the sets $S^+$ and $S^=$ for all test points $i \in \{ 1, \dots, N \}$ defined as
\begin{align*}
  S^+(x_i) &= \enscond{ k \in \Y }{ \hat{\eta}_k(x_i) > \lambda } , \\
  S^=(x_i) &= \enscond{ k \in \Y }{ \hat{\eta}_k(x_i) = \lambda } .
\end{align*}
\sloppy
If the average set size $S^+$ on $\cD_{\mathrm{test}}$ is not equal to $K$, \ie if ${\frac{1}{N} \sum_{i=1}^N |S^+(x_i)| < K}$, we can complete it by breaking the ties contained in $S^=(x_i)$.
This can be done by arbitrarily ordering these ties using the indices of the classes and of the samples and  keeping only the ${N \times K - \sum_{i=1}^N |S^+(x_i)|}$ first ones.
At the end, we obtain sets of average size $K$ as expected as:
\begin{equation*}
  \frac{1}{N} \sum_{i=1}^N |S^+(x_i)| + \frac{1}{N} \left( N \times K - \sum_{i=1}^N |S^+(x_i)| \right) = K .
\end{equation*}
The corresponding complete algorithm is given in \Cref{alg:average-K-error-rate-computation}.
A Python version is also provided in \Cref{sec:python-code}.
This algorithm was used for all the experiments in this paper.

\subsection{Practical Estimation of Top-$K$ and Average-$K$ Classifiers}

From a theoretical standpoint, when the conditional probabilities of the classes given the input are exactly known, optimal average-$K$ classifiers are always at least as good as optimal top-$K$ ones.
In this subsection, we study what happens in practice when we  have only estimators of such quantities resulting in classifiers which may be sub-optimal.

As we are interested in the estimation procedures independently of a specific value of $K$, we will regularly measure the \emph{mean top-$K$ error} and the \emph{mean average-$K$ error}, where the mean is taken over all integer values of $K$:
\begin{align*}
  \text{mean top-$K$ error} &= \frac{1}{C} \sum_{K=1}^C \Error(\hat{\TopKSet}_K), \\
  \text{mean average-$K$ error} &= \frac{1}{C} \sum_{K=1}^C \Error(\hat{\AverageKSet}_K),
\end{align*}
where $\hat{\TopKSet}_K$ and $\hat{\AverageKSet}_K$ are the estimators defined in \Cref{eq:top-k-plugin-estimator,eq:adaptive-top-k-plugin-estimator}.

Throughout this section, we will perform  experiments on the \dataset{CIFAR-10} data set \citep{Krizhevsky2009},
which is a standard image classification data set containing 60,000 color images of size $32 \times 32$ illustrating 10 classes.
Although the classes are sufficiently distinct to render the original task quite simple, there is appreciable ambiguity, as shown in \Cref{fig:cifar10h-human-ambiguity-examples}, because the image resolution is low.
It is thus a good data set to perform these experiments.

\subsubsection{Estimation Using Neural Networks}

Using neural networks has become the standard approach in image classification.
However, while efficient in practice, the training of such models is not fully understood from a theoretical point-of-view.
In order to see how the proposed estimation procedures for top-$K$ and average-$K$ behave with such models, we perform standard training  on \dataset{CIFAR-10} to see what happens.

\begin{figure}[t]
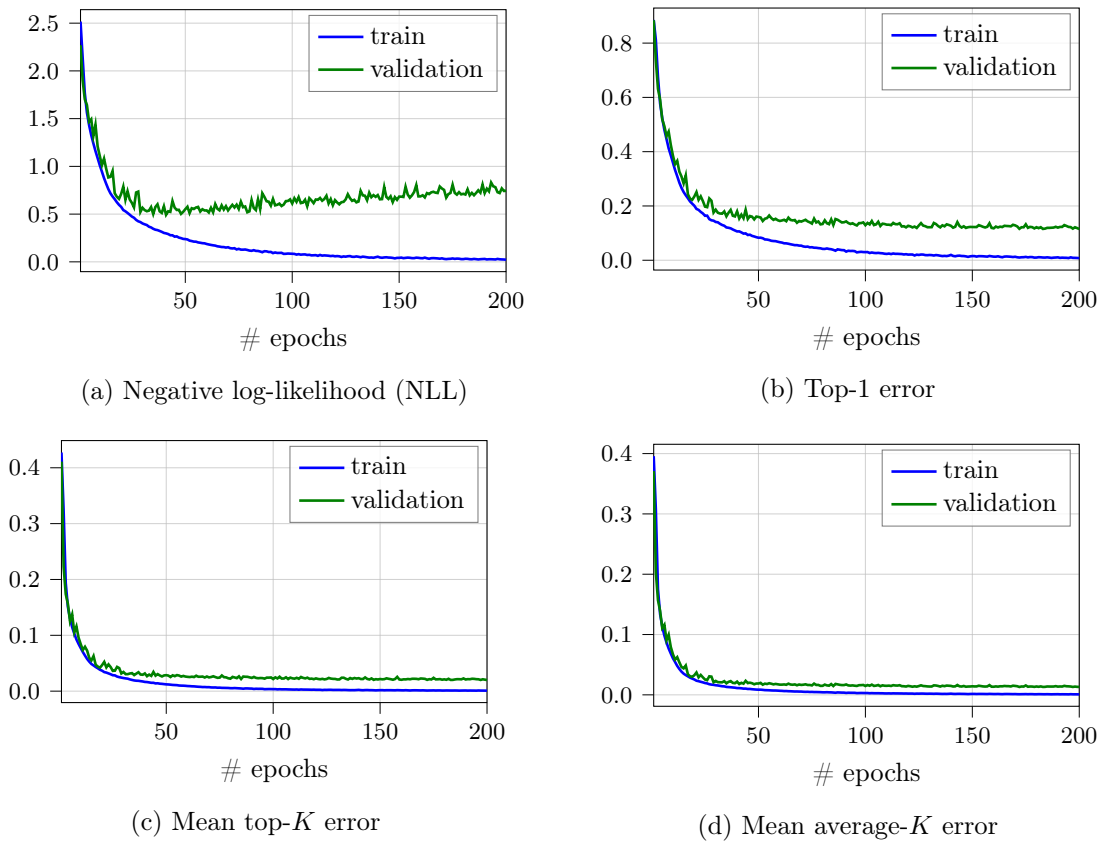

  \hfill
  \begin{subfigure}{.475\textwidth}
    \centering
    \input{plots/experiments/estimation_procedures/learning_curves_cifar10_loss.pgf}%
    \caption{Negative log-likelihood (NLL)}
    \label{fig:learning-curves-nll}
  \end{subfigure}
  \hfill
  \begin{subfigure}{.475\textwidth}
    \centering
    \input{plots/experiments/estimation_procedures/learning_curves_cifar10_top1_error.pgf}%
    \caption{Top-1 error}
    \label{fig:learning-curves-top-1-error}
  \end{subfigure}
  \hfill
  \\
  \vspace{.3cm}
  \\
  \hfill
  \begin{subfigure}{.475\textwidth}
    \centering
    \input{plots/experiments/estimation_procedures/learning_curves_cifar10_mean_top_k_error.pgf}%
    \caption{Mean top-$K$ error}
    \label{fig:learning-curves-mean-top-K-error}
  \end{subfigure}
  \hfill
  \begin{subfigure}{.475\textwidth}
    \centering
    \input{plots/experiments/estimation_procedures/learning_curves_cifar10_mean_average_k_error.pgf}%
    \caption{Mean average-$K$ error}
    \label{fig:learning-curves-mean-avg-K-error}
  \end{subfigure}
  \hfill
  \caption{
    Evolution of the different metrics through the training process of a neural network on \dataset{CIFAR-10}.
    Training optimizes the negative log-likelihood (NLL) on the training set.
  }
  \label{fig:learning-curves}
\end{figure}

First, we train a ResNet-44 \citep{He2016} using the standard training procedure detailed in the referenced paper.
In \Cref{fig:learning-curves}, the learning curves for different metrics are shown.
It is known \citep{Guo2017} that, although the negative log-likelihood (NLL) measured on the validation can increase through training as shown in \Cref{fig:learning-curves-nll}, the top-1 error continues to decrease as shown in \Cref{fig:learning-curves-top-1-error}.
Interestingly, and perhaps more surprisingly, the same comment can be made for the estimators of top-$K$ and average-$K$ as can be seen respectively in \Cref{fig:learning-curves-mean-top-K-error} and \Cref{fig:learning-curves-mean-avg-K-error}.

This shows that, in spite of the worst-case analysis in \Cref{sec:estimation-procedures}, carefully monitoring NLL on the validation set is in fact not necessary to obtain good estimators of top-$K$ and average-$K$ classifiers.
In the end,  training strategies similar to those used to minimize errors for top-1 classification work well. 
This means that virtually nothing needs to be changed in the training. Even a pre-trained model can  be used directly for top-$K$ and average-$K$ prediction.

Moreover, this empirical finding suggests that the usual training procedures for neural networks, although not necessarily learning a probability-calibrated model, still learn useful scores for average-$K$.
In turn, this suggests that the relative ordering of the conditional probability for each class and each sample is often preserved in the learned scores.
In \Cref{app:complementary-experiments-learning-curves}, complementary experiments on other data sets are provided, they show that these findings are generalizable to other data sets.

\begin{figure}[t]
  \centering
  \input{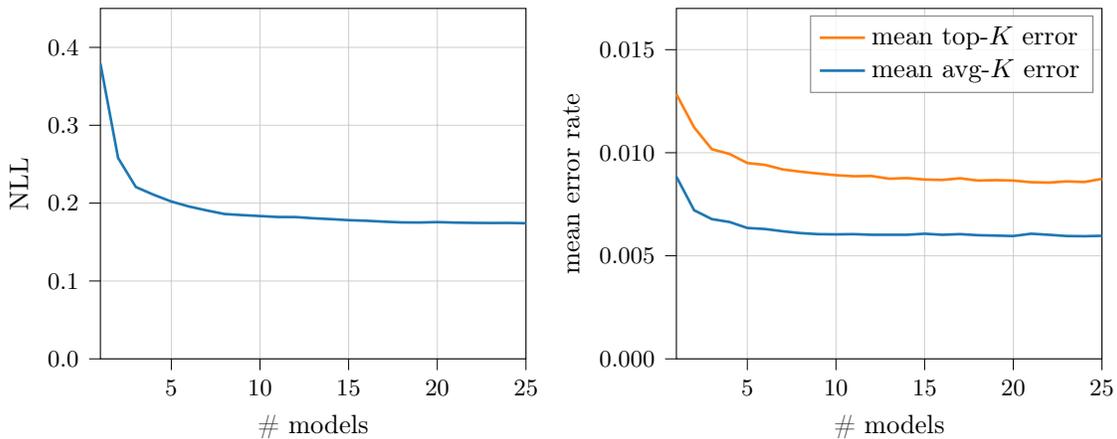}%
  \caption{
    Impact of the number of models present in the ensemble (\dataset{CIFAR-10}).
    More machine learning models reduce the error.
  }
  \label{fig:ensemble-impact}
\end{figure}

In order to further improve the learned classifiers, we use ensembles of neural networks trained using different initialization and mini-batch sampling.
The impact of training with several models in the ensemble is shown in \Cref{fig:ensemble-impact}.
Both top-$K$ and average-$K$ benefit significantly from using ensembles.
Surprisingly, average-$K$ seems to require fewer models than top-$K$ to stabilize.
Here again, complementary experiments provided in \Cref{app:complementary-experiments-model-ensembling} show that these findings are also generalizable to other data sets.

\subsubsection{Impact of \textit{a posteriori} Probability Calibration}

In this section, we study the impact of probability calibration methods used once the training is completed.
These methods aim to improve the estimated probabilities outputted by an existing model \citep{Flach2017}.
They thus seem relevant for top-$K$ and average-$K$ classification.

We restrict our analysis to extensions to the multi-class case of Platt's scaling \citep{Platt1999} proposed by \citet{Guo2017} as these methods were shown to outperform  other approaches when used with neural networks.
We briefly recall them here.
They consist in taking an already trained model  and learning a logistic regression on the logits.
That is, if we denote by $z(x)$ the scores in the logits space predicted by a model for a given $x$, the original estimator of the conditional probability $\hat{\eta}_k$ is computed using
\begin{equation*}
  \hat{\eta}_k(x) = \softmax( z(x) )_k,
\end{equation*}
where $\softmax(z)_k = \frac{e^{z_k}}{\sum_{k'} e^{z_k'}}$.
This estimator is replaced by $\hat{\eta}'_k$ computed as
\begin{equation*}
  \hat{\eta}'_k(x) = \softmax( W z(x) + b )_k,
\end{equation*}
where $(W,b)$ is a new linear model fitted by minimizing the negative log-likelihood (NLL) on a calibration set.
We call this calibration \emph{full matrix scaling}.
If $W$ is a diagonal matrix, it is called \emph{vector scaling}.
Finally, it is called \emph{temperature scaling} if the calibration simply consists in learning a single temperature parameter $T$ and the resulting estimator is computed as
\begin{equation*}
  \hat{\eta}'_k(x) = \softmax( z(x) / T )_k.
\end{equation*}
Although extremely simple, this last method was shown by \citet{Guo2017} to be very efficient.

We compare the different calibration methods on the \dataset{CIFAR-10} data set by calibrating on the validation set a single previously trained model.
The results on the test set is shown in \Cref{tab:probability-calibration-methods} and \Cref{fig:probability-calibration-methods-error-curves}.
These values can be compared with those of ensembling shown in \Cref{fig:ensemble-impact}.
Although these calibration methods do reduce the negative log likelihood, which is what they were designed for, their impact on the top-$K$ and average-$K$ error rates is very small.

\begin{table}[t]
  \centering
  \begin{tabular}{lccc}
\toprule
\textbf{Calibration method} &    NLL &  mean top-$K$ error &  mean avg-$K$ error \\
\midrule
\textbf{No calibration     } & 0.3792 &              0.0128 &              0.0088 \\
\textbf{Temperature scaling} & 0.2444 &              0.0128 &              0.0087 \\
\textbf{Vector scaling     } & 0.2442 &              0.0129 &              0.0088 \\
\textbf{Full matrix        } & 0.2394 &              0.0126 &              0.0086 \\
\bottomrule
\end{tabular}

  \caption{
    The impact of the probability calibration methods on top-$K$ and average-$K$ metrics computed on \dataset{CIFAR-10} is small.
  }
  \label{tab:probability-calibration-methods}
\end{table}

\usetikzlibrary{backgrounds}

\begin{figure}[t]
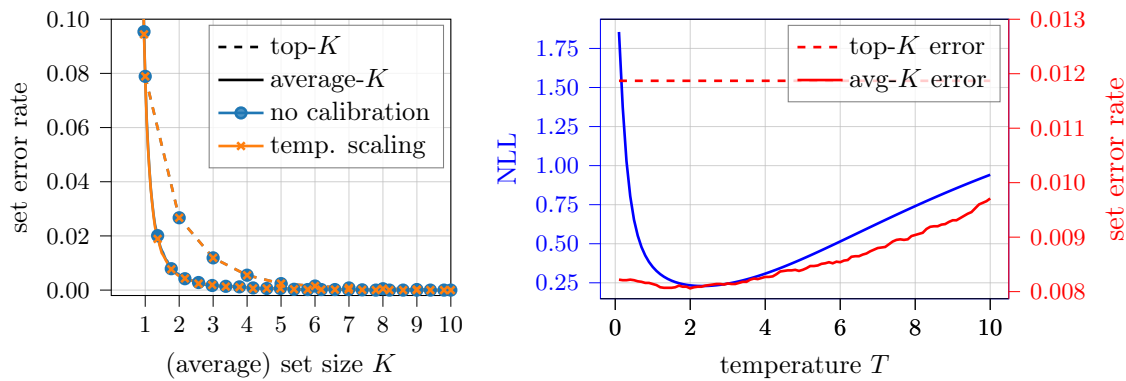

  \begin{subfigure}[t]{.4\textwidth}
    \input{plots/experiments/estimation_procedures/probability_calibration_methods_comparison.pgf}%
    \caption{
      Error rate vs. $K$ (set size) curve.
    }
    \label{fig:probability-calibration-methods-error-curves}
  \end{subfigure}
  \hfill
  \begin{subfigure}[t]{.575\textwidth}
    \input{plots/experiments/estimation_procedures/temperature_scaling_calibration_effect.pgf}%
    \caption{
      Effect of temperature scaling on NLL and top-$K$ and average-$K$ error rates.
    }
    \label{fig:temperature-scaling}
  \end{subfigure}
  \caption{
    Impact of probability calibration methods on top-$K$ and average-$K$ classifiers computed on \dataset{CIFAR-10}.
  }
  \label{fig:probability-calibration-methods}
\end{figure}

In \Cref{fig:temperature-scaling}, we study  the influence of temperature scaling on the model calibration in detail.
NLL is reduced by using a temperature of $T \approx 2.3$, which is greater than $1$, the default value without calibration.
The model was thus overconfident before the calibration: the scaling reduces the values of probabilities outputted by the model.
Temperature scaling has no influence on the top-$K$ error rate as it preserves the order of the classes for each sample taken individually.
By contrast, we can see that the temperature has an effect on the average-$K$ error but that error is still much lower than the top-$K$ error as long as $T$ is not too high.
Moreover, the standard temperature $T=1$ is very close to optimal.

These results suggest that, as noted in the previous section, the scores learned by a standard training of neural networks work well for top-$K$ and average-$K$.
Thus, additional probability calibration is not necessary. We will therefore not use these methods in the rest of the experiments.

\subsubsection{When Training Data Is Scarce, Average-$K$ Is Still Better}

\begin{figure}[t]
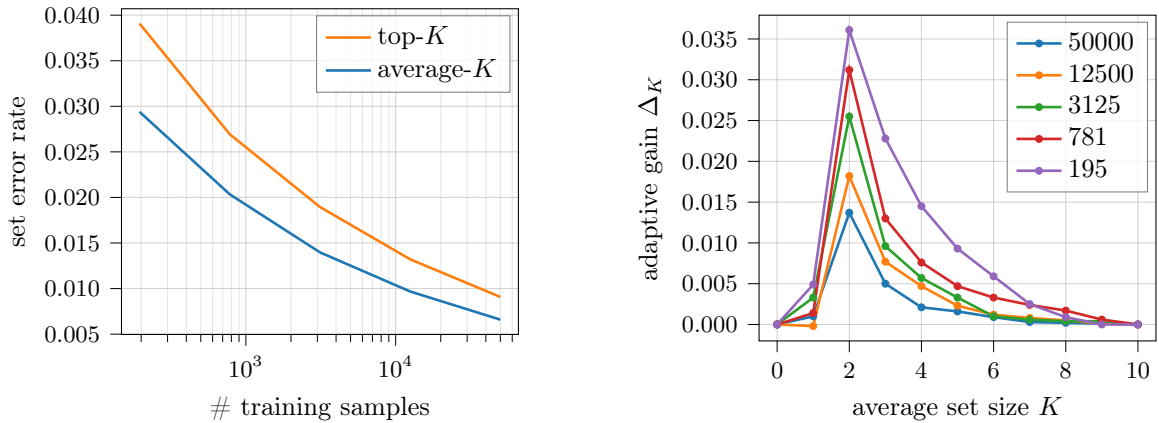

  \begin{subfigure}[t]{.45\textwidth}
    \input{plots/experiments/controlled_experiments/fashion_mnist_subsampled_mean_top_k_avg_k_error.pgf}%
    \caption{Mean error rate vs. number of training samples.}
  \end{subfigure}
  \hfill
  \begin{subfigure}[t]{.45\textwidth}
    \input{plots/experiments/controlled_experiments/fashion_mnist_subsampled_adaptive_gain_plot.pgf}%
    \caption{Adaptive gain with different  numbers of training samples.}
  \end{subfigure}
  \caption{
    Impact of training set size on top-$K$ and average-$K$ error rates measured on Fashion-MNIST's test set.
    The adaptive gain is highest when there are few samples.
  }
  \label{fig:training-set-size-impact}
\end{figure}

When the volume of training data is limited, one could expect top-$K$ classifiers to be easier to estimate than average-$K$ classifiers.
In problems for which optimal average-$K$ classifier would provide only a minor advantage over optimal top-$K$ classifier, we would expect the estimated top-$K$ to have a lower error rate than the estimated average-$K$.
Surprisingly, this expected phenomenon never occurred in our experiments. The estimated average-$K$ classifier was still better than the top-$K$ one.

To highlight this, we use the Fashion-MNIST data set \citep{Xiao2017}, which has a low degree of ambiguity.\footnote{MNIST has even less ambiguity but we did not use it for the experiments here because the task is too simple for neural networks which achieved very low error rate even with very few training samples.}
This data set consists of 60,000 training images and 10,000 test images of clothes containing 10 different classes.
We split the original training set into two parts: 50,000 for training purposes and 10,000 for validation of hyperparameters.
We then subsample this training set by dividing it by four for every experiment resulting in set sizes of $50000$, $12500$, $3125$, $781$, and $195$.
The results computed on the final test set are shown in \Cref{fig:training-set-size-impact}.
Not only is average-$K$ always better than top-$K$ but the gap  is wider for fewer samples.
Even when we have a limited training set, average-$K$ still improves upon top-$K$.

\subsection{Experiments on Altered Real Data: Potential of Average-$K$ Classification}

In this subsection, we consider experiments on image data sets in which we inject different types of noise to create ambiguity.
In particular, we focus on (i)~adding noise in the labels, and (ii)~downsampling the images.

\subsubsection{Label Noise Injection}

We first consider modifying the labels in order to inject noise.
We introduce a method to inject the noise in a principled way which will allow us to control its characteristics and to compute  the quantities highlighted in  previous sections.

To do so, we will start from a (nearly) unambiguous data set, namely, MNIST \citep{Lecun1998} for which the top-1 error rate is very low (less than $1\%$) even when using simple models.
Note moreover that this data set is balanced: every class has the same number of samples.
We then group together the classes such that, if a sample's label is in a certain group, we regenerate a new label by randomly sampling a label of that group.
This is equivalent to introducing a confusion matrix from which we regenerate the labels using $\prcond{ Y_\text{new} = k }{ Y_\text{old} = k' }$ with the values of the new confusion matrix.

\begin{figure}
  \centering
  \includegraphics[width=\textwidth]{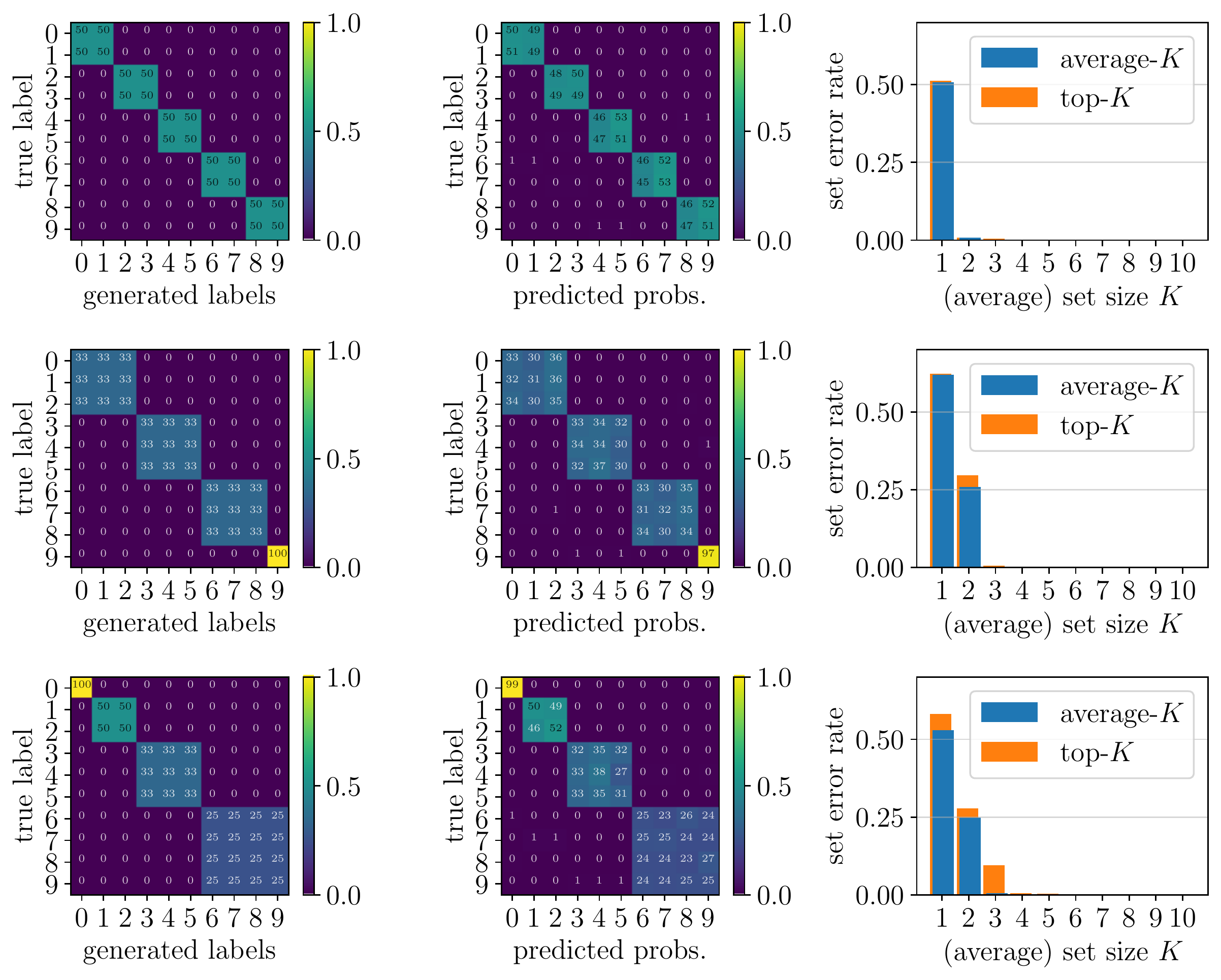}
  \caption{
    Influence of ambiguity injection via label noise.
    The first column shows the confusion matrix used to generate new labels: in the first  line, neighbor classes are fused in groups of two; in the second line, groups of size 3; in the last line,  the groups have different sizes.
    The first two lines show homogeneous  label noise injection while the last one shows heterogeneous label noise injection.
    The second column shows the predicted mean probabilities for samples of each class after the models are learned: the ambiguity is pretty well understood and estimated by the models.
    The last column shows the relative gain (orange) of average-$K$ over top-$K$ in each case.
    When the ambiguity is homogeneous, average-$K$ has little advantage over top-$K$.
    But when it is heterogeneous, the gain can be substantial.
  }
  \label{fig:mnist-label-noise-injection-experiments}
\end{figure}

The results appear in \Cref{fig:mnist-label-noise-injection-experiments}.
In the first  line, the classes are gathered into balanced groups of size 2 and in the second line balanced groups of size 3.
These cases are similar to the experiments carried out by \citet{Berrada2018} for top-$K$ classification.
As one can see, the ambiguity introduced here is very homogeneous and the average-$K$ classifier yields only a marginal gain over the top-$K$ classifier.

On the other hand, in the last line, we study injecting heterogeneous label noise by gathering classes into groups of different sizes.
As one can see, in this case, we can have a noticeable gap between top-$K$ and average-$K$ classifiers.
In particular, for $K = 3$, the error rate of the average-$3$ strategy is zero whereas the top-3 classifier has an error rate around 10\%.

These experiments show that, in general, when the confusion among classes is homogeneous, as measured by confusion matrices, top-$K$ is optimal or very close to average-$K$.
When the confusion is heterogeneous, average-$K$ yields a much lower error rate than top-$K$.
In any case, average-$K$ is always at least as good as top-$K$.
These empirical results are thus in line with the theoretical results of \Cref{sec:top-k-optimality}.

\subsubsection{Input Image Degradation Analysis}

We now consider experiments where noise is added to the input images.
In this case, it is less clear what type of or how much ambiguity we are introducing.
The aim of this section is to measure  what happens in this case.

Adding noise on MNIST did not introduce enough ambiguity, the task being too easy.
These experiments are thus carried out on Fashion-MNIST \citep{Xiao2017}.
Here, we use an ensemble of simple convolutional neural networks whose error rate on the original data set is around $6\%$.
The original ambiguity is thus higher than on MNIST but still not very high.
In order to inject ambiguity, we downsampled  all the images from all classes by the same amount. Some resulting sample images are shown in \Cref{fig:degraded-fashion-mnist-samples}.

\begin{figure}
  \centering
  \foreach \size in {28x28, 7x7, 4x4, 2x2} {
    \begin{subfigure}{.5\textwidth}
      \includegraphics[width=\textwidth]{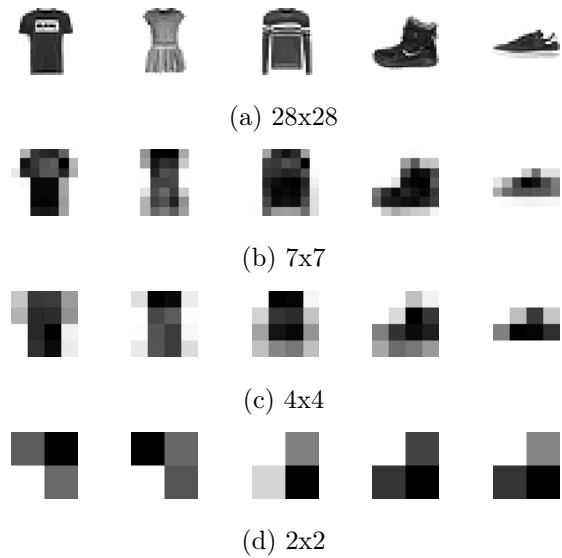}
      \caption{\size}
    \end{subfigure}
    ~\\
  }
  \vspace{-.15cm}

  \caption{
    Fashion-MNIST with various amounts of downsampling.
    Each column corresponds to a single image taken from five different classes.
    From left to right: T-shirt, dress, pull-over, ankle boot and sneaker.
  }
  \label{fig:degraded-fashion-mnist-samples}
\end{figure}

As expected, the greater the noise, the more ambiguous the images as shown in the global increase of error rates displayed in \Cref{fig:error-rate-avg-set-size-curves-degraded-fashion-mnist}.
One might expect that the introduced ambiguity is homogeneous because the noise is applied uniformly on all the images.
However, \Cref{fig:regret-degraded-fashion-mnist} shows that the gap between top-$K$ and average-$K$ strategies is quite wide.
This implies that the ambiguity is in fact heterogeneous.
As can be seen for the samples from \Cref{fig:degraded-fashion-mnist-samples}, some classes are actually still recognizable even though the noise level is high, \eg T-shirts (first column) and sneakers (last column), still are not confused.
On the other hand, some classes become harder to distinguish, such as different classes of shoes, \eg boots and sneakers in the two last columns.

Thus, in general, adding noise to the input will render the task heterogeneously ambiguous.
This experiment increases the benefits of average-$K$ in the presence of input noise and with increased ambiguity.
\Cref{fig:error-rate-avg-set-size-curves-degraded-fashion-mnist} shows how much average-$K$ can reduce the error rate.

\begin{figure}
  \begin{subfigure}[b]{.5\textwidth}
    \centering
    \includegraphics[width=\textwidth]{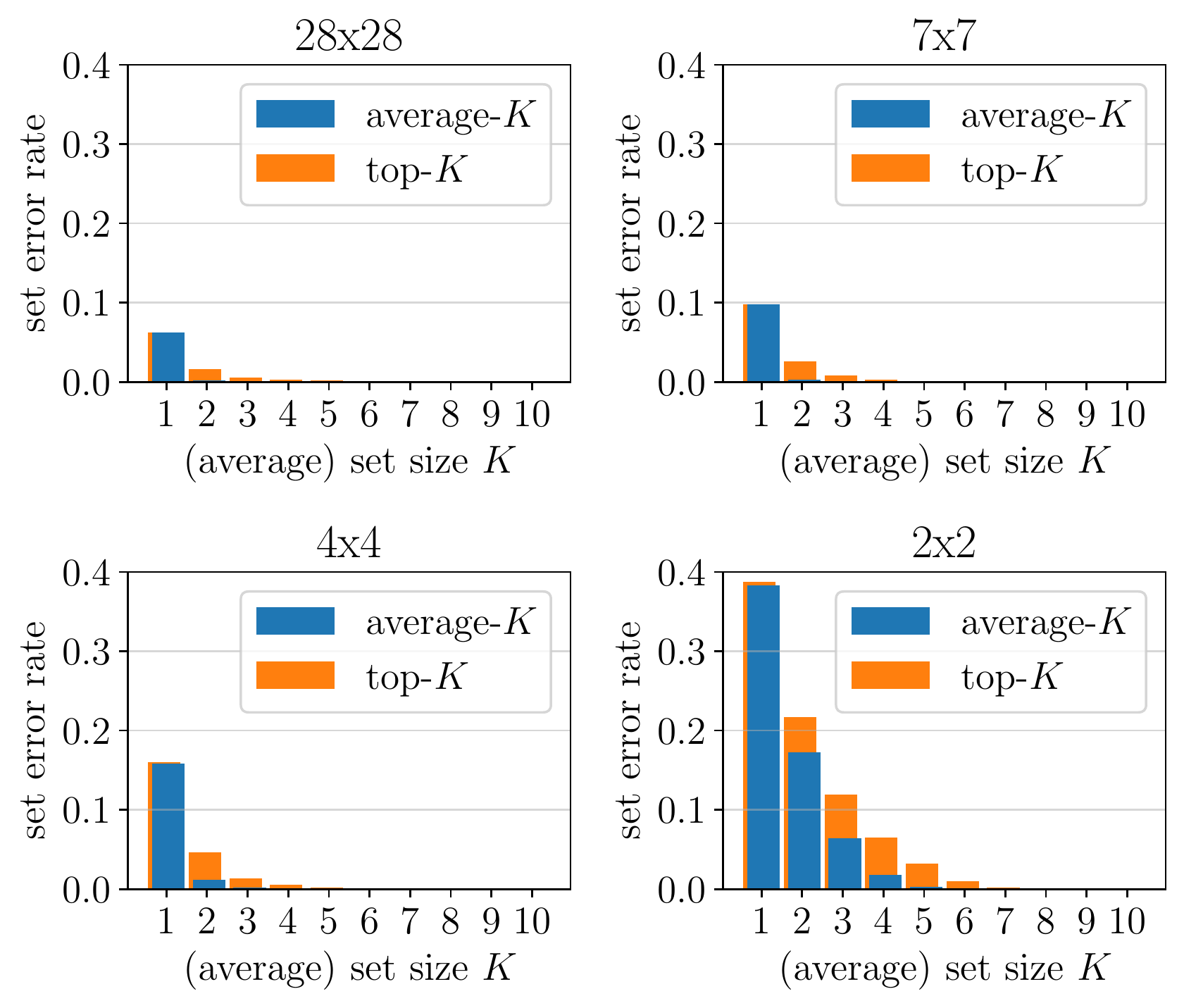}
    \caption{
      Error rate / average set size curves.
    }
    \label{fig:error-rate-avg-set-size-curves-degraded-fashion-mnist}
  \end{subfigure}
  \hfill
  \begin{subfigure}[b]{.45\textwidth}
    \centering
    \input{plots/experiments/controlled_experiments/fashion_mnist_downsampled_adaptive_gain_plot.pgf}
    \caption{
      Adaptive gain curves.
    }
    \label{fig:regret-degraded-fashion-mnist}
  \end{subfigure}
  
  \caption{
    Impact of image downsampling in Fashion-MNIST on top-$K$ and average-$K$ classifiers.
    The greater the ambiguity, the larger the adaptive gain.
  }
\end{figure}

\subsection{Experiments on Unaltered Real-world Data Sets}

We now perform experiments on real data sets in order to see if the previous comments on the usefulness of average-$K$ apply to practical tasks.

\subsubsection{ImageNet}

\dataset{ImageNet} \citep{Russakovsky2015} is a natural choice of data set for our demonstration.
Indeed, as explained in the previously cited article, \dataset{ImageNet} is known to have several objects per image as it is used for different learning tasks including classification, object detection and localization.
The class retained for an image is not necessarily the biggest object in the scene nor the most numerous.
It only has to appear in the scene.
We refer the reader to \citet{Russakovsky2015} for more details on the way the data set was collected and annotated.
For these reasons, in the classification task, the main metric considered is top-5 error rate.
However, it is known that images do not contain exactly the same number of objects: \dataset{ImageNet} is thus a natural candidate for average-$K$.\footnote{Note that the choice of $K=5$ is not discussed in the previous paper, in particular, the average number of objects per image is not given and is likely to be different from 5.}

\begin{figure}[t]
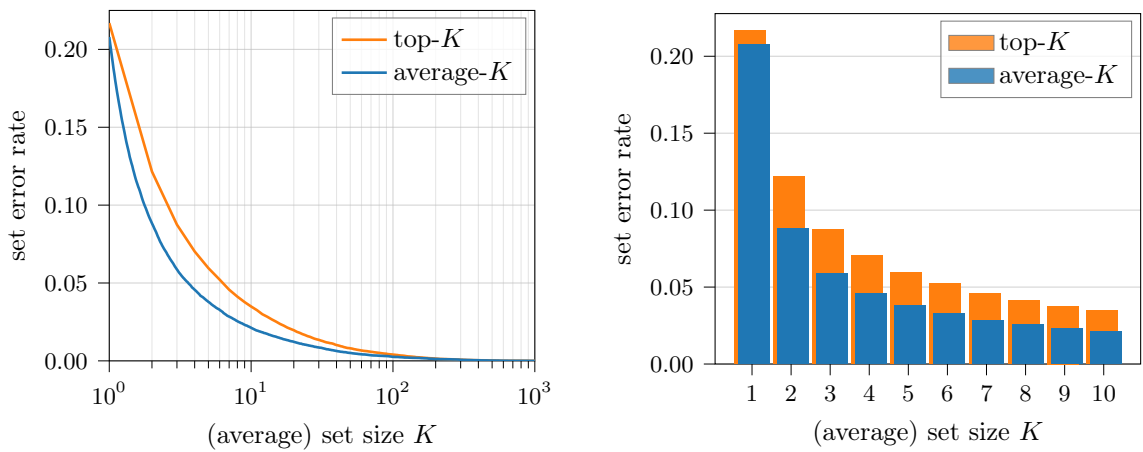

  \begin{subfigure}{.475\textwidth}
    \centering
    \input{plots/experiments/real_world_datasets/imagenet_error_rate_average_set_size_plot.pgf}%
  \end{subfigure}
  \hfill
  \begin{subfigure}{.475\textwidth}
    \centering
    \input{plots/experiments/real_world_datasets/imagenet_error_rate_reduction_bar_plot.pgf}%
  \end{subfigure}
  \caption{
    \dataset{ImageNet} top-$K$ and average-$K$ error rates for all values of $K$ computed on the official validation set.
    \dataset{ImageNet}'s main metric is the  top-5 error rate.
    For $K=5$,  average-5 reduces the error rate by ($\approx 37\%$) compared with top-$5$, as indicated by the additional error shown in orange.
  }
  \label{fig:imagenet-set-error-rates}
\end{figure}

\begin{figure}[t]
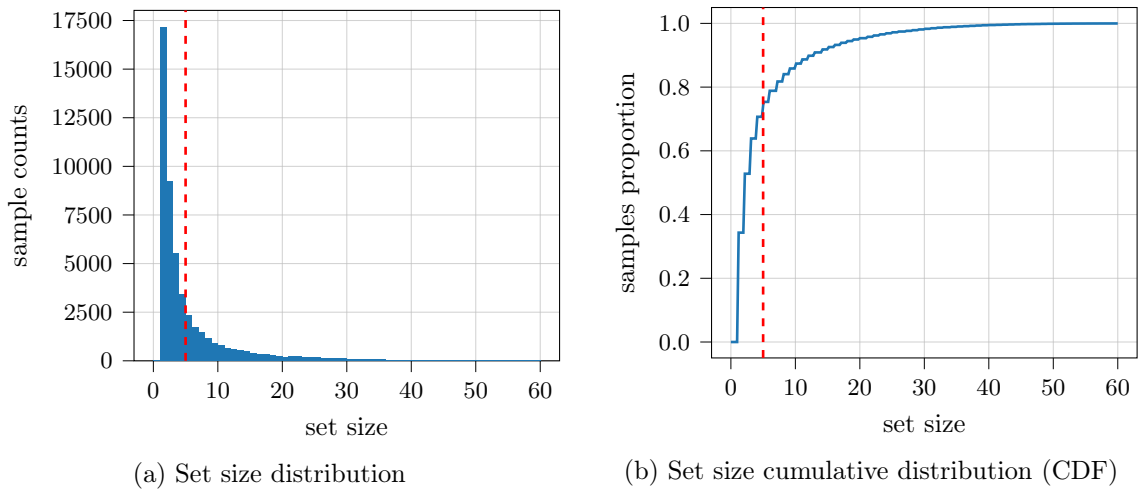

  \begin{subfigure}{.475\textwidth}
    \centering
    \input{plots/experiments/real_world_datasets/imagenet_set_sizes_distribution.pgf}%
    \caption{Set size distribution}
  \end{subfigure}
  \hfill
  \begin{subfigure}{.475\textwidth}
    \centering
    \input{plots/experiments/real_world_datasets/imagenet_set_sizes_distribution_cdf.pgf}%
    \caption{Set size cumulative distribution (CDF)}
  \end{subfigure}
  \caption{
    \dataset{ImageNet} set size distribution for the estimated adaptive top-5 classifier (on the validation set).
    As shown, the distribution of set sizes for average-$5$ does not peak at 5 (red dashed vertical line) but varies over a wide range of different set sizes.
    For most images, the classifier requires only a small set of classes  but some samples need a large set.
  }
  \label{fig:imagenet-set-sizes-distribution}
\end{figure}

For this experiment, we used an off-the-shelf already trained ResNet-152 model from PyTorch model zoo \citep{Paszke2019}.
The results on the official validation set (the test set is private) are shown in \Cref{fig:imagenet-set-error-rates}.
Average-$K$ greatly reduces the error rate compared to top-$K$.
In particular, for $K=5$, the official performance metric, the relative reduction is  37\%.
Average-$K$ is thus particularly well suited for this scenario.
To further analyze the difference between the two predictors on this data set, the distribution of the sizes of sets predicted by average-$K$ are given in \Cref{fig:imagenet-set-sizes-distribution}.
Interestingly, the distribution does not have a mode around 5 but is strictly decreasing.
Most of the distribution is concentrated at values below 5: 75\% of the sets have a size lower than 5.
Note that the distribution has a rather long tail: $5\%$ of the sets have a size larger than 20 and the maximum set size is 78.

\subsubsection{Fine-grained Visual Classification Data Sets}

\begin{figure}
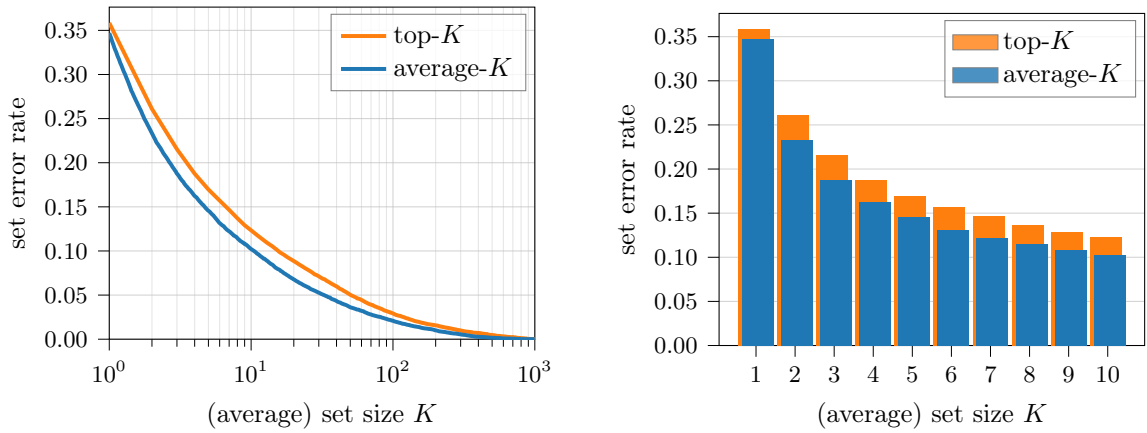
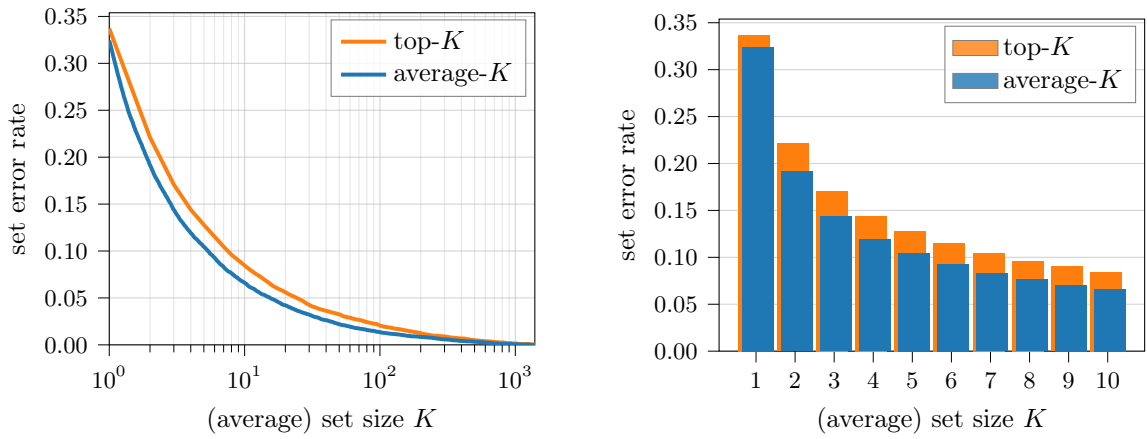
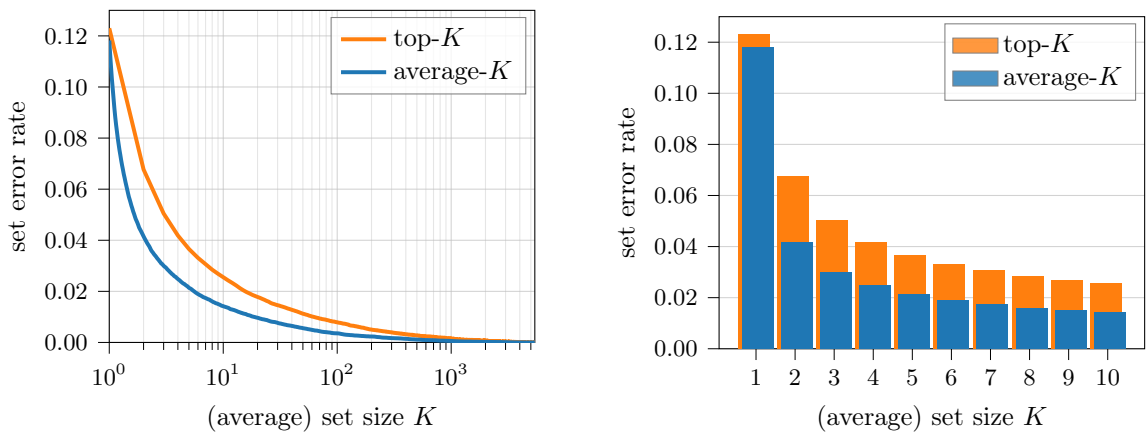

  \centering

  \begin{subfigure}{\textwidth}
    \centering
    \input{plots/experiments/real_world_datasets/plantclef2015_error_rate_average_set_size_plot.pgf}%
    \hfill
    \input{plots/experiments/real_world_datasets/plantclef2015_error_rate_reduction_bar_plot.pgf}%
    \caption{PlantCLEF2015}
  \end{subfigure}
  
  \vspace{.4cm}
  
  \begin{subfigure}{\textwidth}
    \centering
    \input{plots/experiments/real_world_datasets/fgvc5_fungi_error_rate_average_set_size_plot.pgf}%
    \hfill
    \input{plots/experiments/real_world_datasets/fgvc5_fungi_error_rate_reduction_bar_plot.pgf}%
    \caption{FGVC5-Fungi}
  \end{subfigure}
  
  \vspace{.4cm}
  
  \begin{subfigure}{\textwidth}
    \centering
    \input{plots/experiments/real_world_datasets/fgvc6_butterfliesmoths_error_rate_average_set_size_plot.pgf}%
    \hfill
    \input{plots/experiments/real_world_datasets/fgvc6_butterfliesmoths_error_rate_reduction_bar_plot.pgf}%
    \caption{FGVC6-ButterfliesMoths}
  \end{subfigure}

  \caption{
    For all fine-grained visual classification data sets, top-$K$ suffers a greater error rate than average-$K$, as shown in orange.
  }
  \label{fig:fgvc-datasets-error-rate-results}
\end{figure}

In this section, we study the performance of the average-$K$ strategy on fine-grained visual classification (FGVC) tasks.
While top-$K$ is a classic metric in this context, such tasks are likely to benefit more from average-$K$.
Unlike \dataset{ImageNet} where the ambiguity arises from the fact that there are several objects in the image, in FGVC data sets, usually, each image has only a single object  but the classes can resemble one another closely, so it can be hard to distinguish them from the given image.
We carry out experiments on three large-scale life science data sets:
\begin{itemize}
\item \dataset{PlantCLEF2015}: plant species recognition \citep{Goeau2015},
\item \dataset{FGVC5-Fungi}: mushroom species recognition,\footnote{\url{https://sites.google.com/view/fgvc5/competitions/fgvcx/fungi}} and
\item \dataset{FGVC6-ButterfliesMoths}: butterfly and moth species recognition.\footnote{\url{https://sites.google.com/view/fgvc6/competitions/butterflies-moths-2019}}
\end{itemize}
\Cref{tab:fgvc-datasets-characteristics} shows the characteristics of these data sets.
In particular, they require being able to discriminate among many different species (from 1000 to over 5000).
Note that the official metric of both \dataset{FGVC5-Fungi} and \dataset{FGVC6-ButterfliesMoths} is the top-3 error rate.

\begin{table}[t]
  \centering
  \begin{tabular}{lcc}
  \toprule
  \textbf{Name} & \textbf{\# samples} & \textbf{\# classes} \\
  \midrule
  PlantCLEF2015 & 113,205 & 1,000 \\
  FGVC5-Fungi & 85,578 & 1,394 \\
  FGVC6-ButterfliesMoths & 473,438 & 5,419 \\
  \bottomrule
\end{tabular}

  \caption{Fine-grained visual classification (FGVC) data set characteristics.}
  \label{tab:fgvc-datasets-characteristics}
\end{table}

The results are displayed in \Cref{fig:fgvc-datasets-error-rate-results}.
The relative improvement of average-$K$ depends on the data set.
In particular, the gain is high for \dataset{FGVC6-ButterfliesMoths}, yielding around a $41\%$ error rate reduction for $K=3$, while being lower but still noticeable for \dataset{PlantCLEF2015} and \dataset{FGVC5-Fungi}, both around $16\%$ for $K=3$.
Although the improvement is dependent on the data set, in all cases, using average-$K$ reduces  the error rate compared to top-$K$, sometimes appreciably.

\section{Conclusion and Future Work}

In this paper, we have studied the possible benefits of adaptively constructing sets of classes in classification problems with ambiguity. The classic approach, top-$K$, always generates the top $K$ most likely classes. Our adaptive generalization,  average-$K$, constructs sets of various sizes depending on the degree of ambiguity of the input, but keeping an overall average budget of $K$ classes per input.

At a theoretical level, we   characterized and quantified the nature of ambiguity in a data set that would cause average-$K$ to enjoy a lower error rate than top-$K$ in the infinite sample regime.
We then provided consistent procedures to estimate top-$K$ and average-$K$ classifiers.
Empirically, we  carried out experiments on real-world data sets to test the usefulness of average-$K$ in practice.  Our experiments show that there is adaptive gain on both real and synthetic data sets, even when there is little training data.

We have focused our theoretical analysis on the infinite sample regime in order to study how average-$K$ classifiers handle task ambiguity.
The natural next theoretical step would be to extend this analysis to the finite sample regime to understand how such set-valued classifiers behave in the presence of model uncertainty.
Indeed, even in the absence of task ambiguity, in which case average-$K$ classification is asymptotically useless, adaptive set-valued classification approaches might still  lower the error rate when training data is limited.

Another related research direction is to derive specialized losses which  explicitly target average-$K$ classification.
Such losses might lead to  better average-$K$ classifiers when there is little data.
In particular, it would be interesting to see if the gaps observed between average-$K$ and top-$K$ in the experiments of \Cref{sec:experiments} could be broadened using such loss measures.

Finally, we have focused on a specific formulation of set-valued classification.
However, other formulations have been proposed in the literature (see \Cref{sec:related-work}).
Extending this analysis to them would allow a better  understanding of when to use which set-valued classification framework.

\acks{%
  The authors are grateful to the OPAL infrastructure from Université Côte d'Azur for providing resources and support.
  DS has been partly supported as an INRIA International Chair, NYU WIRELESS, U.S. National Science Foundation grants 1934388, 1840761, and 1339362.
}

\newpage
\appendix

\section{Python Code for Average-$K$ Classifier Computation}
\label{sec:python-code}

The following listing provides a Python implementation of \Cref{alg:average-K-error-rate-computation} used for the experiments.
The predicted sets are represented as a two-dimensional boolean array of size $N \times C$ where $N$ is the number of samples and $C$ is the number of classes.

\inputminted[
frame=lines,
framesep=2mm,
fontsize=\footnotesize,
linenos
]
{python}{listings/code.py}

\section{Complementary Experiments}
\label{app:complementary-experiments}

In this section, we provide complementary experiments to the ones of \Cref{sec:experiments}.
They essentially consist of similar experiments but carried out on different data sets to highlight the generalization of the conclusion drawn in the main text to other data sets.

\subsection{Training Curves}
\label{app:complementary-experiments-learning-curves}

\Cref{fig:learning-curves-plantclef2015,fig:learning-curves-fgvc5_fungi,fig:learning-curves-fgvc6_butterflies_moths} show the training curves (similarly to \Cref{fig:learning-curves}) for the models trained respectively on \dataset{PlantCLEF2015}, \dataset{FGVC5-Fungi} and \dataset{FGVC6-ButterfliesMoths}.
Note that, in these cases, a decay of the learning rate is performed midway during training.

\newcommand{
  \begin{figure}[p]
    \hfill
    \begin{subfigure}{.475\textwidth}
      \centering
      \input{plots/experiments/estimation_procedures/learning_curves__loss.pgf}%
      \caption{Negative log-likelihood (NLL)}
    \end{subfigure}
    \hfill
    \begin{subfigure}{.475\textwidth}
      \centering
      \input{plots/experiments/estimation_procedures/learning_curves__top1_error.pgf}%
      \caption{Top-1 error}
    \end{subfigure}
    \hfill
    \\
    \vspace{.3cm}
    \\
    \hfill
    \begin{subfigure}{.475\textwidth}
      \centering
      \input{plots/experiments/estimation_procedures/learning_curves__mean_top_k_error.pgf}%
      \caption{Mean top-$K$ error}
    \end{subfigure}
    \hfill
    \begin{subfigure}{.475\textwidth}
      \centering
      \input{plots/experiments/estimation_procedures/learning_curves__mean_average_k_error.pgf}%
      \caption{Mean average-$K$ error}
    \end{subfigure}
    \hfill
    \caption{}
    \label{fig:learning-curves-}
  \end{figure}
}[2]{
  \begin{figure}[p]
    \hfill
    \begin{subfigure}{.475\textwidth}
      \centering
      \input{plots/experiments/estimation_procedures/learning_curves_#1_loss.pgf}%
      \caption{Negative log-likelihood (NLL)}
    \end{subfigure}
    \hfill
    \begin{subfigure}{.475\textwidth}
      \centering
      \input{plots/experiments/estimation_procedures/learning_curves_#1_top1_error.pgf}%
      \caption{Top-1 error}
    \end{subfigure}
    \hfill
    \\
    \vspace{.3cm}
    \\
    \hfill
    \begin{subfigure}{.475\textwidth}
      \centering
      \input{plots/experiments/estimation_procedures/learning_curves_#1_mean_top_k_error.pgf}%
      \caption{Mean top-$K$ error}
    \end{subfigure}
    \hfill
    \begin{subfigure}{.475\textwidth}
      \centering
      \input{plots/experiments/estimation_procedures/learning_curves_#1_mean_average_k_error.pgf}%
      \caption{Mean average-$K$ error}
    \end{subfigure}
    \hfill
    \caption{#2}
    \label{fig:learning-curves-#1}
  \end{figure}
}

  \begin{figure}[p]
    \hfill
    \begin{subfigure}{.475\textwidth}
      \centering
      \input{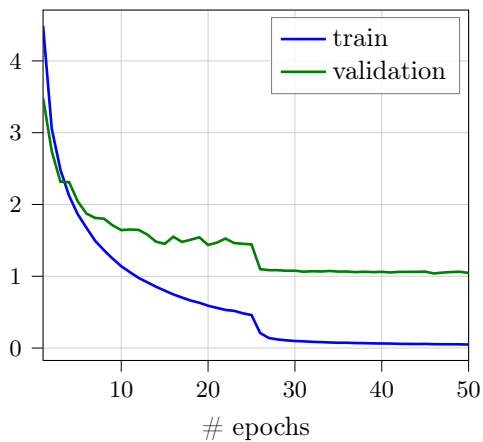}%
      \caption{Negative log-likelihood (NLL)}
    \end{subfigure}
    \hfill
    \begin{subfigure}{.475\textwidth}
      \centering
      \input{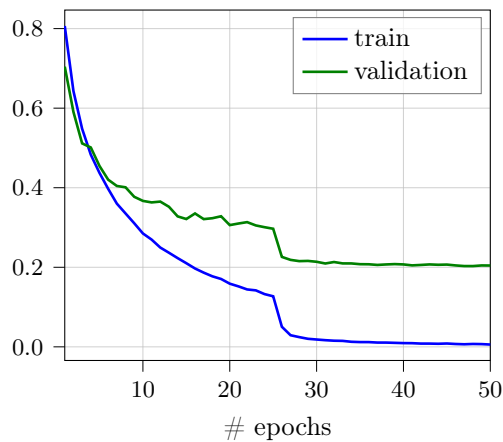}%
      \caption{Top-1 error}
    \end{subfigure}
    \hfill
    \\
    \vspace{.3cm}
    \\
    \hfill
    \begin{subfigure}{.475\textwidth}
      \centering
      \input{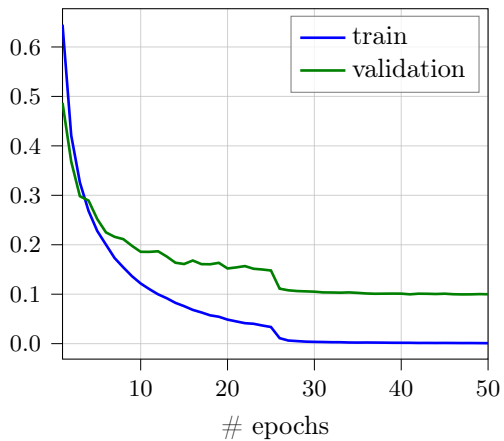}%
      \caption{Mean top-$K$ error}
    \end{subfigure}
    \hfill
    \begin{subfigure}{.475\textwidth}
      \centering
      \input{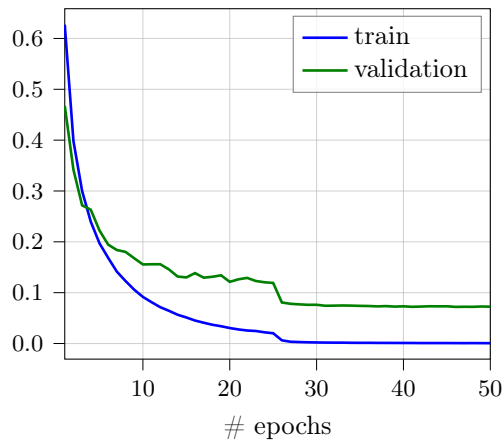}%
      \caption{Mean average-$K$ error}
    \end{subfigure}
    \hfill
    \caption{
  Evolution of the different metrics through the training process of a neural network on \dataset{PlantCLEF2015}.
  Mean top-$K$ and mean average-$K$ are calculated based on $K$ values from 1 to 10 (calculated uniformly for each value of $K$).
  During training, we fine-tune the pre-trained neural network using negative log-likelihood (NLL) on the training set.
  This shows that standard transfer learning methods work well for both top-$K$ and average-$K$.
}
    \label{fig:learning-curves-plantclef2015}
  \end{figure}

  \begin{figure}[p]
    \hfill
    \begin{subfigure}{.475\textwidth}
      \centering
      \input{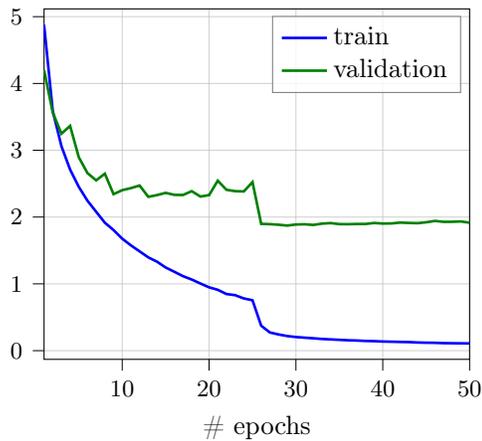}%
      \caption{Negative log-likelihood (NLL)}
    \end{subfigure}
    \hfill
    \begin{subfigure}{.475\textwidth}
      \centering
      \input{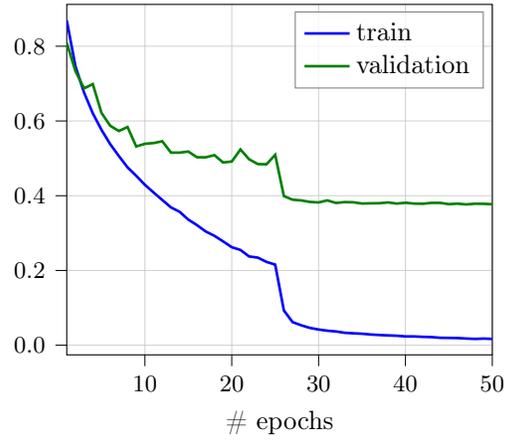}%
      \caption{Top-1 error}
    \end{subfigure}
    \hfill
    \\
    \vspace{.3cm}
    \\
    \hfill
    \begin{subfigure}{.475\textwidth}
      \centering
      \input{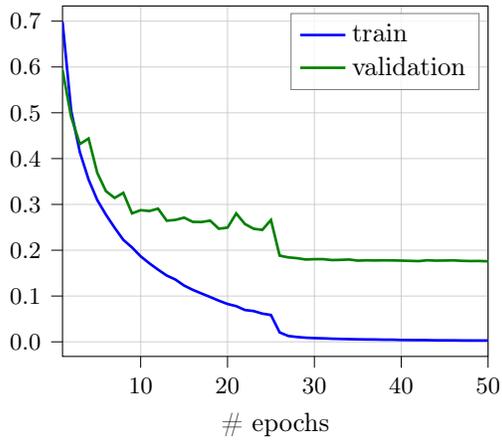}%
      \caption{Mean top-$K$ error}
    \end{subfigure}
    \hfill
    \begin{subfigure}{.475\textwidth}
      \centering
      \input{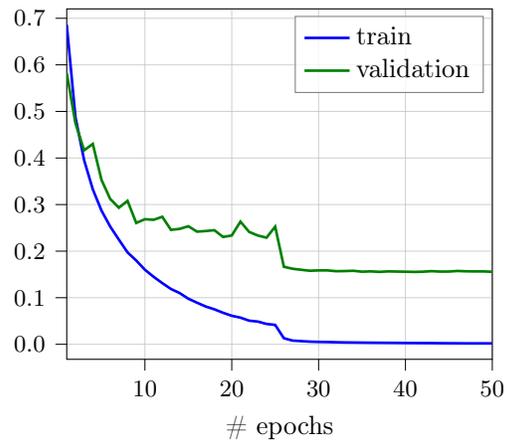}%
      \caption{Mean average-$K$ error}
    \end{subfigure}
    \hfill
    \caption{
  Evolution of the different metrics through the training process of a neural network on \dataset{FGVC5-Fungi}.
  Mean top-$K$ and mean average-$K$ are calculated based on $K$ values from 1 to 10 (calculated uniformly for each value of $K$).
  During training, as for \dataset{PlantCLEF2015}, we fine-tune the pre-trained neural network using negative log-likelihood (NLL) on the training set.
  This is further evidence that standard transfer learning methods work well for both top-$K$ and average-$K$.
}
    \label{fig:learning-curves-fgvc5_fungi}
  \end{figure}

  \begin{figure}[p]
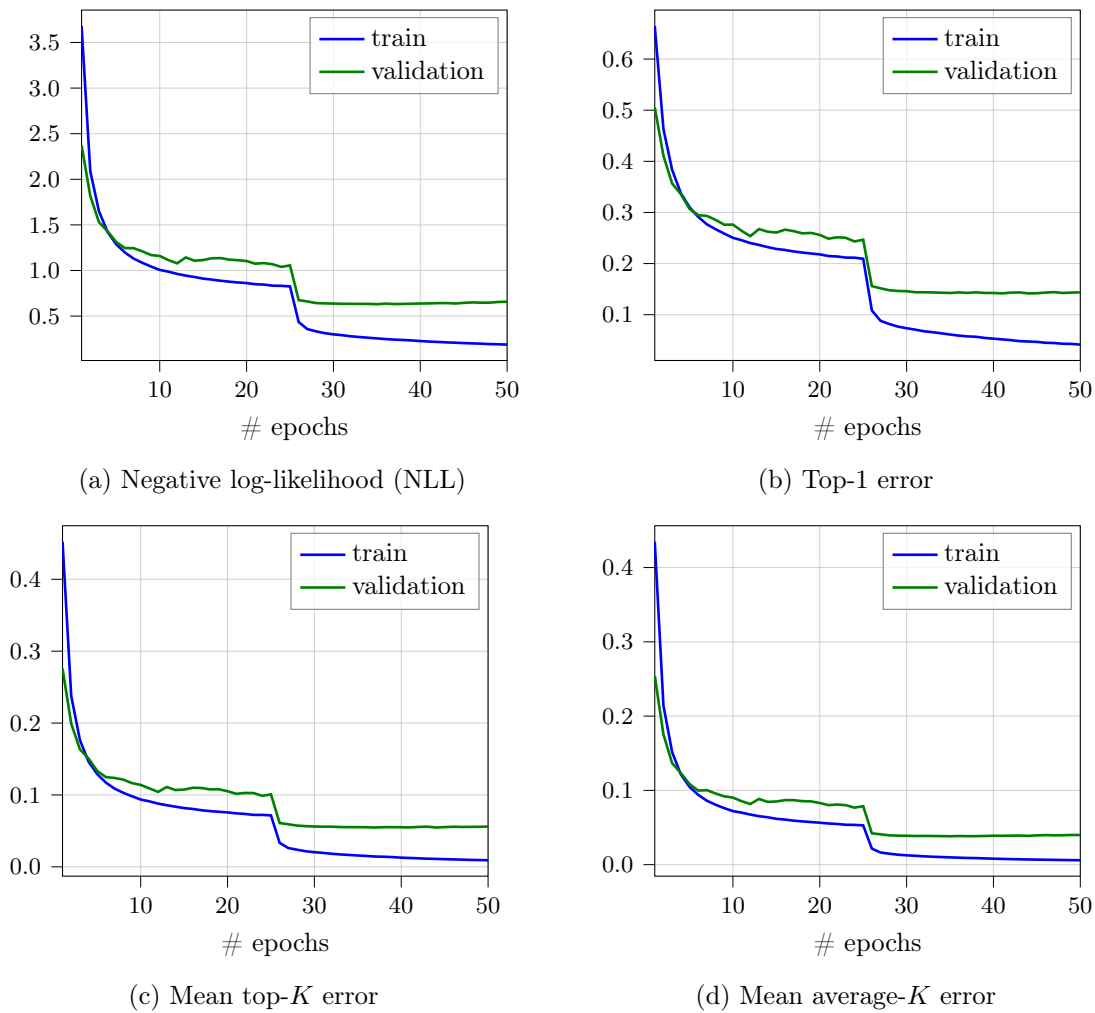

    \hfill
    \begin{subfigure}{.475\textwidth}
      \centering
      \input{plots/experiments/estimation_procedures/learning_curves_fgvc6_butterflies_moths_loss.pgf}%
      \caption{Negative log-likelihood (NLL)}
    \end{subfigure}
    \hfill
    \begin{subfigure}{.475\textwidth}
      \centering
      \input{plots/experiments/estimation_procedures/learning_curves_fgvc6_butterflies_moths_top1_error.pgf}%
      \caption{Top-1 error}
    \end{subfigure}
    \hfill
    \\
    \vspace{.3cm}
    \\
    \hfill
    \begin{subfigure}{.475\textwidth}
      \centering
      \input{plots/experiments/estimation_procedures/learning_curves_fgvc6_butterflies_moths_mean_top_k_error.pgf}%
      \caption{Mean top-$K$ error}
    \end{subfigure}
    \hfill
    \begin{subfigure}{.475\textwidth}
      \centering
      \input{plots/experiments/estimation_procedures/learning_curves_fgvc6_butterflies_moths_mean_average_k_error.pgf}%
      \caption{Mean average-$K$ error}
    \end{subfigure}
    \hfill
    \caption{
  Evolution of the different metrics through the training process of a neural network on \dataset{FGVC6-ButterfliesMoths}.
  Mean top-$K$ and mean average-$K$ are calculated based on $K$ values from 1 to 10 (calculated uniformly for each value of $K$).
  During training, as for \dataset{PlantCLEF2015}, we fine-tune the pre-trained neural network using negative log-likelihood (NLL) on the training set.
  Once again, standard transfer learning methods work well for both top-$K$ and average-$K$.
}
    \label{fig:learning-curves-fgvc6_butterflies_moths}
  \end{figure}

\newpage
\subsection{Impact of Model Ensembling}
\label{app:complementary-experiments-model-ensembling}

\Cref{fig:ensemble-impact-appendix} shows the impact of model ensembling (similarly to \Cref{fig:ensemble-impact}) for models trained respectively on \dataset{PlantCLEF2015}, \dataset{FGVC5-Fungi} and \dataset{FGVC6-ButterfliesMoths}.

\begin{figure}[h]
  \centering
  \begin{subfigure}{\textwidth}
    \input{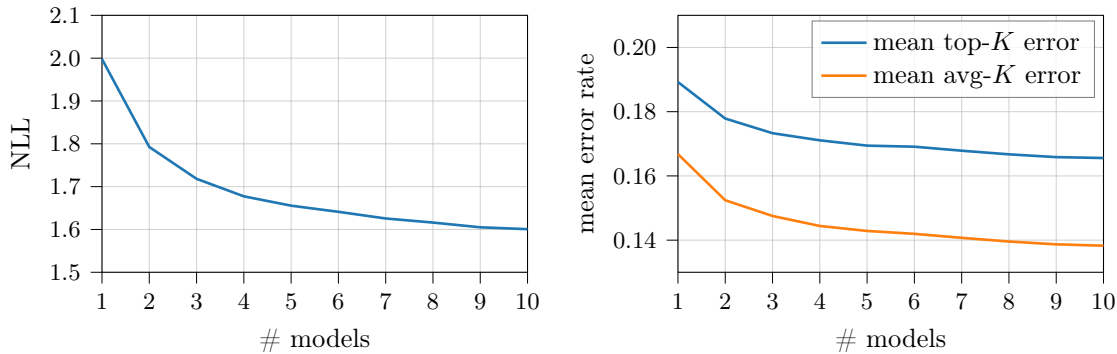}%
    \caption{\dataset{PlantCLEF2015}}
  \end{subfigure}

  \vspace{.5cm}
  
  \begin{subfigure}{\textwidth}
    \input{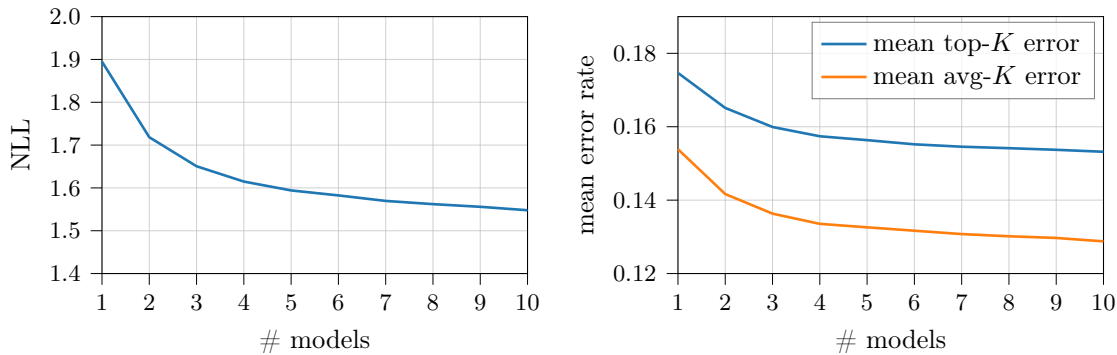}%
    \caption{\dataset{FGVC5-Fungi}}
  \end{subfigure}

  \vspace{.5cm}
  
  \begin{subfigure}{\textwidth}
    \input{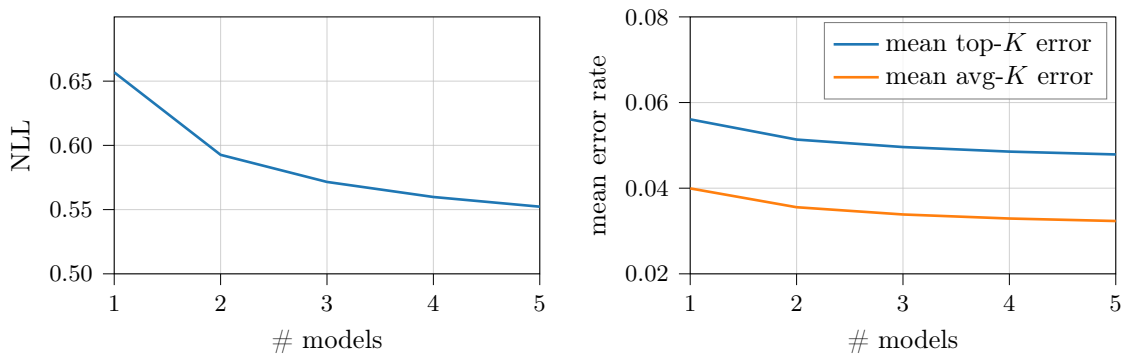}%
    \caption{\dataset{FGVC6-ButterfliesMoths}}
  \end{subfigure}

  \caption{
    Impact of the number of models present in the ensemble. 
    More models reduce both the negative log likelihood (NLL) and the error rate.
    The mean top-$K$ error is the averaged top-$K$ error rate when $K$ varies from 1 to 10.
    Similarly for mean average-$K$ error.
  }
  \label{fig:ensemble-impact-appendix}
\end{figure}

\newpage
\bibliography{bibliography.bib}

\end{document}